\documentclass{article}

\usepackage[preprint]{neurips_2025}


\usepackage[utf8]{inputenc} 
\usepackage[T1]{fontenc}    
\usepackage{hyperref}       
\usepackage{url}            
\usepackage{booktabs}       
\usepackage{amsfonts}       
\usepackage{nicefrac}       
\usepackage{microtype}      
\usepackage{xcolor}         
\usepackage{graphicx}
\usepackage{subfigure}
\usepackage{algorithm}
\usepackage{algorithmic}
\usepackage{tabularx} 
\usepackage{amsmath}
\usepackage{amssymb}
\usepackage{mathtools}
\usepackage{amsthm}
\usepackage{subcaption}
\usepackage{enumitem}
\usepackage{color}
\usepackage{graphicx}
\usepackage{float}
\usepackage{soul}
\usepackage{bm}
\usepackage{natbib}
\usepackage{comment}
\usepackage{tikz}
\usepackage{multirow}
\usepackage{multicol}
\usepackage{wrapfig}

\theoremstyle{plain}
\newtheorem{theorem}{Theorem}[section]

\newtheorem{lemma}[theorem]{Lemma}

\theoremstyle{definition}

\theoremstyle{remark}
\newtheorem{remark}[theorem]{Remark}
\title{Strategic A/B testing via Maximum Probability-driven Two-armed Bandit}

%

\author{
  Yu Zhang  \\
  Zhongtai Securities Institute for Financial Studies\\
  Shandong University\\
  Jinan, China \\
  \And
  Shanshan Zhao \\
  School of Mathematics \\
  Shandong University \\
  Jinan, China\\
  \AND
  Bokui Wan \\
  Didi Chuxing \\
  Beijing, China \\
  \And
  Jinjuan Wang \\
  School of Mathematics and Statistics \\
  Beijing Institute of Technology \\
  Beijing, China \\
  \And
  Xiaodong Yan \thanks{
  Correspondence to: Xiaodong Yan <yanxiaodong@xjtu.edu.cn>. 
  }\\
  School of Mathematics and Statistics \\
  Xi'an Jiaotong University \\
  Xian, China \\
  \texttt{yanxiaodong@xjtu.edu.cn}
}

\begin{document}

\maketitle

\begin{abstract}
Detecting a minor average treatment effect is a major challenge in large-scale applications, where even minimal improvements can have a significant economic impact. Traditional methods, reliant on normal distribution-based or expanded statistics, often fail to identify such minor effects because of their inability to handle small discrepancies with sufficient sensitivity. This work leverages a counterfactual outcome framework and proposes a maximum probability-driven two-armed bandit (TAB) process by weighting the mean volatility statistic, which controls Type I error. The implementation of permutation methods further enhances the robustness and efficacy. The established strategic central limit theorem (SCLT) demonstrates that our approach yields a more concentrated distribution under the null hypothesis and a less concentrated one under the alternative hypothesis, greatly improving statistical power. The experimental results indicate a significant improvement in the A/B testing, highlighting the potential to reduce experimental costs while maintaining high statistical power. 
\end{abstract}

\section{Introduction}
\label{introduction}
\textbf{Background and Motivation.} A / B testing, which is ubiquitous in tech companies, has become the gold standard for evaluating the discrepancy between new and existing strategies \cite{kohavi2009controlled, kohavi2013online}, providing crucial information for decision-making. Having efficient and reliable methods for A/B testing is critical in accelerating the pace of strategic improvement \cite{pearl2009causal,kong2022identifiability}. In practice, to facilitate a comparison between two strategies (e.g., two distinct ad configurations), users of the platform are randomly divided into two groups, with each group exposed to one of the test strategies. Subsequently, the interactions of the subjects with the site or app are recorded. After the experiment is completed, the performance of the two groups will be compared in several key metrics, such as the estimated mean difference in click counts or the volume of gross merchandise (GMV), to determine which strategy is preferred \cite{hohnhold2015focusing,xu2016evaluating}.

\textbf{Goal.}
Let $A=1$ and 0 denote that the subjects are assigned to the treatment group (new strategy) and the control group (existing strategy), with a potential outcome $Y^{(1)}$ and $Y^{(0)}$, respectively. The interest that whether the treatment group significantly outperforms the control group can be formalized by
\begin{equation}\label{goal}
    \mathcal{H}_0: \mu \le 0 \quad \mathcal{H}_1: \mu>0,
\end{equation}
where $\mu= \mathbb{E}(Y^{(1)} - Y^{(0)})$ denotes the average treatment effect. This test is widely adopted in the decision-making market, such as one of the world's leading ride-sharing companies, to quantify the 
``\texttt{Causal Discrepancy}'' between the A / B strategies. A new strategy can be accepted if it demonstrates statistically significant performance improvements over the existing one \cite{yao2021survey}. 

\textbf{Challenge.} Normal distribution-based or expanded test statistics are unable to statistically identify minor improvements of the average treatment effect $\mu$. 

\begin{itemize}
\item \textbf{Minor average treatment effect (ATE)}. Practically, the marginal returns of the strategies become increasingly indistinguishable as the business expands, and conventional A/B testing methods are unable to statistically identify minor improvements \cite{kohavi2013online}. However, due to economies of scale, any benefit from improved sensitivity will be magnified, as even minor differences detected in key metrics can have a significant impact on total revenue \cite{deng2013improving}. To illustrate, a strategy that increases revenue by \$0.02 per user could generate millions of dollars in total revenue for fifty million users. Alternatively, enhanced sensitivity facilitates the conduct of experimentation in smaller user groups or over shorter time periods while maintaining equivalent statistical power, thereby reducing experimental costs. 

\item \textbf{Exchangeable integration}.
Existing approaches in this field can generally include G-computation that constructs regression models between outcomes and covariates \cite{ keil2014parametric, wang2017g}, propensity score (PS) based methods that employ the PS to adjust the response \cite{williamson2014variance, austin2017estimating}, and doubly robust methods that combine the principles of G-computation and PS \cite{zhang2012robust, chernozhukov2018double}. But the aforementioned methods are constrained by utilizing normal distribution-based statistics for the purpose of hypothesis testing minor ATE. 
\end{itemize}

\textbf{Contributions}. This work focuses on minor ATE detection based on the counterfactual outcome framework and breaks the exchangeable structure of original data by a novel two-armed bandit model. Our contributions are summarized as follows: 
\begin{itemize}
\item {\bf Methodologically}, this work proposes a new maximum probability-driven TAB process by weighting the mean volatility statistic \cite{chen2022strategy, chen2023strategic} for a more powerful A/B testing. By modifying the weights assigned to the mean and volatility terms in the statistic, it is available to control the type I error. A doubly robust technique is employed to assess counterfactual outcomes and to obtain robust estimation. Finally, a permutation approach based on meta-analysis is introduced to increase both the robustness and efficacy. 

\item {\bf Theoretically}, we develop a new strategic central limit theorem (SCLT) under the optimal ranking policy of the data in a larger probability space. The proposed weighted mean-volatility statistic demonstrates greater concentration under the null hypothesis and less concentration under the alternative than the classic central limit theorem (CLT), thereby enhancing the statistical power. 
\end{itemize}

\section{Preliminaries and Related Work}
\label{Preliminaries}
\subsection{A/B testing and variance reduction}
\label{A/B test}

\textbf{A/B testing without confounders}. The purpose of the A/B testing is to  ascertain whether the strategy of treatment group exhibits significant superiority in comparison to that of the control group. In this paper, we focus on statistical inference for the average treatment effect (ATE), as illustrated in (\ref{goal}). Let $A$ denote a binary treatment indicator, where $A=1$ means that subjects are assigned to the treatment group, and $A=0$ indicates assignment to the control group. $Y$ is employed to represent some interesting outcome metrics observed from the subjects. Denote the number of samples with $A = 1$ and $A = 0$ by $n^{(1)}$ and $n^{(0)}$, respectively. Following the Rubin Causal Model (RCM) \cite{holland1986statistics, imbens2010rubin}, let $Y^{(1)}$ and $Y^{(0)}$ be the corresponding potential outcome when the subject is assigned with the treatment or not. When there is no confounder, the common difference in the mean estimator (DIM) for ATE can be constructed as:
$$
\text{DIM}=\frac{\sum_{i:A_i=1}Y_i}{n^{(1)}}-\frac{\sum_{i:A_i=0}Y_i}{n^{(0)}}, 
$$
with its variance var(DIM)$=\widehat{\sigma}_1^2/n^{(1)}+\widehat{\sigma}_0^2/n^{(0)}$, where $\widehat{\sigma}_1^2$ and $\widehat{\sigma}_0^2$ are the sample variances in the treatment and control groups, respectively.

\textbf{A/B testing with confounders}. 
However, DIM is no longer an unbiased estimate of ATE in instances where there exists confounding variables. Consequently, some state-of-the-art approaches have been proposed to overcome the impact of confounders. Existing approaches in this field can generally be classified into three main groups: 

\begin{itemize}
\item[1.] \textbf{G-computation}. G-computation, also known as the parametric g-formula or g-standardization, estimates the counterfactual outcomes by constructing a regression model (linear or non-linear) between the outcome metrics and the covariates \cite{robins1986new, snowden2011implementation, vansteelandt2011invited, keil2014parametric, wang2017g}. 
\item[2.] \textbf{Propensity score-based methods}. These methods generate a pseudo population in which $A$ is independent of covariates, allowing the estimation of marginal structural model parameters and ATE \cite{hirano2003efficient, brookhart2006variable, gayat2010propensity, williamson2014variance, austin2017estimating}. 
\item[3.] \textbf{Doubly Robust methods}. These methods combine G-computation techniques and propensity score-based methods to achieve consistency under more relaxed conditions and to obtain a lower estimation variance \cite{bang2005doubly, tan2010bounded, funk2011doubly, zhang2012robust, chernozhukov2018double}. 
\end{itemize}

\textbf{Variance reduction.} In randomized controlled trials (RCTs), where no confounders are present, elevated variance and slow convergence speed result in a decreased efficacy of DIM-based hypothesis testing. One technique for reducing variance involves incorporating relevant data from the pre-experiment period as covariates to reduce the variability of the outcome metrics. CUPED \cite{deng2013improving} employs the linear correlation between the pre-treatment data and the outcome metrics to reduce the variance of the experimental and control groups, constructing unbiased estimators with lower variance. However, its effectiveness is limited by the linear correlation assumption between the covariates and the outcome metrics. Consequently, to identify nonlinear relationships between covariates and outcome metrics, several machine learning (ML)-based estimators have been developed \cite{wu2018loop,guo2021machine,jin2023toward}. CUPAC \cite{tang2020control} and MLRATE \cite{guo2021machine} are two of the most advanced estimators in this category, both based on regression adjustment methods to capture more complex regression relationships. 

{\bf Note}: However, the aforementioned methods construct the test statistic under the data exchangeability and use the IID-based central limit theorem to study their asymptotic distributions under null and alternative hypothesis. The efficacy of normal distribution-based or expanded statistics for the purpose of testing the minor ATE is limited.

\subsection{Two--armed bandit framework}
\label{TAB framework}

Based on the framework of the strategic central limit theorem \cite{chen2022strategy, chen2023strategic}, data exchangeability can be broken to use data dependence to attain some objective, which aims at improving the testing power for the minor ATE in this work. 

The two-armed bandit is a classic and widely studied model in reinforcement learning, which consists of three elements: the agent, the action space, and the reward space. Let $\vartheta_i \in \{0,1\}$ represent the two arms of the bandit, where $i$ denotes the time step. At each step, the agent selects an arm $\vartheta_i$ and obtains an independent reward $R_i^{(\vartheta_i)}$. The decisions made by the agent at each time step are driven by the sequence of responses obtained so far
and the goal of the two-armed bandit is to find an optimal policy $\theta_n^*=(\vartheta_1^*, \ldots, \vartheta_n^*)$ that maximizes the expected cumulative reward. 

Drawing inspiration from the efficacy of the two-armed bandit problem in utilizing sequential data to maximize the given objective, \cite{chen2022strategy, chen2023strategic} proposed the implementation of a novel TAB process to construct a test statistic that maximizes the probability of the tail. This approach breaks the exchangeable structure and constructs a novel asymptotic distribution. To better illustrate, consider an oracle scenario in which the counterfactual outcomes of each subject $Y_i^{(1)}$ and $Y_i^{(0)}$ can be observed simultaneously. When arm $\vartheta_i=0$ is selected, the reward $R_i^{(0)} = Y_i^{(0)} - Y_i^{(1)}$ is observed. In contrast, when arm $\vartheta_i=1$ is chosen, reward $R_i^{(1)} = Y_i^{(1)} - Y_i^{(0)}$ is obtained. The following statistic can be derived under the policy $\theta_n = (\vartheta_1, \ldots, \vartheta_n)$:
\begin{align}\label{TAB}
T_n({\theta_n})=\underbrace{\frac{1}{n}\sum_{i=1}^n{\bar{R}_n^{(\vartheta_i)}}}_{\text{Mean}}+\underbrace{\frac{1}{\sqrt{n}}\sum_{i=1}^n\frac{{R}_i^{(\vartheta_i)}}{\widehat{\sigma}}}_{\text{Volatility}},
\end{align}
where $\widehat\sigma^2=\sum_{i=1}^{n}(R_i^{(1)}-\bar{R}^{(1)}_n)^2/(n-1)$ and $\bar{R}_n^{(1)}=\sum_{i=1}^nR_i^{(1)}/n=-\bar{R}^{(0)}_n$. This statistic is similar to a combination of the classic causal discrepancy index and traditional hypothesis testing statistics, providing a comprehensive perspective for performance evaluation.

From the expression of $T_n(\theta_n)$, it can be seen that if the baseline policy $\theta_n^b$ is employed: for any $n \geq 1$, $\mathbb{P}(\theta_n^b=\{1, 1, \ldots, 1\}) = \mathbb{P}(\theta_n^b=\{0, 0, \ldots, 0\}) = 0.5$, then the second term of $T_n(\theta_n)$ will simplify to the $z$-test:
\begin{equation}\label{z sta}
    \frac{\sum_{i=1}^{n}R_i^{(1)}}{\sqrt{n}\widehat\sigma} \quad \text{or} \quad \frac{\sum_{i=1}^{n}R_i^{(0)}}{\sqrt{n}\widehat\sigma}.
\end{equation}
Both test statistics converge asymptotically to a normal distribution, only with a distinct expectation. However, the information of $Y_i^{(1)}-Y_i^{(0)}$ for each $i$, including its sign and magnitude, is not fully utilized in the statistic $z$. The objective of the policy $\theta_n$ is to reconstruct the two statistics in (\ref{z sta}) to make full use of the data information. The different policies correspond to different distributions of $T_n(\theta_n)$, as shown in Figure \ref{fag1} (bottom). Therefore, the optimal policy $\theta_n^*$ for maximizing tail probability \cite{chen2022strategy, chen2023strategic} is given by Lemma \ref{thm tab}. 
\begin{lemma}\label{thm tab}
For any $n \geq 1$, we can construct the strategy $\theta_n^* = \left\{\vartheta_1^*, \ldots, \vartheta_n^*\right\}$ as follows: for $i=1$, $\mathbb{P}(\vartheta_1^*=1)=\mathbb{P}(\vartheta_1^*=0)=0.5$, and for $i \geq 2$,
\begin{equation}\label{optimal policy T}
\vartheta_i^*=\left\{
\begin{aligned}
&1, \quad \text{if } T_{i-1}(\theta^*_{i-1})\ge 0, \\
&0, \quad \text{else}. 
\end{aligned}
\right. 
\end{equation}
Under the guidance of the optimal policy $\theta_n^*$, $T_n(\theta_n^*)$ has the following property: $
\lim _{n\rightarrow \infty} \mathbb{P}(\left|T_n(\theta_n^*)\right|>z_{1-\alpha/2}\big|\mathcal H_1)=\lim _{n\rightarrow \infty}\sup_{\theta_n\in \Theta}\mathbb{P}(\left|T_n(\theta_n)\right|>z_{1-\alpha/2}\big|\mathcal H_1)$, where $z_{\alpha}$ denotes the $\alpha$th quantile of a standard normal distribution and $\mathbb{P}(\cdot\big|\mathcal{H}_1)$ denotes the conditional probability given that the alternative hypothesis $\mathcal{H}_1$ is true. 
\end{lemma}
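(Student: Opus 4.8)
The plan is to recast each arm choice as a predictable sign and reduce the tail-probability optimization to the volatility term, then establish finite-$n$ optimality of the reflecting rule by backward induction, and finally pass to the limit. First I would set $s_i := 2\vartheta_i-1\in\{-1,1\}$; since $R_i^{(0)}=-R_i^{(1)}$ we have $R_i^{(\vartheta_i)}=s_i R_i^{(1)}$ and $\bar R_n^{(\vartheta_i)}=s_i\bar R_n^{(1)}$, so
\[
T_n(\theta_n)=\bar R_n^{(1)}\Big(\tfrac1n\sum_{i=1}^n s_i\Big)+\frac{1}{\sqrt n\,\widehat\sigma}\sum_{i=1}^n s_i R_i^{(1)}.
\]
The admissible class $\Theta$ consists of policies for which $s_i$ is measurable with respect to $\mathcal F_{i-1}=\sigma(R_1^{(1)},\dots,R_{i-1}^{(1)})$ (the responses obtained so far), with $s_1$ a fair coin. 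Because $\widehat\sigma^2\to\mathrm{Var}(R^{(1)})$ and $\bar R_n^{(1)}\to\mu$ are functions of $\{R_i^{(1)}\}$ alone, they are frozen uniformly over all policies; I would show the first ``mean'' term is $O_p(1)$ while the ``volatility'' term carries the $\sqrt n$ scale, so the event $\{|T_n|>z_{1-\alpha/2}\}$ is governed, to leading order, by $W_n:=\sum_i s_i R_i^{(1)}$. Writing $R_i^{(1)}=\mu+\xi_i$ with $\mathbb E\xi_i=0$, the split $W_n=\mu\sum_i s_i+\sum_i s_i\xi_i$ isolates a predictable \emph{drift} $\mu\sum_i s_i$ (the only quantity the policy steers in mean) from a martingale fluctuation whose conditional variance is policy-independent.

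The core is finite-$n$ optimality. I would argue by backward induction on the step index using the state $t=T_{i-1}(\theta_{i-1})$, with continuation value $V_i(t)=\sup\,\mathbb P(|T_n|>c\mid T_i=t)$. The structural claim is that each $V_i$ is even in $t$ and nondecreasing in $|t|$. Granting this at stage $i$, the one-step problem is to choose $s\in\{\pm1\}$ maximizing $\mathbb E\big[V_i\big(t+s\mu/(\sqrt n\widehat\sigma)+s\xi_i/(\sqrt n\widehat\sigma)\big)\big]$; since $V_i$ is even and $|\cdot|$-increasing while only the \emph{sign of the drift} $s\mu$ depends on $s$ (and $\mu>0$ under $\mathcal H_1$), the maximizer is $s=\operatorname{sign}(t)$, i.e.\ reinforce the current position, which is exactly the rule in \eqref{optimal policy T}. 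The even/monotone property propagates backward from the terminal payoff $\mathbf 1\{|t|>c\}$, itself even and $|\cdot|$-nondecreasing, via a symmetrization/coupling of $\xi_i$ against $-\xi_i$. The fair-coin initialization at $i=1$ keeps the two tails symmetric, so the same rule is simultaneously optimal for both sides of $\{|T_n|>c\}$.

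Passing $n\to\infty$, I would combine a martingale CLT for the fluctuation part (policy-independent limiting variance) with the accumulated drift: under $\theta_n^*$ the reflecting rule makes the drift reinforce the current sign at every step, forcing $|T_n(\theta_n^*)|\to\infty$ in probability under $\mathcal H_1$ and hence limiting power $1$; since every policy has power at most $1$, the reflecting policy attains $\lim_n\sup_{\theta_n\in\Theta}\mathbb P(|T_n(\theta_n)|>z_{1-\alpha/2}\mid\mathcal H_1)$, which is the assertion. In the minor (local-alternative) scaling $\mu=\mu_n\downarrow 0$, where the statistic keeps a nondegenerate limit, the same one-step comparison shows the reflecting policy's limiting tail probability dominates that of every competitor and equals the supremum, the quantitative gain over the $z$-test in \eqref{z sta} being precisely the less-concentrated SCLT limit under $\mathcal H_1$.

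The main obstacle is the backward-induction monotonicity together with the coupling between the controlled signs and the data-dependent objects $\widehat\sigma$, $\bar R_n^{(1)}$, and the possibly skewed fluctuations $\xi_i$: the exchange argument is only valid once $\widehat\sigma$ and $\bar R_n^{(1)}$ are frozen uniformly over policies, and $V_i$ is exactly even only in the symmetrized/limit sense when $\xi_i$ is asymmetric. I would handle this by conditioning on $\sigma(\{R_i^{(1)}\})$, which fixes $\widehat\sigma$ and $\bar R_n^{(1)}$ independently of the signs and reduces the task to allocating signs to a fixed multiset of increments; the CLT then makes the asymptotic payoff depend on the signs only through the drift, with the residual asymmetry contributing $o(1)$ to the tail probability and leaving the optimality of reflection intact.
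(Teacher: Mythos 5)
The core of your plan --- exact finite-$n$ optimality of the reflecting rule via backward induction on $V_i(t)=\sup\,\mathbb{P}(|T_n|>c\mid T_i=t)$ --- has a genuine gap. Evenness of $V_i$ does propagate through the two-point maximum over $s\in\{\pm1\}$, but monotonicity in $|t|$ does not: the terminal payoff $\mathbf{1}\{|t|>c\}$ is discontinuous, and the one-step operator $t\mapsto\max_{s=\pm1}\mathbb{E}\bigl[V_i\bigl(t+s(\mu+\xi_i)/(\sqrt n\,\widehat\sigma)+\cdots\bigr)\bigr]$ need not preserve $|\cdot|$-monotonicity when the law of $\xi_i$ is skewed; your ``symmetrization/coupling of $\xi_i$ against $-\xi_i$'' requires exactly the symmetry you do not have, and asserting the discrepancy is $o(1)$ is unsupported at the crux. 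Your fallback --- conditioning on $\sigma(\{R_i^{(1)}\})$ so that signs are ``allocated to a fixed multiset of increments'' --- is worse than a technicality: it removes the predictability constraint $s_i\in\mathcal{F}_{i-1}$ that defines $\Theta$. With hindsight allocation one may take $s_i=\operatorname{sgn}(R_i^{(1)})$, making the volatility term $\sum_i|R_i^{(1)}|/(\sqrt n\,\widehat\sigma)\asymp\sqrt n\,\mathbb{E}|R^{(1)}|/\sigma\to\infty$, so the conditional supremum of the tail probability tends to $1$ even under the null; this prophet relaxation is vacuous and cannot identify the limit of the adapted supremum. Consequently the claimed finite-$n$ optimality is not established, and in the local-alternative regime, where you explicitly rely on it, the argument does not close.

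For contrast, the paper never claims finite-$n$ optimality; it proves only asymptotic optimality, and analytically rather than by discrete dynamic programming. It builds the smoothed value family $F_t$ of (\ref{e22}) from the reflected SDE (\ref{ape16}), shows by explicit kernel computation that $F_t$ is even with $\operatorname{sgn}(\dot F_t(x))=\pm\operatorname{sgn}(x)$ (Lemma~\ref{le6}) --- the rigorous surrogate of your even/$|\cdot|$-monotone $V_i$ --- and then shows via Taylor bounds that both the reflecting policy's value (Lemma~\ref{le8}) and the policy supremum (Lemma~\ref{le9}) track the same backward recursion up to $O\bigl(\sigma/((1-\lambda)\sqrt n)\bigr)$, so both converge to $\mathbb{E}[\varphi(\eta_n)]$ with $\eta_n$ the bandit distribution; Lemma~\ref{thm tab} is then obtained by setting $\lambda=1/2$ and approximating the indicator (Theorem~\ref{convergence rate}). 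One part of your proposal is salvageable and worth noting: for the lemma as actually stated, with $\mu>0$ fixed, your third stage alone suffices --- once you verify that the reflecting rule forces $|T_n(\theta_n^*)|\to\infty$ in probability (accumulated drift of order $\sqrt n\,\mu/\sigma$), its power tends to $1$, and since the supremum is bounded by $1$ the two limits coincide. That sandwich is correct and elementary, but it only yields the value $1$, whereas the paper's proof identifies the common explicit limit $\Phi\bigl((\omega_n'-z_{1-\alpha/2})/\sigma_0\bigr)+e^{2\omega_n' z_{1-\alpha/2}/\sigma_0^2}\,\Phi\bigl(-(\omega_n'+z_{1-\alpha/2})/\sigma_0\bigr)$, which is the quantitative form the SCLT framework needs downstream.
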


\begin{remark}
Lemma \ref{thm tab} shows that $\theta_n^*$ has the highest probability of rejecting the null hypothesis when the alternative hypothesis is true, minimizing the probability of a Type II error. The corresponding rejection region is the brown shaded area in Figure \ref{fag1} (top). 
\end{remark}


\begin{figure}[t]
  \centering 
   \subfigure
  {
  \includegraphics[width=0.6\textwidth]{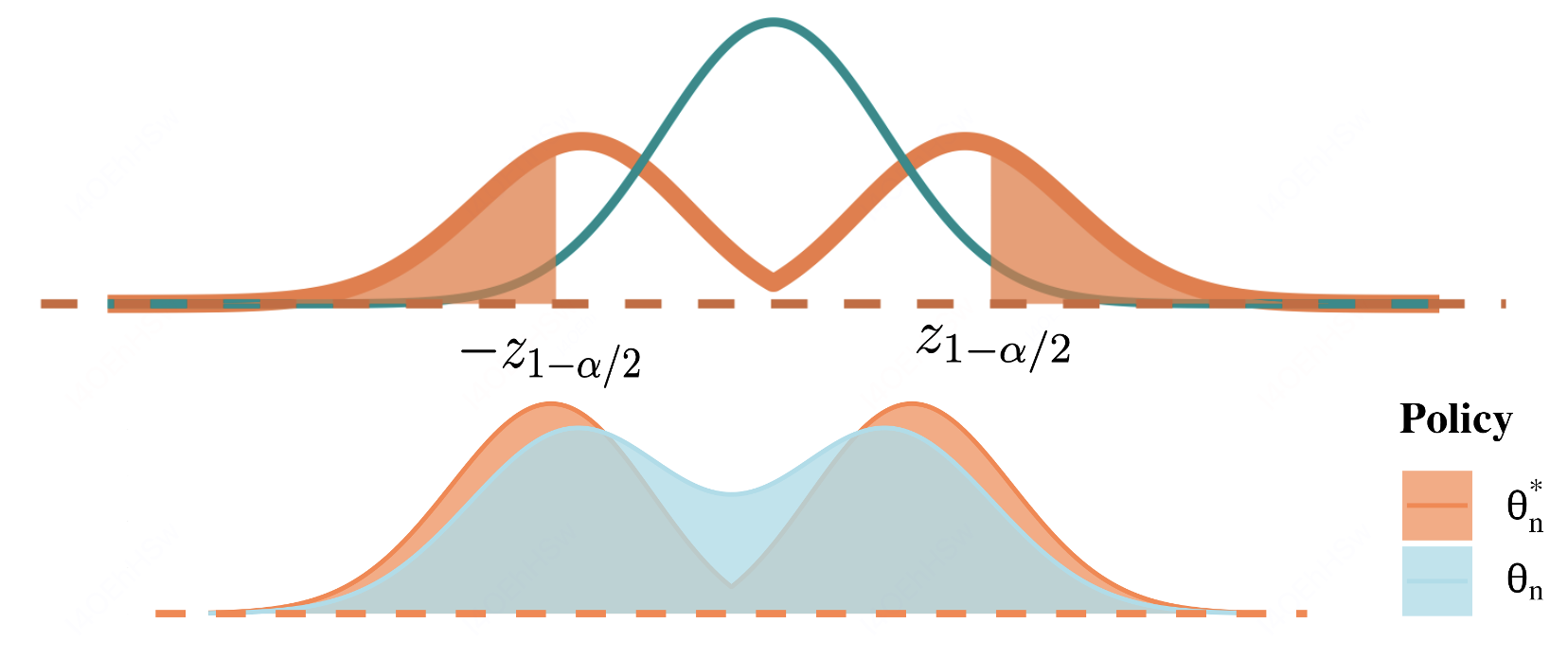}  
  }
  \caption{The power under the optimal strategy $\theta_n^*$ (brown shadow) (top) and the probability density plot of the test statistic $T_n(\theta_n)$ under different strategies (bottom). }  
  \label{fag1}
\end{figure}

\section{Method}
\label{Method}

\subsection{Weighted two-armed bandit (WTAB) process}
\label{Weighted TAB process}
\textbf{Oracle test via WTAB statistic}. Equation (\ref{TAB}) provides an expression for the statistic $T_n(\theta_n)$, where the term $\bar{R}_n^{(\vartheta_i)}/{n}$ and the term $R_i^{(\vartheta_i)}/({\sqrt{n}\widehat{\sigma}})$ are assigned equal weights. However, a more thorough study of $T_n(\theta_n)$ revealed that employing distinct weights for these two terms could potentially improve statistical power, as illustrated in Figure \ref{power_bar}. Specifically, rewrite $T_n(\theta_n)$ as: 
$$
T_{n,\lambda}(\theta_n)=\frac{1}{n}\sum_{i=1}^n{\lambda\bar{R}_n^{(\vartheta_i)}}+\frac{1}{\sqrt{n}}\sum_{i=1}^n(1-\lambda)\frac{{R}_i^{(\vartheta_i)}}{\widehat{\sigma}}.
$$
where $\lambda\in (0,1)$. Equivalently, to obtain a uniform form for the limiting distribution, $T_{n,\lambda}(\theta_n)$ can be rewritten as:
\begin{align}\label{WTAB}
    T_{n,\lambda}(\theta_n)=\frac{1}{n}\sum_{i=1}^n{\frac{\lambda}{1-\lambda}\bar{R}_n^{(\vartheta_i)}}+\frac{1}{\sqrt{n}}\sum_{i=1}^n\frac{{R}_i^{(\vartheta_i)}}{\widehat{\sigma}}. 
\end{align}
Intuitively, a change in the form of the statistic may result in a change in the optimal policy $\theta_n^*$. Surprisingly, the optimal policy $\theta_n^*$ corresponding to the statistic $T_{n,\lambda}(\theta_n)$ remains formally identical to that given in Equation (\ref{optimal policy T}). Theorem \ref{convergence rate} and \ref{optimal policy WT} show that this policy continues to be optimal in terms of maximizing statistical power. 
The parameter $\lambda$ in $T_{n,\lambda}(\theta_n^*)$ serves to balance the trade-off between the Type I error rate and statistical power. As illustrated in Figures \ref{power_bar} and \ref{density}, increasing $\lambda$ generally leads to higher statistical power. However, according to Theorem \ref{convergence rate}, it is not reasonable to blindly pursue a larger $\lambda$ when constructing the test statistic. Specifically, for fixed values of $n$ and $\sigma$, an excessively large $\lambda$ can slow down the convergence of $T_{n,\lambda}(\theta_n^*)$, thereby affecting the efficacy of the test. This observation is further supported by Figure \ref{emtype1}, which shows that a mismatch between $\lambda$ and the current values of $n$ and $\sigma$ can lead to inflated Type I error rates, ultimately resulting in an overestimation of the strategy’s efficacy. 

\begin{figure}[htbp]  
  \centering 
  \subfigure[Statistical power across different $\lambda$ and $\mu$. ]
  {\label{power_bar}
   \includegraphics[width=0.4\textwidth]{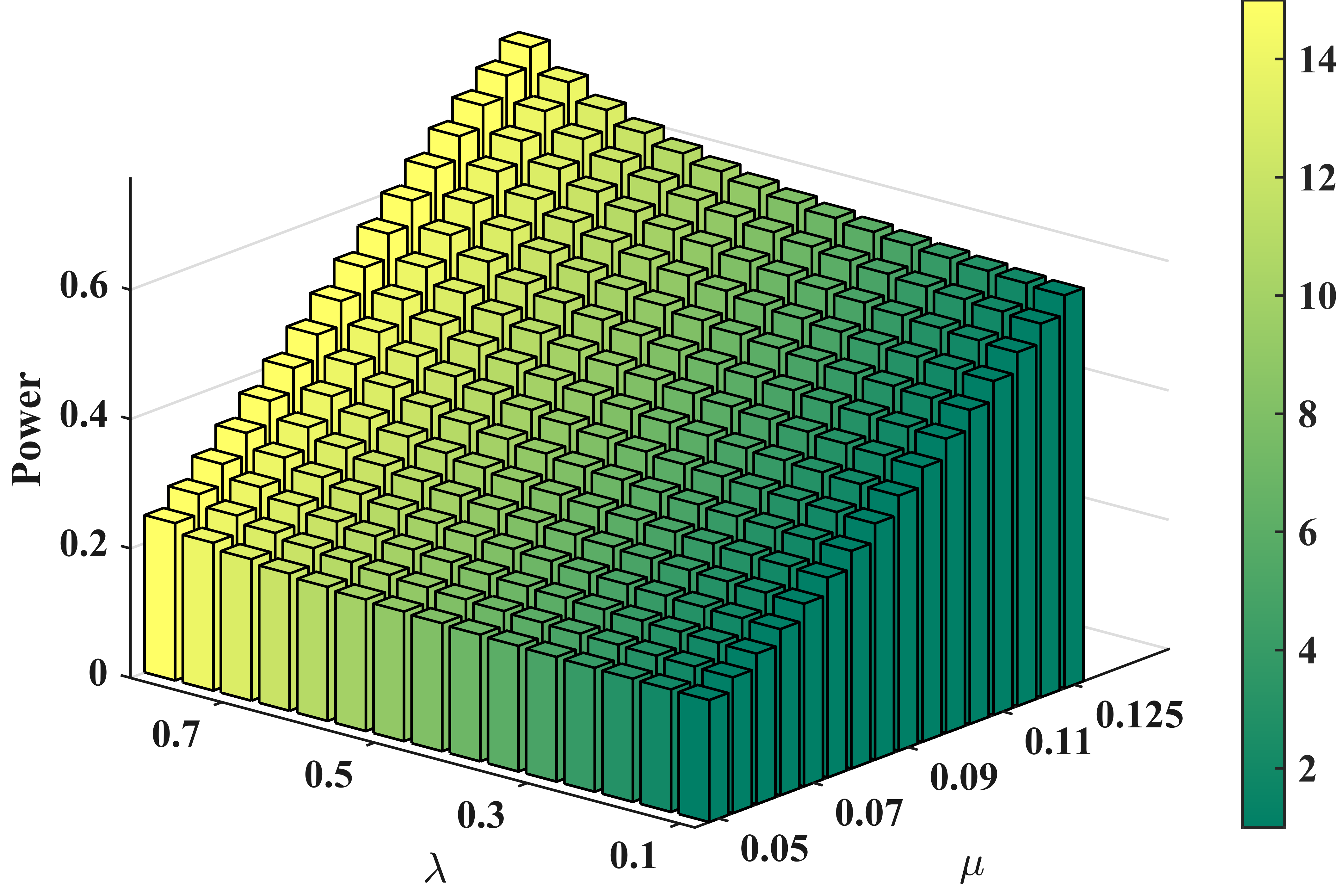}
  }
   \subfigure[The empirical type I error rate across different $\lambda$ and $\sigma$, fixed $n=20000$. ]
  {\label{emtype1}
  \includegraphics[width=0.4\textwidth]{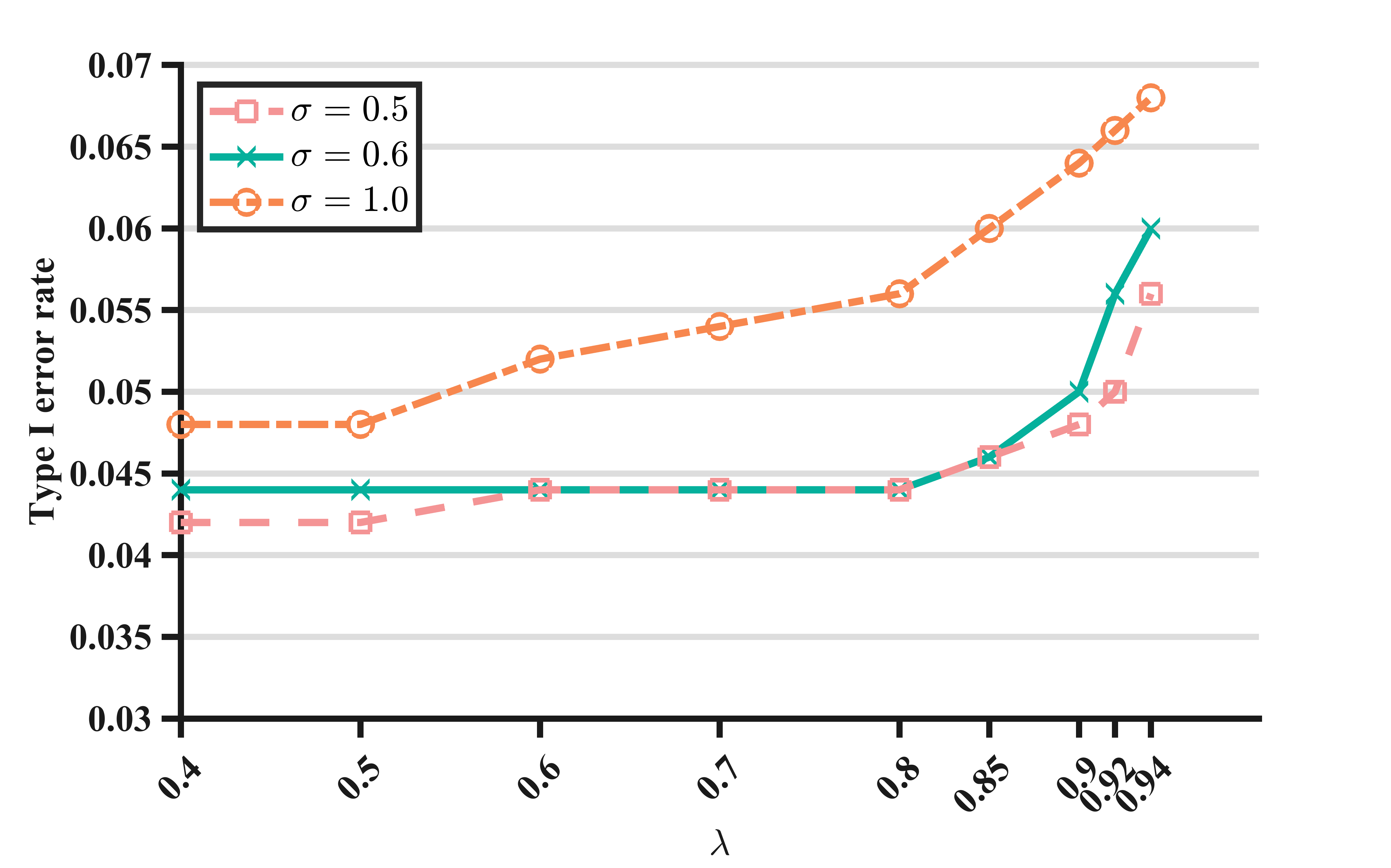}
  }
  \caption{Plot of statistical power and empirical type I error rate as $\lambda$ varies. It shows that test efficacy follows a same trend to $\lambda$, with a concurrent risk of inflation in the type I error rate. }
\end{figure}

As demonstrated in Theorem \ref{convergence rate}, the presence of $\lambda$ has been shown to reduce the convergence rate of $T_{n,\lambda}(\theta_n^*)$ by $\lambda\sigma/\big((1-\lambda)\sqrt{n}\big)$. Consequently, a threshold (typically 0.03) is selected to regulate the magnitude of $\lambda$, that is, to identify the largest that satisfies $\lambda\sigma/\big((1-\lambda)\sqrt{n}\big)\le0.03$. 
Alternatively, a data-driven approach can be employed to select an appropriate value of $\lambda$. Specifically, bootstrap methods \cite{hesterberg2011bootstrap} can be combined with a type I error rate test to identify the optimal $\lambda$ that maximizes the statistical power while controlling the type I error. 

In contrast to traditional hypothesis testing methods, which rely on the CLT and assume exchangeable data, the TAB framework introduces non-exchangeability. Specifically, the policy $\theta_n^*$ causes earlier samples to influence subsequent ones, making the construction of the test statistic $T_{n, \lambda}(\theta_n^*)$ dependent on the order of observations. By using SCLT, the test statistic no longer requires the normality assumption, enhancing statistical power while maintaining type I error control. 
Under the null hypothesis $\mu\le0$, the statistic $T_{n, \lambda}(\theta_n^*)$ exhibits a more centralized distribution around zero and satisfies $\mathbb{P}(\big|T_{n, \lambda}(\theta_n^*)\big|<z_{1-\alpha/2})<\alpha$, when the alternative hypothesis $\mu>0$ holds, it demonstrates a greater dispersion away from zero. The distinct distributions of $T_{n, \lambda}(\theta_n^*)$ allow us to effectively differentiate between them. Consequently, the two-tailed rejection region for one-sided hypothesis testing based on $T_{n, \lambda}(\theta_n^*)$ is defined as $
\left|T_{n, \lambda}(\theta_n^*)\right|>z_{1-\alpha/2}$. Furthermore, Theorem \ref{optimal policy WT} provides evidence to support the validity of $T_{n,\lambda}(\theta_n^*)$.

\textbf{Estimation of the counterfactual outcomes}. 
In contrast to oracle cases, a serious challenge under A/B testing is the issue of missing data, which means that only one outcome can be observed for each subject. A nuisance issue that requires resolution is the simultaneous acquisition of $Y_i^{(1)}$ and $Y_i^{(0)}$ for each subject. Given that the same subject will only be included in one of the control or treatment groups in an experiment, it is not feasible to observe their counterfactual results. Consequently, a pseudo population will be constructed for the purpose of estimating the missing data. According to the properties of conditional expectation and the assumptions of RCM, we can derive the following: 
\begin{align}\label{est_mu}\nonumber
\mu&=\mathbb{E}(Y^{(1)} - Y^{(0)})\\\nonumber
   &=\mathbb{E}\big[\mathbb{E}(Y^{(1)}\big|A=1, X)\big]-\mathbb{E}\big[\mathbb{E}(Y^{(0)}\big|A=0, X)\big]\\
   &=\mathbb{E}\big[\mathbb{E}(Y\big|A=1, X)\big]-\mathbb{E}\big[\mathbb{E}(Y\big|A=0, X)\big].
\end{align}
Consequently, let $m_1(x)$ and $m_0(x)$ denote the outcome regression function, and $e(x)$ denote the propensity score function such that $m_1(x)=\mathbb{E}(Y\big|A=1, X=x)$, $m_0(x)=\mathbb{E}(Y\big|A=0, X=x)$ and $e(x)=\mathbb{P}(A=1\big|X=x)$. Using the Doubly Robust (DR) estimator, Equation (\ref{est_mu}) can be rewritten as:
$$
\mu=\sum_{a=0}^1(-1)^{a+1}\mathbb{E}\left[m_a(X)+\frac{\mathbb{I}(A=a)}{e_a(X)}\big(Y-m_a(X)\big)\right],
$$
where $e_a(X)=e(X)\cdot \mathbb{I}(a=1)+(1-e(X))\cdot \mathbb{I}(a=0)$ and $\mathbb{I}(\cdot)$ denotes the indicator function. The use of a doubly robust estimator provides several advantages. Primarily, the estimator requires only that either the outcome regression models or the propensity score model be correctly specified to yield an unbiased estimate of ATE, a property known as double robustness. Secondly, in cases where both the outcome regression model and the propensity score model are correctly specified, the doubly robust estimator achieves lower variance compared to traditional propensity score-based methods \cite{bang2005doubly, chernozhukov2024applied}. Once the estimates of the outcome regression models and the propensity score model have been obtained, they can be used to estimate the ATE. Denote 
$$
\widehat\mu_i=\sum_{a=0}^1(-1)^{a+1}\left[\widehat{m}_a(x_i)+\frac{\mathbb{I}(A_i=a)}{\widehat{e}_a(x_i)}\big(Y_i-\widehat{m}_a(x_i)\big)\right], 
$$
and $\widehat\mu_i^{(1)}=\widehat\mu_i=-\widehat\mu_i^{(0)}$, then $T_{n,\lambda}(\theta_n^*)$ can be constructed as: 
\begin{align}\label{WTAB_hat}
    T_{n,\lambda}(\theta_n^*)=\frac{1}{n}\sum_{i=1}^n{\frac{\lambda}{1-\lambda}\bar{\mu}_n^{(\vartheta_i^*)}}+\frac{1}{\sqrt{n}}\sum_{i=1}^n\frac{{\widehat\mu}_i^{(\vartheta_i^*)}}{\widehat{\sigma}},
\end{align}
where $\widehat{\sigma}^2=\sum_{i=1}^n(\widehat{\mu}_i^{(1)}-\bar{\mu}_n^{(1)})^2/(n-1)$ and $\bar{\mu}_n^{(1)}=\sum_{i=1}^n\widehat{\mu}_i^{(1)}/n=-\bar\mu_n^{(0)}$. 
The use of sample variance as an estimator for true variance is supported by both theoretical and empirical evidence. Under the null hypothesis, after obtaining $\widehat{\mu}_i^{(1)}$, the $z$-test statistic is computed as $\sum_{i=1}^n{\widehat{\mu}_i^{(1)}}/{(\sqrt{n}\widehat{\sigma})}$, and its $p$-value distribution follows a uniform distribution $U(0,1)$. This justifies the use of sample variance as an estimator of true variance, as detailed in \cite{chernozhukov2018double, chernozhukov2024applied}. 

A key challenge lies in obtaining accurate estimates for functions $m_0(x)$, $m_1(x)$, and $e(x)$. 
Traditional methods such as CUPAC, which rely solely on linear regression, might fail to capture these intricate patterns. To overcome this limitation, advanced machine learning methods are introduced. Specifically, LightGBM \cite{ke2017lightgbm}, a state-of-the-art gradient boosting algorithm, is employed within the double machine learning (DML) framework \cite{chernozhukov2018double}. 
In practice, the observation dataset $\mathcal D$ is divided into $K$ equal subsets $D_k$. For each $D_k$, a LightGBM model is trained on the remaining data $\mathcal D/\mathcal D_k$ and applied to estimate counterfactual results of $\mathcal D_k$. This procedure is repeated for all subsets $\mathcal D_k$. Intuitively, the DML approach mitigates overfitting and reduces regularization biases by partitioning the dataset into multiple subsets. Each subset is used iteratively to estimate conditional relationships, ensuring robustness and improved predictive performance. \cite{chernozhukov2018double, chernozhukov2024applied}.
Additionally, XGBoost \cite{chen2016xgboost} is  used to estimate $m_1(x)$, $m_0(x)$, and $e(x)$, exhibiting similar performance to LightGBM. To further improve efficacy, the ensemble learning technique stacking \cite{wolpert1992stacked, breiman1996bagging} is applied, with LightGBM and XGBoost as the primary learners and linear regression serving as the meta-learner \cite{ting1999issues}.

\subsection{Permuted WTAB}
\label{PWTAB}
As discussed previously, the $T_{n,\lambda}(\theta_n^*)$ statistic has demonstrated superior performance, but it is worth highlighting that it does not possess the property of sample order exchangeability, which implies that it is sensitive to the order in which the samples are presented, leading to the ``$p$-value lottery'' \cite{meinshausen2009p}. 
To address this issue, we perform multiple samples reorderings, repeatedly calculate the $p$-value of $T_{n,\lambda}(\theta_n^*)$, and aggregate these via meta-analysis to enhance the robustness of statistical inference. Specifically, first determine a number of permutations, denoted as $B$. Subsequently, the sequence $\{1,2,\ldots,n\}$ is reordered into a new one by applying a mapping $\pi_b: \{1,2,\ldots,n\}\rightarrow\{1,2,\ldots,n\}$. For each element $i$ in the original sequence, its position in the reordered sequence is given by $\pi_b(i)$. For $b=1,\ldots,B$, the mapping $\pi_b$ is applied to the counterfactual outcomes $\{\widehat{\mu}_i, i=1,\ldots,n\}$, resulting in reordered samples $\{\widehat{\mu}_{\pi_b(i)}, i=1,\ldots,n\}$. Finally, $T_{n,\lambda}^{(b)}(\theta_n^*)$ and its corresponding $p$-value $p_{\lambda}^{(b)}$ is calculated based on $\{\widehat{\mu}_{\pi_b(i)}, i=1,\ldots,n\}$.

However, varying sample orderings can yield inconsistent $p_{\lambda}^{(b)}$ values, and the conclusions drawn from individual $p$-value may be unclear. To resolve this, we apply meta-analysis to synthesize an overall $p$-value, improving the reliability of the results derived from individual $p_{\lambda}^{(b)}$ \cite{walker2008meta, lee2019strengths}. Several research studies are conducted on the subject of combining $p$-values, including Fisher's combination test \cite{fisher1970statistical}, quantile-based combination test \cite{meinshausen2009p} and Cauchy combination test \cite{liu2020cauchy}. Cauchy combination test is used for the combination of $p$-values in this paper, since it is straightforward and easy to implement and has also been shown to effectively aggregate multiple small effects, even when the $p$-values are dependent \cite{liu2020cauchy}. The $p$-values are aggregated by Cauchy combination as follows:

\begin{equation}\label{C_B}
    \mathcal{C}_B=\frac{1}{B}\sum_{b=1}^B\tan\left[(0.5-p_{\lambda}^{(b)})\pi\right]. 
\end{equation}
The aggregated $p$-value is calculated as
\begin{equation}\label{agg_p}
    p_a = 0.5-\arctan(\mathcal{C}_B)/\pi, 
\end{equation}
and subsequently employed to ascertain whether the null hypothesis should be rejected. In this paper, $B$ is set to 25, as it has been observed that increasing $B$ further does not substantially improve statistical power. The Algorithm \ref{alg:PW_TAB} provides a comprehensive illustration of the proposed permuted WTAB algorithm (PWTAB).

\begin{algorithm}[!tb]
   \caption{Permuted WTAB algorithm}
   \label{alg:PW_TAB}
    {\bfseries Input:} data $D=\{(X_i,Y_i,A_i), i=1,\ldots,n\}$, threshold $\tau$, permutation times $B$.
 
   {\bfseries Output:} the aggregated $p$-value  $p_a$.
   
   \begin{algorithmic}[1]
   \STATE $n_0=$ number of samples in $D$.
   \STATE Divide $D=\{(X_i,Y_i,A_i), i=1,\ldots,n\}$ into $K$ disjoint subset $D_k$, each of equal size. 
   \WHILE{$k\le K$}
   \STATE Estimate outcome regression functions $m_0(x)$, $m_1(x)$ and propensity score function $e(x)$ using LightGBM based on $D/D_k$, denoted by $\widehat{m}_0^{(k)}(x)$, $\widehat{m}_1^{(k)}(x)$ and $\widehat{e}^{(k)}(x)$; 
   \STATE Substitute $D_k$ into the expression of $\widehat \mu_i$ to obtain the estimates of counterfactual outcomes. 
   \ENDWHILE
   \STATE Estimate sample variance for the counterfactual outcomes as $\widehat{\sigma}^2=\sum_{i=1}^n(\widehat{\mu}_i^{(1)}-\bar{\mu}^{(1)})/(n-1)$, where $\bar{\mu}^{(1)}=\sum_{i=1}^n\widehat{\mu}_i^{(1)}/n$. 
   \STATE Compute $\lambda=\tau \sqrt n/(\widehat{\sigma}+\tau \sqrt n)$.  
   \WHILE{$b\le B$}
   \STATE Get a set of reordered samples of $\{\widehat{\mu}_i, i=1,\ldots,n\}$, denoted by $\{\widehat{\mu}_{\pi_b(i)}, i=1,\ldots,n\}$; 
   \STATE Calculate the test statistic $T_{n,\lambda}^{(b)}(\theta_n^*)$ based on $\left\{\widehat{\mu}_{\pi_b(i)}, i=1,\ldots,n\right\}$ and its corresponding $p$-value $p_{\lambda}^{(b)}=2\Phi \left(-\left|T_{n,\lambda}^{(b)}(\theta_n^*) \right|\right)$.
   \ENDWHILE
   \STATE Aggregate $\{p^{(b)}_{\lambda}, b=1,\ldots,B\}$ by Equation (\ref{C_B}). 
  \STATE Compute and output the aggregated $p$-value $p_a$ by Equation (\ref{agg_p}). 
  \STATE \textbf{Return} $p_a$.
\end{algorithmic}
\end{algorithm}

\section{Theoretical Properties}
\label{Theoretical Properties}
In this section, we first summarize the asymptotic properties of the oracle test statistic $T_{n,\lambda}(\theta_n^*)$ to demonstrate the practicality of the maximum probability-driven two-armed bandit framework.
\begin{theorem}\label{convergence rate}
Let $\varphi \in C(\overline{\mathbb{R}})$, the set of all continuous functions on $\mathbb{R}$ with finite limits at $\pm \infty$, be an even function and monotone on $(0, \infty)$. For any $n \geq 1$, construct the oracle test statistic $T_{n,\lambda}(\theta_n^*)$ and the strategy $\theta_n^*=\arg\max_{\theta_n \in \Theta} E[\varphi(T_{n,\lambda}(\theta_n))]$ when $\varphi$ is decreasing on $(0, \infty)$ as follows: for $i=1$, $\mathbb{P}(\vartheta_1^*=1)=\mathbb{P}(\vartheta_1^*=0)=0.5$, and for $i \geq 2$,
\begin{equation}
\vartheta_i^*=\left\{
\begin{aligned}
    1, \quad T_{i-1, \lambda}(\theta^*_{i-1})\ge 0,\\
    0, \quad T_{i-1, \lambda}(\theta^*_{i-1})< 0.
\end{aligned}
\right.
\end{equation}
Under the assumptions of RCM, we obtain
\begin{equation}\label{rate}
\begin{aligned}
E\left[\left|\varphi\left(T_{n,\lambda}(\theta_n^*)\right)-\varphi\left(\eta_n\right)\right|\right]=O\left(\frac{\sigma}{(1-\lambda)\sqrt{n}}\right), 
\end{aligned}
\end{equation}
where $\eta_n \sim \mathcal{B}\left(\omega_n, \sigma_0\right)$ is a spike distribution with the parameter $\omega_n=\lambda\mu/(1-\lambda)+\sqrt{n}\mu/\sigma$, $\sigma_0=\sqrt{1+\mu^2/\sigma^2}$. The bandit distribution $\mathcal{B}\left(\omega_n, \sigma_0\right)$ has the density function
\begin{equation}\label{eq20}
\begin{aligned}
f^{\omega_n,\sigma_0}(y)=&\frac{1}{\sqrt{2\pi}\sigma_0}\exp\left(-\frac{(\left|y\right|-\omega_n)^2}{2\sigma_0^2}\right)\\
&-\frac{\omega_n}{\sigma_0^2}e^{2\omega_n\left|y\right|/\sigma_0^2}\Phi\left(-\frac{\left|y\right|+\omega_n}{\sigma_0}\right).
\end{aligned}
\end{equation}
\end{theorem}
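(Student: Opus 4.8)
The strategy is to rewrite $T_{n,\lambda}(\theta_n^*)$ as a folded (reflected) random walk with drift and then prove a Berry--Esseen-type estimate against the marginal of a reflected Brownian motion. First I would pass from arm labels to signs. Writing $s_i=2\vartheta_i^*-1\in\{-1,1\}$ and using $R_i^{(\vartheta_i^*)}=s_iR_i^{(1)}$, $\bar R_n^{(\vartheta_i^*)}=s_i\bar R_n^{(1)}$, the statistic collapses to
\begin{equation*}
T_{n,\lambda}(\theta_n^*)=\frac{\lambda}{1-\lambda}\,\bar R_n^{(1)}\,\Big(\frac1n\sum_{i=1}^n s_i\Big)+\frac{S_n}{\sqrt n\,\widehat\sigma},\qquad S_n=\sum_{i=1}^n s_iR_i^{(1)} .
\end{equation*}
The greedy rule is $s_i=\operatorname{sign}(T_{i-1,\lambda})$, and since the volatility part of $T_{i-1,\lambda}$ grows like $\sqrt{i}\,\mu/\sigma$ while the mean part stays $O(1)$, it dominates the sign decision for all but an initial, boundary-confined set of indices; granting $s_i=\operatorname{sign}(S_{i-1})$ one obtains the folding identity $|S_i|=\big||S_{i-1}|+\xi_i\big|$ with i.i.d.\ increments $\xi_i=R_i^{(1)}$ of mean $\mu$ and variance $\sigma^2$. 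Thus $W_i:=|S_i|$ is a reflected random walk, and the symmetric initialization $\mathbb{P}(s_1=\pm1)=1/2$ renders the law of $T_{n,\lambda}(\theta_n^*)$ even, which is why the target density depends only on $|y|$.

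Next I would identify the limiting law. The one-step second moment of the reflected recursion is $\mathbb{E}[(W_{i-1}+\xi_i)^2\mid W_{i-1}]=W_{i-1}^2+2\mu W_{i-1}+\mathbb{E}[(R^{(1)})^2]$, so the effective diffusivity of $W_i$ is $\mathbb{E}[(R^{(1)})^2]=\mu^2+\sigma^2$, which after normalization by $\sigma$ produces the spread parameter $\sigma_0^2=1+\mu^2/\sigma^2$; the accumulated drift $\mathbb{E}[W_n]\approx n\mu$ scales to $\sqrt n\,\mu/\sigma$, and the mean term contributes the deterministic offset $\lambda\mu/(1-\lambda)$ (using $\frac1n\sum s_i\to1$ and $\bar R_n^{(1)}\to\mu$ on the escape event), giving $\omega_n=\lambda\mu/(1-\lambda)+\sqrt n\,\mu/\sigma$. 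I would then recognize the target as the time-marginal of a reflected Brownian motion with drift started at $0$ and verify directly that (\ref{eq20}) is its density, e.g.\ by checking that $f^{\omega_n,\sigma_0}$ solves the associated reflected forward (Fokker--Planck) equation with the Neumann-type condition at the origin; the characteristic $e^{2\omega_n|y|/\sigma_0^2}\Phi(\cdot)$ term is precisely the boundary correction produced by drift plus reflection.

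The quantitative heart of the argument is the rate. I would obtain it from a martingale Berry--Esseen / Skorokhod-embedding estimate for the folded walk, coupling $W_n$ to the reflected diffusion so that the discrete and continuous marginals agree up to $O(1/\sqrt n)$ in a smooth metric, and separately bound the error from replacing the mean term $\frac{\lambda}{1-\lambda}\bar R_n^{(1)}\frac1n\sum s_i$ by its limit $\frac{\lambda}{1-\lambda}\mu$. Because $\bar R_n^{(1)}-\mu=O_p(\sigma/\sqrt n)$ and the coefficient is $\lambda/(1-\lambda)$, the latter contributes $O\big(\lambda\sigma/((1-\lambda)\sqrt n)\big)$, which dominates the baseline $O(\sigma/\sqrt n)$ and yields the stated $O\big(\sigma/((1-\lambda)\sqrt n)\big)$. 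Finally, since $\varphi\in C(\overline{\mathbb{R}})$ is bounded, even, and monotone on $(0,\infty)$, I would transfer the distributional bound to $\mathbb{E}\big[|\varphi(T_{n,\lambda}(\theta_n^*))-\varphi(\eta_n)|\big]$: evenness reduces everything to the laws of $|T_{n,\lambda}(\theta_n^*)|$ and $|\eta_n|$, while monotonicity lets me integrate by parts so that the difference is controlled by a Kolmogorov-type distance times the finite total variation of $\varphi$, and boundedness absorbs the small-probability sign-mismatch event from the coupling.

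The step I expect to be the main obstacle is the quantitative control of the reflection together with the adapted, non-exchangeable sign sequence. The folding identity is exact only after replacing $\operatorname{sign}(T_{i-1,\lambda})$ by $\operatorname{sign}(S_{i-1})$, so I must bound the probability and the cost of sign mismatches in the boundary layer $\{|S_{i-1}|\approx0\}$, where the mean term can flip the decision, and make the Berry--Esseen bound uniform across the reflecting barrier while preserving the explicit $\sigma$ and $(1-\lambda)^{-1}$ dependence; the precise cancellation that pins down $\sigma_0^2=1+\mu^2/\sigma^2$ against the drift correction to $\mathbb{E}[W_n]$ is the delicate second-order point. The optimality assertion $\theta_n^*=\arg\max_{\theta_n}\mathbb{E}[\varphi(T_{n,\lambda}(\theta_n))]$ I would dispatch by backward induction, showing the dynamic-programming value function preserves the even, monotone shape so that aligning the sign with $\operatorname{sign}(T_{i-1,\lambda})$ is optimal at each stage, consistent with Lemma \ref{thm tab}.
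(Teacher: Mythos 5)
Your structural analysis matches the paper's: the greedy policy folds the process, the mean-plus-drift bookkeeping gives $\omega_n=\lambda\mu/(1-\lambda)+\sqrt{n}\mu/\sigma$, the per-step \emph{second moment} $\mathbb{E}[(R^{(1)})^2]/\sigma^2$ (not the variance) is what supplies $\sigma_0^2=1+\mu^2/\sigma^2$, and the density (\ref{eq20}) is indeed the symmetrized time-one marginal of the drifted reflecting diffusion --- the paper realizes it as the law of $\beta Y_1$ for the SDE $\mathrm{d}Y_s=(\delta/\beta)\operatorname{sgn}(Y_s)\,\mathrm{d}s+\mathrm{d}B_s$. One remark on your reduction: the detour through $s_i=\operatorname{sgn}(S_{i-1})$ and the attendant sign-mismatch analysis is unnecessary. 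Since $T_{i,\lambda}=T_{i-1,\lambda}+\operatorname{sgn}(T_{i-1,\lambda})\,\Delta_i$ with $\Delta_i=\frac{\lambda\bar R^{(1)}}{(1-\lambda)n}+\frac{R_i^{(1)}}{\sqrt{n}\widehat{\sigma}}$ not depending on the chosen arm, the folding identity $\left|T_{i,\lambda}\right|=\bigl|\left|T_{i-1,\lambda}\right|+\Delta_i\bigr|$ holds \emph{exactly} for $T$ itself; the boundary-layer problem you single out as the main obstacle does not exist in the form you set it up (what does remain is that $\bar R_n^{(1)}$ and $\widehat{\sigma}$ are not adapted to the filtration of the walk).

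The genuine gap is the quantitative step. Your transfer to general $\varphi$ integrates the CDF difference against $\mathrm{d}\varphi$, so you need a Kolmogorov-distance bound of order $\sigma/\bigl((1-\lambda)\sqrt{n}\bigr)$ between the folded walk and the reflected-diffusion marginal, uniformly even though the drift $\omega_n\asymp\sqrt{n}$ explodes with $n$. Neither tool you invoke delivers this: Skorokhod embedding couples a walk to a diffusion only at rate $n^{-1/4}$ in sup norm, and martingale Berry--Esseen theorems do not apply because $\left|T_{i,\lambda}\right|$ is not a martingale, and such results in any case give smooth-test-function rates rather than uniform CDF control; a Berry--Esseen theorem for folded walks with $O(\sqrt{n})$ drift, uniform across the reflecting barrier, would be a research contribution in itself. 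This is precisely what the paper's proof is engineered to avoid: it never compares the two laws in any metric. Instead it mollifies $\varphi$ to $\varphi_h$, defines the interpolating family $F_t(x)=\mathbb{E}\bigl[\varphi_h\bigl(\beta Y_1^{t,x/\beta}\bigr)\bigr]$, proves a one-step discrete consistency estimate (Lemma \ref{le6}(4)) via It\^o's formula and the Markov property, and telescopes $\mathbb{E}\bigl[F_{m/n}(T_{m,\lambda})\bigr]-\mathbb{E}\bigl[F_{(m-1)/n}(T_{m-1,\lambda})\bigr]$, with the greedy policy converting $\dot F\cdot\operatorname{sgn}$ into $\mp\vert\dot F\vert$ (Lemma \ref{le8}); only third-order Taylor bounds are needed, which is where the explicit $\sigma/\bigl((1-\lambda)\sqrt{n}\bigr)$ comes from. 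To rescue your route you would either have to prove the folded-walk Berry--Esseen from scratch or retreat, as the paper does, to smooth test functions plus mollification --- at which point you have reproduced the paper's argument. Note finally that the theorem's $E\left[\left|\varphi(T_{n,\lambda}(\theta_n^*))-\varphi(\eta_n)\right|\right]$ is actually established in the paper as a difference of expectations $\left|\mathbb{E}[\varphi_h(T_{n,\lambda}(\theta_n^*))]-\mathbb{E}[\varphi_h(\eta_n)]\right|$; your coupling formulation is strictly stronger than what is proved, which further raises the bar on the missing ingredient.
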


It can be seen that in the specified probability density function $f^{\omega_n,\sigma_0}(y)$, the parameter $\omega_n$ exhibits a positive correlation with $\lambda$, $n$, and $\mu$, regulating the extent of the left-right shift of the dual peaks in the distribution. Figure \ref{density} provides an example illustrating the effect of varying $\lambda$. Meanwhile, the parameter $\sigma_0$ influences the overall height of these peaks. When $\mu=0$, the probability density function $f^{\omega_n,\sigma_0}(y)$ degenerates into the probability density function of the standard normal distribution. 

\begin{figure}[htbp]  
  \centering 
  { 
    \includegraphics[width=0.4\textwidth]{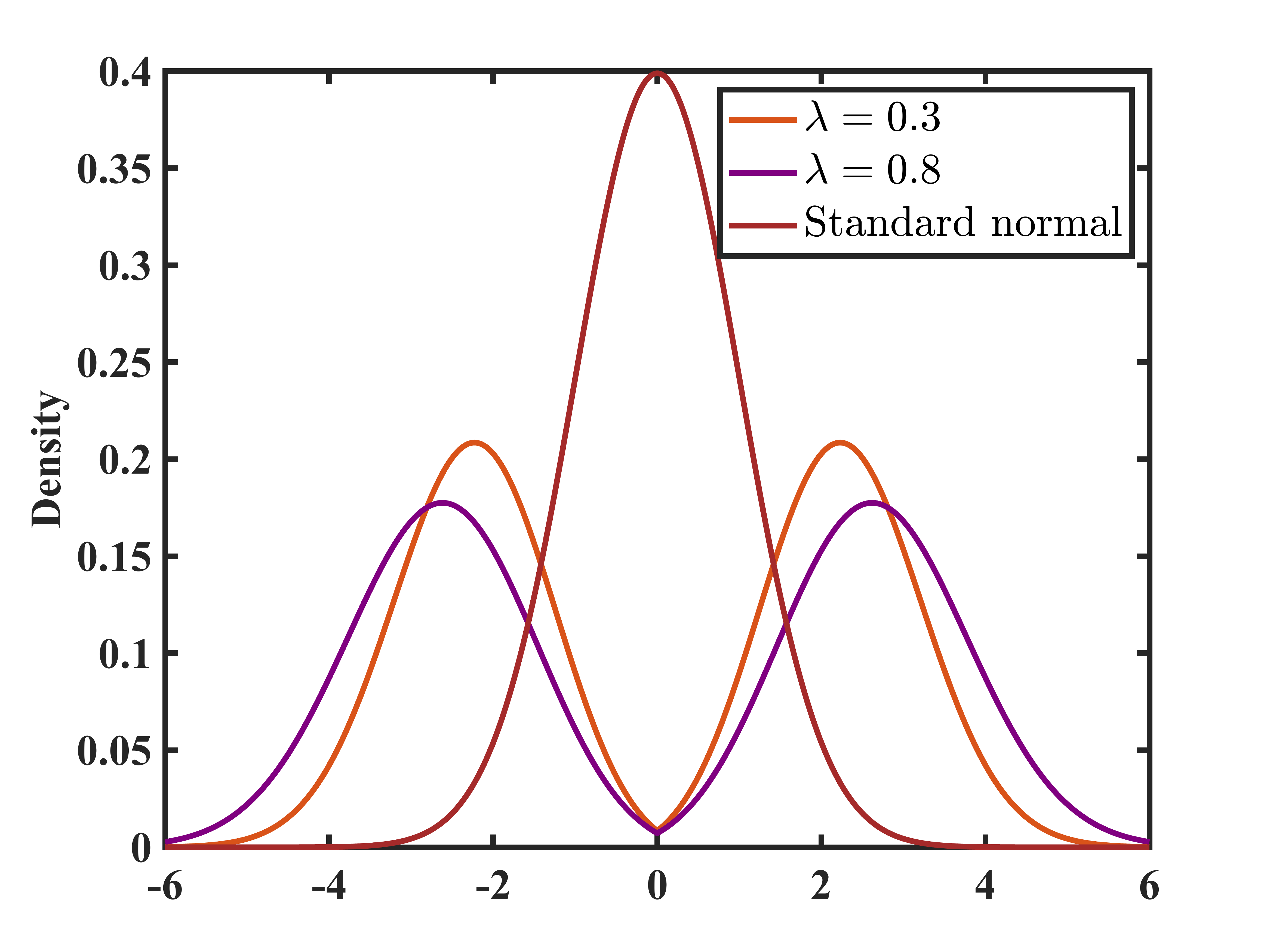}
  } 
  {
  \includegraphics[width=0.4\textwidth]{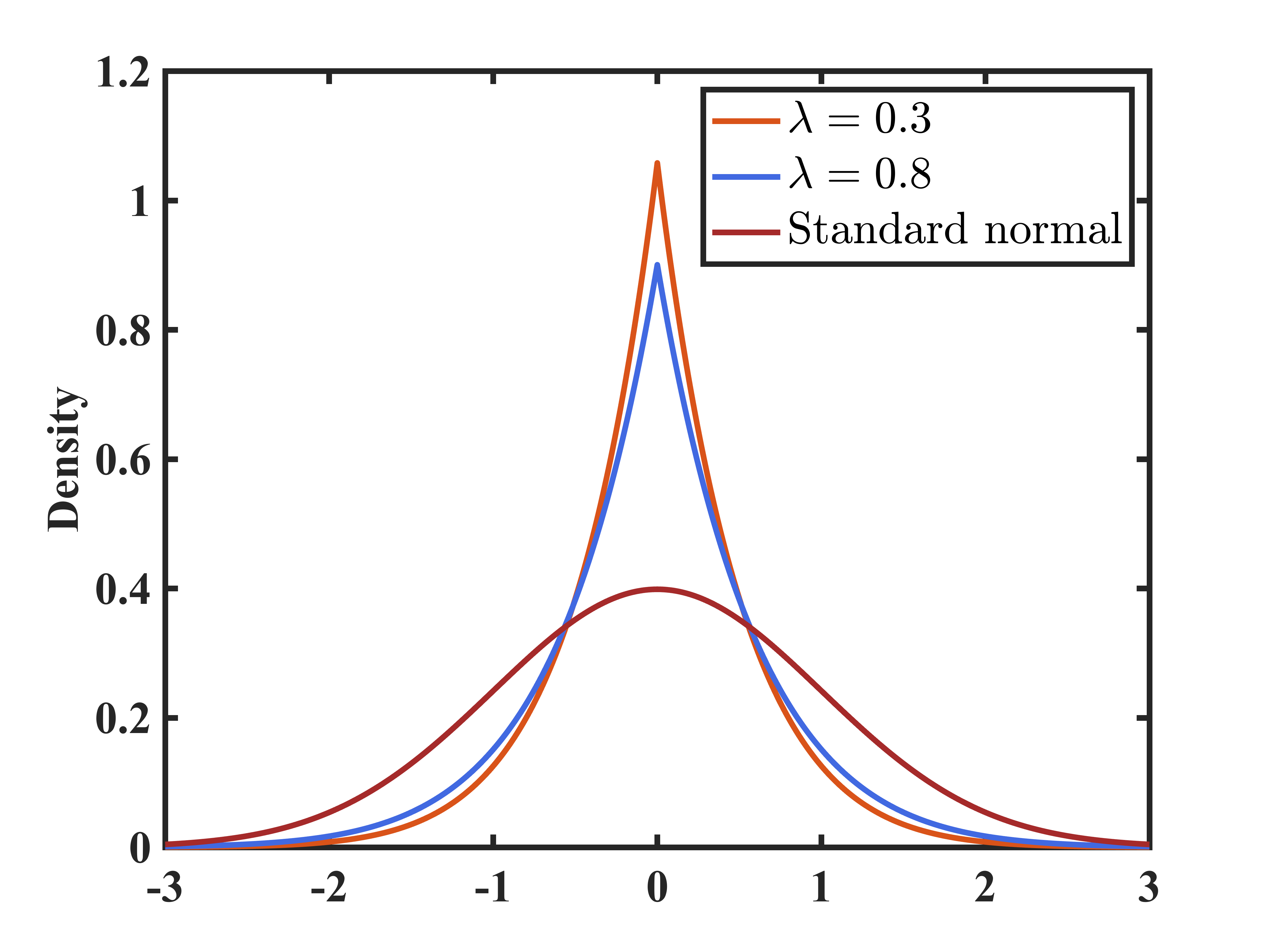}
  }
  \caption{Density plots of asymptotic distributions under two hypotheses, compared with a standard normal distribution. The left graph is the density function under the alternative hypothesis. Intuitively, a larger $\lambda$ corresponding to a greater statistical power. The right graph is the density function when the null hypothesis holds. As illustrated, type I error can be effectively controlled.}
  \label{density}
\end{figure}

\begin{theorem}\label{optimal policy WT}
Under aforementioned policy $\theta_n^*$, the oracle test statistic $T_{n,\lambda}(\theta_n^*)$ satisfies
\begin{align}\nonumber
&\lim _{n\rightarrow \infty} \mathbb{P}\left(\left|T_{n,\lambda}(\theta_n^*)\right|>z_{1-\alpha / 2}\right)=\Phi\left(\frac{\omega_n-z_{1-\alpha / 2}}{\sigma_0}\right)\\\nonumber
&\quad\quad\quad\quad\quad\quad\quad\quad+e^{\frac{2 \omega_n z_{1-\alpha / 2}}{\sigma_0^2}} \Phi\left(-\frac{\omega_n+z_{1-\alpha / 2}}{\sigma_0}\right).
\end{align}
Consequently, it attains the following properties:

{\rm (i)}. {\rm (}{\bf Type I error control}{\rm ):} Under the null hypothesis,
$$
\lim _{n\rightarrow \infty} \mathbb{P}\left(\left|T_{n,\lambda}(\theta_n^*)\right|>z_{1-\alpha / 2}\right) \leq \alpha .  
$$
{\rm (ii)}. {\rm (}{\bf Consistency against fixed alternatives}{\rm ):} For a given fixed $\mu>0$,
$$
\begin{aligned}
\lim_{n\rightarrow \infty} &\mathbb{P}\left(\left|T_{n,\lambda}(\theta_n^*)\right|>z_{1-\alpha / 2}\right)\\
=&\lim _{n\rightarrow \infty} \sup_{\theta_n \in \Theta} \mathbb{P}\left(\left|T_{n,\lambda}(\theta_n)\right|>z_{1-\alpha/2}\right)=1.
\end{aligned}
$$
\end{theorem}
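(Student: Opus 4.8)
The plan is to reduce all three assertions to the analysis of a single scalar function and to import the heavy lifting from Theorem~\ref{convergence rate}. Write $c=z_{1-\alpha/2}$ and set
\[
g(\omega):=\Phi\!\Big(\tfrac{\omega-c}{\sigma_0}\Big)+e^{2\omega c/\sigma_0^2}\,\Phi\!\Big(-\tfrac{\omega+c}{\sigma_0}\Big).
\]
First I would prove the master identity $\mathbb{P}(|\eta_n|>c)=g(\omega_n)$ for $\eta_n\sim\mathcal{B}(\omega_n,\sigma_0)$ by integrating the density $f^{\omega_n,\sigma_0}$ in (\ref{eq20}) over $\{|y|>c\}$. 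Since $f^{\omega_n,\sigma_0}$ is even this equals $2\int_c^\infty f^{\omega_n,\sigma_0}$; the Gaussian term yields the first summand of $g$, and the $\Phi$-weighted exponential term is dispatched by a single integration by parts, the boundary and remainder pieces collapsing through the elementary identity $e^{2\omega c/\sigma_0^2}\phi(\tfrac{\omega+c}{\sigma_0})=\phi(\tfrac{\omega-c}{\sigma_0})$, where $\phi$ is the standard normal density.

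Second, I would transfer this identity to $T_{n,\lambda}(\theta_n^*)$. Because the tail indicator $\mathbf{1}\{|y|>c\}$ is discontinuous it is not directly admissible in Theorem~\ref{convergence rate}, so I would sandwich it between even functions that are monotone on $(0,\infty)$ and belong to $C(\overline{\mathbb{R}})$, apply the coupling bound $E[|\varphi(T_{n,\lambda}(\theta_n^*))-\varphi(\eta_n)|]=O(\sigma/((1-\lambda)\sqrt{n}))$ to each, and then let the smoothing width tend to zero. Closing this requires only that $\mathbb{P}(|\eta_n|\in[c-\epsilon,c+\epsilon])=O(\epsilon)$ uniformly in $n$, which holds because $f^{\omega_n,\sigma_0}$ is uniformly bounded on a fixed neighbourhood of the threshold $c$ (its spikes sit at $\pm\omega_n$, which escape this neighbourhood). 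This delivers $\mathbb{P}(|T_{n,\lambda}(\theta_n^*)|>c)=g(\omega_n)+o(1)$, i.e.\ the displayed formula.

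Third, parts (i) and (ii) reduce to the monotonicity and the endpoint behaviour of $g$. Differentiating and invoking the same identity makes the two $\phi$ contributions cancel, leaving
\[
g'(\omega)=\tfrac{2c}{\sigma_0^2}\,e^{2\omega c/\sigma_0^2}\,\Phi\!\Big(-\tfrac{\omega+c}{\sigma_0}\Big)>0,
\]
so $g$ is strictly increasing in $\omega$. For (i), under $\mu\le0$ we have $\omega_n\le0$: at $\mu=0$ one has $\sigma_0=1$, $\omega_n=0$ and $g=2\Phi(-z_{1-\alpha/2})=\alpha$, while for $\mu<0$, $\omega_n\to-\infty$ drives both summands of $g$ to $0$; hence the limit is at most $\alpha$. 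For (ii), a fixed $\mu>0$ gives $\omega_n\to+\infty$, so the first summand tends to $1$ and the second vanishes by the tail estimate $e^{2\omega c/\sigma_0^2}\Phi(-\tfrac{\omega+c}{\sigma_0})\to0$; thus $g(\omega_n)\to1$. The supremum over policies is then squeezed between $g(\omega_n)$ and $1$ (and equals the value at $\theta_n^*$ by its established optimality), giving the second equality.

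The hard part will be the transfer step rather than any of the calculus: Theorem~\ref{convergence rate} controls $E[\varphi(\cdot)]$ for smooth $\varphi$, but I need the probability of a sharp tail event whose limiting law $\mathcal{B}(\omega_n,\sigma_0)$ is itself drifting with $n$. Making the portmanteau sandwich uniform in $n$ --- controlling the mass that could escape near the fixed threshold $c$ while the spikes at $\pm\omega_n$ run off to infinity --- is the delicate point; once that uniform boundedness of $f^{\omega_n,\sigma_0}$ near $c$ is in hand, everything else is the single integration by parts and elementary normal-tail asymptotics.
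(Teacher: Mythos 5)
Your proposal is correct, and its backbone --- transferring the spike-law approximation of Theorem~\ref{convergence rate} to the sharp tail event by sandwiching the indicator of $\{|y|>z_{1-\alpha/2}\}$ between smooth even functions monotone on $(0,\infty)$ --- is the same mechanism the paper uses, where it mollifies via $\varphi_h$ and then appeals to ``standard approximation arguments.'' You genuinely differ in three places, all substantive. First, you actually prove the master identity $\mathbb{P}(|\eta_n|>c)=g(\omega_n)$ by integrating $f^{\omega_n,\sigma_0}$ over $\{|y|>c\}$ with one integration by parts, the key cancellation being $e^{2\omega c/\sigma_0^2}\phi\bigl((\omega+c)/\sigma_0\bigr)=\phi\bigl((\omega-c)/\sigma_0\bigr)$; the paper takes this tail formula of the bandit distribution for granted, and your computation checks out (it also confirms the theorem's displayed formula, whereas the CDF display inside the paper's proof carries a sign slip in its first term). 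Second, you name the ingredient hidden in the paper's ``standard approximation arguments'': de-smoothing requires anti-concentration of $\eta_n$ near the fixed threshold uniformly in $n$, and your uniform bound on $f^{\omega_n,\sigma_0}$ near $c$ is valid --- by Mills' ratio the second term of the density is dominated by $\sigma_0^{-1}\phi\bigl((|y|-\omega_n)/\sigma_0\bigr)$, so the density is bounded by $2/(\sqrt{2\pi}\,\sigma_0)$ everywhere even as the spikes at $\pm\omega_n$ run off; applying Theorem~\ref{convergence rate} at fixed smoothing width $\epsilon$ and sending $\epsilon\to0$ after $n\to\infty$ avoids any issue with the $\varphi$-dependent constants in the $O(\sigma/((1-\lambda)\sqrt{n}))$ bound. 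Third, for part (ii) you bypass Lemma~\ref{le9} entirely: since $\theta_n^*\in\Theta$, the squeeze $g(\omega_n)+o(1)=\mathbb{P}\bigl(|T_{n,\lambda}(\theta_n^*)|>c\bigr)\le\sup_{\theta_n\in\Theta}\mathbb{P}\bigl(|T_{n,\lambda}(\theta_n)|>c\bigr)\le1$ together with $g(\omega_n)\to1$ gives both equalities. This is simpler than the paper's route, though it proves less: Lemma~\ref{le9} shows the supremum attains the same limiting spike law at every threshold, a stronger optimality statement the paper reuses in its proof of Lemma~\ref{thm tab}. Finally, your calculus for (i) and (ii) --- $g'(\omega)=\tfrac{2c}{\sigma_0^2}e^{2\omega c/\sigma_0^2}\Phi\bigl(-(\omega+c)/\sigma_0\bigr)>0$, $g(0)=2\Phi(-c)=\alpha$ at $\mu=0$ where $\sigma_0=1$, and $g(\omega_n)\to0$ or $1$ as $\omega_n\to\mp\infty$ --- is correct and makes explicit what the paper leaves implicit once the distributional limit is established; just note that monotonicity in $\omega$ alone does not order $g$ across different $\mu$ (since $\sigma_0$ varies with $\mu$), but your direct limit computation for each fixed $\mu\le0$ renders that moot.
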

Theorem \ref{optimal policy WT} demonstrates that $\theta_n^*$ is the optimal policy for hypothesis testing, as it ensures control over the type I error rate under the null hypothesis and achieves the maximum rejection probability under the alternative hypothesis. This finding indicates that $T_{n, \lambda}(\theta_n^*)$ reaches the highest level of statistical efficacy. 
To ensure validity under the non-oracle test, we impose the following assumptions. 

\textbf{Assumption 1.} (Boundedness). (i) \textit{There exists a scalar $\varepsilon>0$ such that $\widehat{e}(X)\ge \varepsilon$}. (ii)\textit{ There exists some constant $M$ such that $\max(m_0(X), m_1(X))\le M$.}

\textbf{Assumption 2.} (Doubly Robust Specification). \textit{At least one of the outcome regression functions and the propensity score function is correctly specified. }

The boundedness condition in Assumption 1 (i) is also referred to as the positivity assumption in the RCM. The double robustness condition in Assumption 2 can be replaced by requiring the outcome regression functions and the propensity score function to satisfy certain convergence rate conditions \cite{chernozhukov2018double}. Typically, machine learning algorithms satisfy the above convergence rate. 

\begin{remark}
Corollary 2 of \cite{liu2020cauchy} demonstrates that the use of the Cauchy combination effectively mitigates the type I error rate. Moreover, Section 3 of \cite{liu2020cauchy} suggests that the statistical power is improved when the alternative hypothesis is true. 
\end{remark}

\begin{remark}
Under Assumption 2, we have $\mathbb{E}(\widehat{\mu})=\mathbb{E}(Y^{(1)}-Y^{(0)})$, indicating that $T_{n,\lambda}(\theta_n^*)$ satisfies the properties of Theorem \ref{convergence rate} and Theorem \ref{optimal policy WT}. 
\end{remark}

\begin{remark}
The choice of $K$ in cross fitting does not affect the asymptotic distribution of the estimator \cite{chernozhukov2018double, guo2021machine}. The simulations in Section \ref{Numerical Studies} show that a good performance is achievable with $K = 2$. 
\end{remark}

\section{Experiments}
\label{Numerical Studies}
In this section, we conduct detailed comparisons between the proposed method and other state-of-the-art methods via synthetic data (Section \ref{Simulations}) and real-world data (Section \ref{Real Data Analysis}).

\subsection{Simulations studies}
\label{Simulations}
First, we conduct simulations studies to investigate the finite sample performance of the proposed method. Consider the following synthetic data-generating process: 

The covariates $X_1$ and $X_2$ are two independent mean zero Gaussian distributions with $\text{var}(X_1)=\text{var}(X_2)=1$. Consider a randomized controlled trial, where the value of $A$ is independent of any covariates and follows a Bernoulli distribution with a success probability $\mathbb{P}(A=1)=0.5$. The outcome $Y$ is generated by $Y=F(X_1,X_2)+A\cdot G(X_1,X_2)+\varepsilon$, where $\varepsilon$ is a Gaussian noise with a mean of zero and a standard deviation $\sigma_{\varepsilon}$. 

\begin{table}[!h]
  \centering
  \caption{Functions and standard deviation considered in the simulation studies. }
  \resizebox{0.6\textwidth}{!}{
  \begin{tabular}{ccc|c}
    \toprule
    & $F(X_1,X_2)$ & $G(X_1,X_2)$ & $\sigma_{\varepsilon}$ \\
    \midrule
    \textrm{I}   & $2X_1+X_2$ & 0 & \multirow{2}{*}{0.5} \\
    \textrm{II}  & $X_1(X_2+1)$ & $(X_1+2X_2)/10$ & \\
    \textrm{III} & $X_1^2+X_2+1$ & $(X_1^2+X_2^2)/110$ & \multirow{2}{*}{0.6}\\
    \textrm{IV}  & $0.5X_1e^{X_2}$ & $(X_1+2X_2^2)/105$ & \\
    \bottomrule
    \end{tabular}}
    \label{config}
\end{table}

Table \ref{config} presents a more detailed illustration of the configurations. We consider
four different functions $F(X_1,X_2)$ and $G(X_1,X_2)$, including both linear and nonlinear forms with two different values for standard deviations $\sigma_{\varepsilon}$. These configurations collectively result in a total of 32 distinct simulation settings. The initial two functions $G_{\rm I}(X_1,X_2)$ and $G_{\rm II}(X_1,X_2)$ represent scenarios where the null hypothesis is true, while the latter two functions $G_{\rm III}(X_1,X_2)$ and $G_{\rm IV}(X_1,X_2)$ correspond to cases where the alternative hypothesis holds. Note that $G_{\rm II}$, $G_{\rm III}$, and $G_{\rm IV}$ are all heterogeneous, meaning that the treatment effect varies with the covariates. The effect of treatment is positive for all samples in $G_{\rm III}$, differing only in magnitude. However, a portion of the samples exhibit negative treatment effects in $G_{\rm IV}$, complicating the accurate testing of ATE. 
The empirical type I error rates and power of five distinct methods are evaluated: (i) Permuted WTAB, denoted by PWTAB; (ii) Weighted Two-Arm Bandit process, denoted by WTAB; (iii) $z$-test based on DML, denoted by $z$-DML; (iv) CUPED; (v) Difference-in-mean estimator, denoted by DIM, and the sample size is fixed to $n=20000$. 

\begin{figure}[!h]  
  \centering 
  \subfigure
  {\label{fg3var1}
   \includegraphics[width=0.4\textwidth]{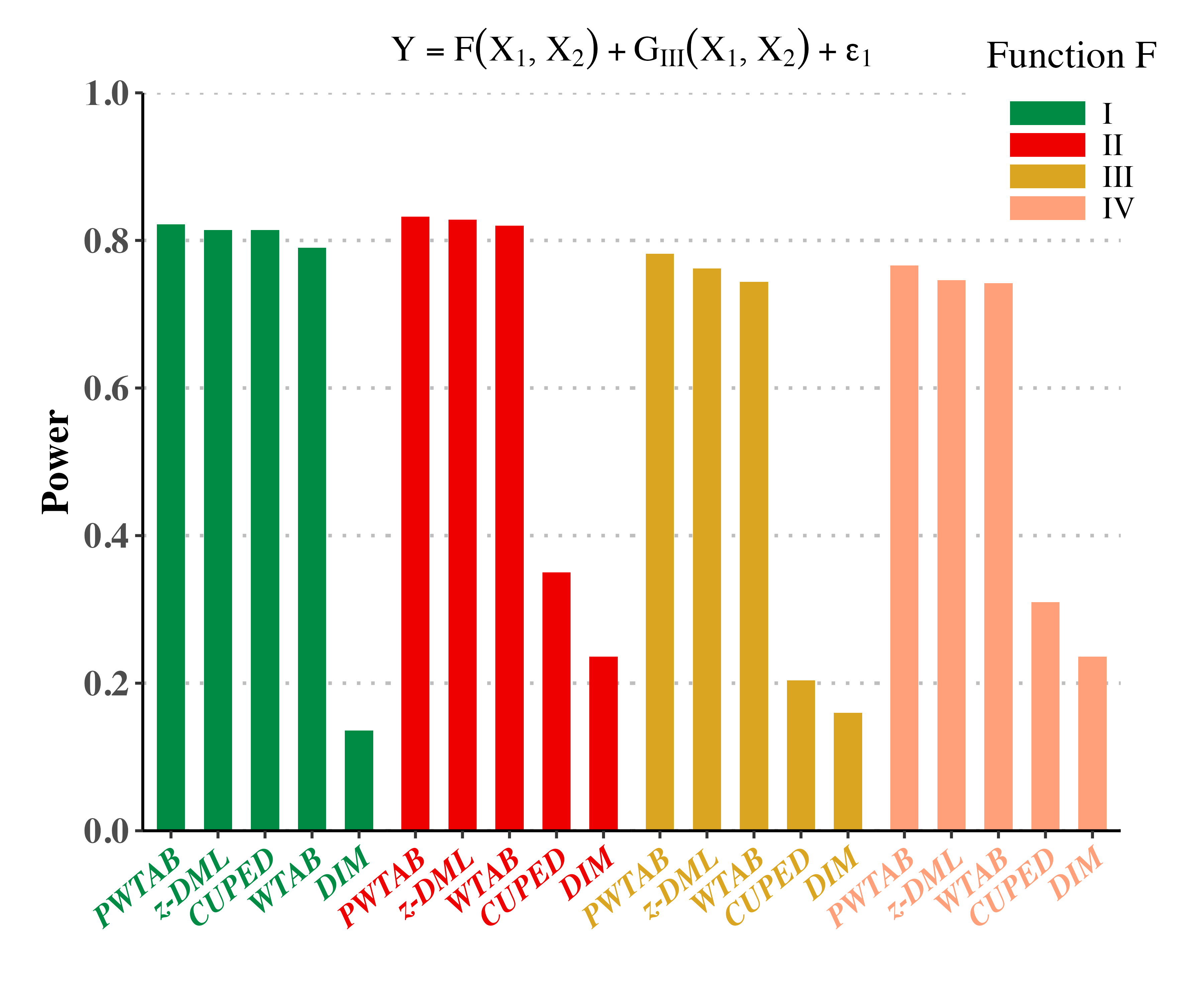}
  }
   \subfigure
  {\label{fg4var1}
   \includegraphics[width=0.4\textwidth]{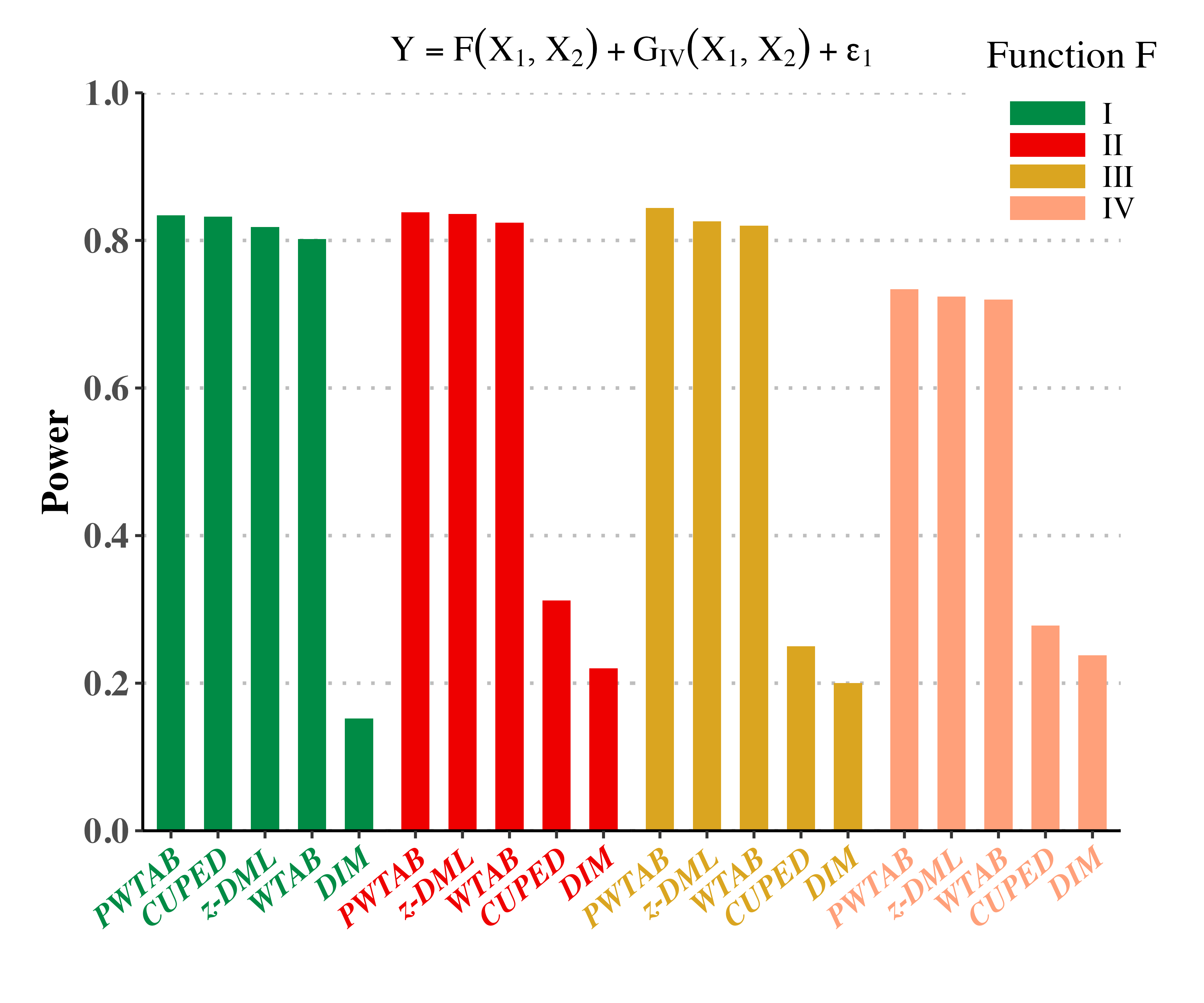}
  }\\
  \subfigure
  {\label{fg3var2}
   \includegraphics[width=0.4\textwidth]{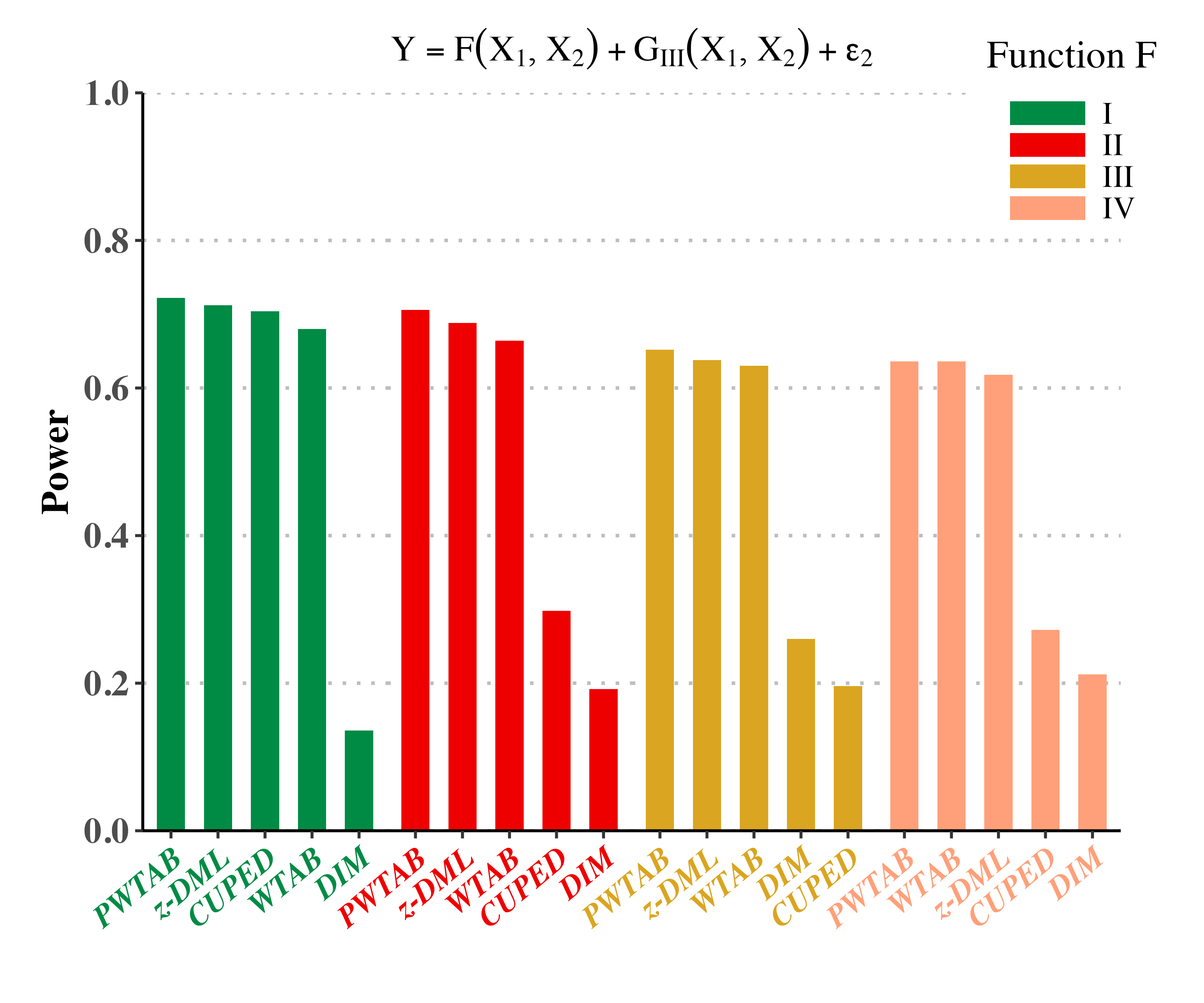}
  }
  \subfigure
  {\label{fg4var2}
   \includegraphics[width=0.4\textwidth]{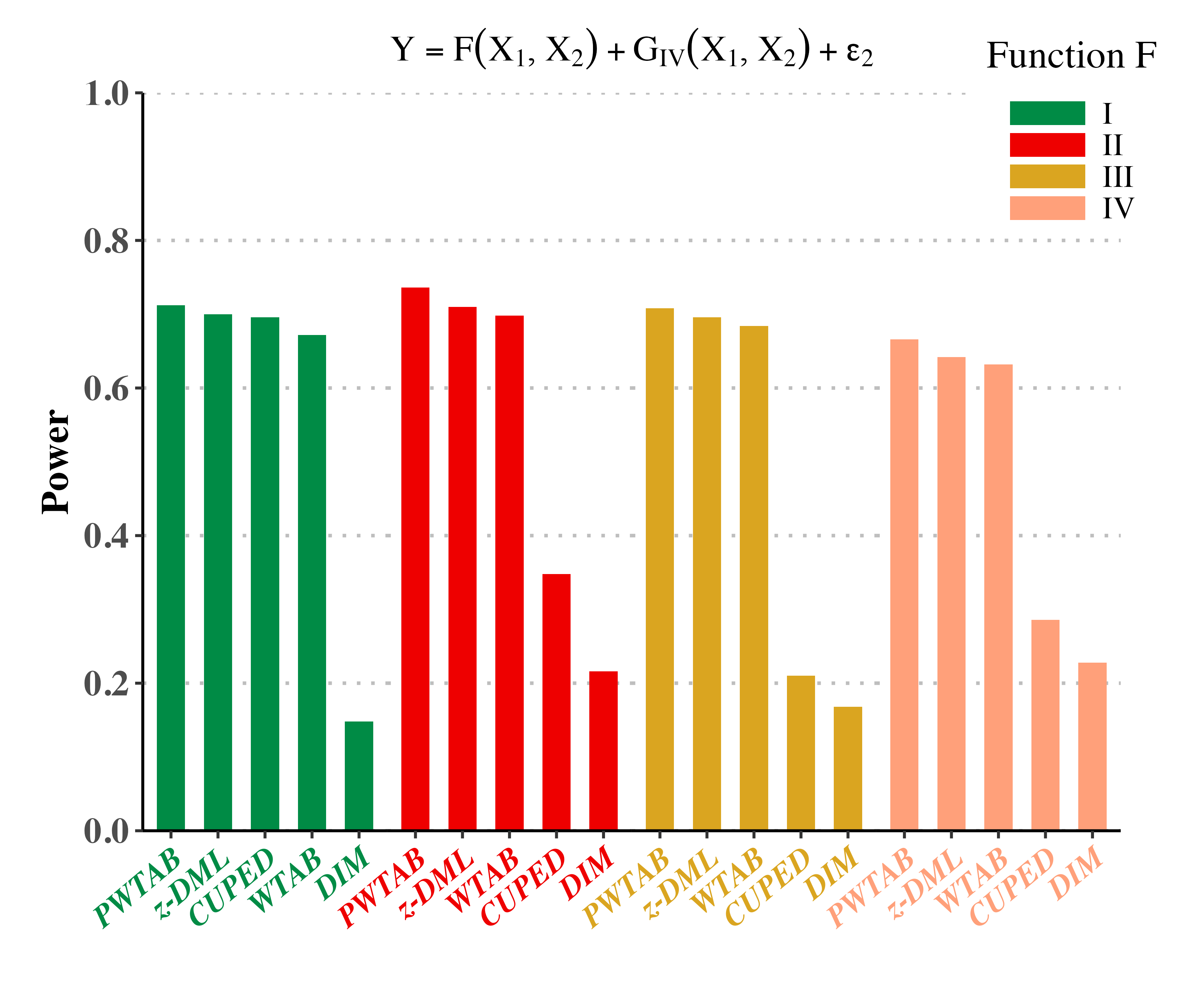}
  }

  \caption{Power comparisons of various methods across different settings, presented in descending order, with PWTAB consistently demonstrating the best performance.}
  \label{f-g-var1-result}
\end{figure}


\begin{table*}[tbp]
  \centering
  \caption{Type I error rates of five different statistics when null hypothesis holds. }
  \resizebox{0.95\textwidth}{!}{
  \begin{tabular}{ccccccc|ccccc}
    \toprule
    \multicolumn{2}{c}{$\mathcal{H}_0$} & \multicolumn{5}{c}{$G_{\text{I}}(X_1,X_2)$} & \multicolumn{5}{c}{$G_{\text{II}}(X_1,X_2)$} \\
    \midrule
    $\sigma_{\varepsilon}$ & $F(X_1,X_2)$ & PWTAB & WTAB & $z$-DML & CUPED & DIM & PWTAB & WTAB & $z$-DML & CUPED & DIM \\
    \multirow{4}{*}{0.5} & \textrm{I} & 0.046 & 0.052 & 0.042 & 0.034 & 0.052 & 0.030 & 0.030 & 0.022 & 0.028 & 0.032 \\
    & \textrm{II} & 0.046 & 0.050 & 0.042 & 0.040 & 0.040  & 0.056 & 0.054 & 0.052 & 0.053 & 0.056\\
    & \textrm{III} & 0.054 & 0.056 & 0.042 & 0.042 & 0.036 & 0.046 & 0.052 & 0.042 & 0.032 & 0.044\\
    & \textrm{IV} & 0.044 & 0.046 & 0.040 & 0.054 & 0.038 & 0.046 & 0.042 & 0.040 & 0.056 & 0.042\\
     \midrule
    \multirow{4}{*}{0.6} & \textrm{I} & 0.044 & 0.042 & 0.042 & 0.044 & 0.060 & 0.060 & 0.056 & 0.060 & 0.050 & 0.058\\
    & \textrm{II} & 0.040 & 0.048 & 0.036 & 0.028 & 0.042 & 0.040 & 0.050 & 0.032 & 0.054 & 0.038\\
    & \textrm{III} & 0.044 & 0.052 & 0.044 & 0.070 & 0.060 & 0.048 & 0.050 & 0.046 & 0.030 & 0.054\\
    & \textrm{IV} & 0.052 & 0.052 & 0.044 & 0.056 & 0.048 & 0.050 & 0.050 & 0.042 & 0.024 & 0.032\\
    \bottomrule
    \end{tabular}}
    \label{null table}
\end{table*}

The results for both the null and alternative hypotheses are presented in Table \ref{null table} and Figure \ref{f-g-var1-result}, respectively. As evidenced by the results presented in Table \ref{null table}, the five methods demonstrate the capacity to control the type I error rate under a range of conditions, including different functions and noise. In terms of power, Figure \ref{f-g-var1-result} presents a comparison of different methods, with fixed $G(X_1,X_2)$ and $\sigma_{\varepsilon}$ in each subfigure. When confronted with a linear function $F(X_1,X_2)$, CUPED, $z$ -DML, WTAB, and PWTAB exhibit comparable power, and PWTAB shows superior performance. In the case of a nonlinear function, the efficacy of CUPED is significantly reduced. In contrast, $z$-DML, WTAB, and PWTAB retain their effectiveness, demonstrating greater adaptability to complex functions, greater robustness, and improved power. Furthermore, PWTAB consistently outperforms $z$ -DML, since WTAB and $z$ -DML exhibit comparable power in almost all cases. As a result, a higher statistical power is achieved after the aggregation of the $p$-values. It is important to emphasize that the similarity in performance between WTAB and the test $z$ does not contradict our theory. In fact, we construct $T_{n,\lambda}(\theta_n)$ with the objective of maximizing tail probabilities, ensuring that $T_{n,\lambda}(\theta_n^*)$ achieves the largest statistical power in one-sided hypothesis testing with a two-tailed rejection region. 

\textbf{More ML-based simulation studies}. As discussed previously, additional simulation studies were performed to evaluate the effectiveness of the proposed method, including the use of another machine learning algorithm XGBoost and the stacking of the ensemble learning approach. A more detailed comparison can be found in the Appendix \ref{more simulations}, and PWTAB consistently outperforms other methods. 

\subsection{Real Data Analysis}
\label{Real Data Analysis}
In this section, the application of the proposed method is demonstrated through an analysis of three real data sets obtained from a world-leading ride-sharing company. Due to privacy considerations, we refer to them as data sets A, B, and C. The company leverages different exposures within the app to incentivize users to consume. Specifically, the dataset is collected through a randomized controlled trial. The company randomly decides whether or not to expose new marketing strategies to target consumers. To improve returns, it is crucial for the company to accurately assess the long-term impacts of various strategies.

\begin{figure}[!htbp]  
  \centering 
  {
   \includegraphics[width=0.6\textwidth]{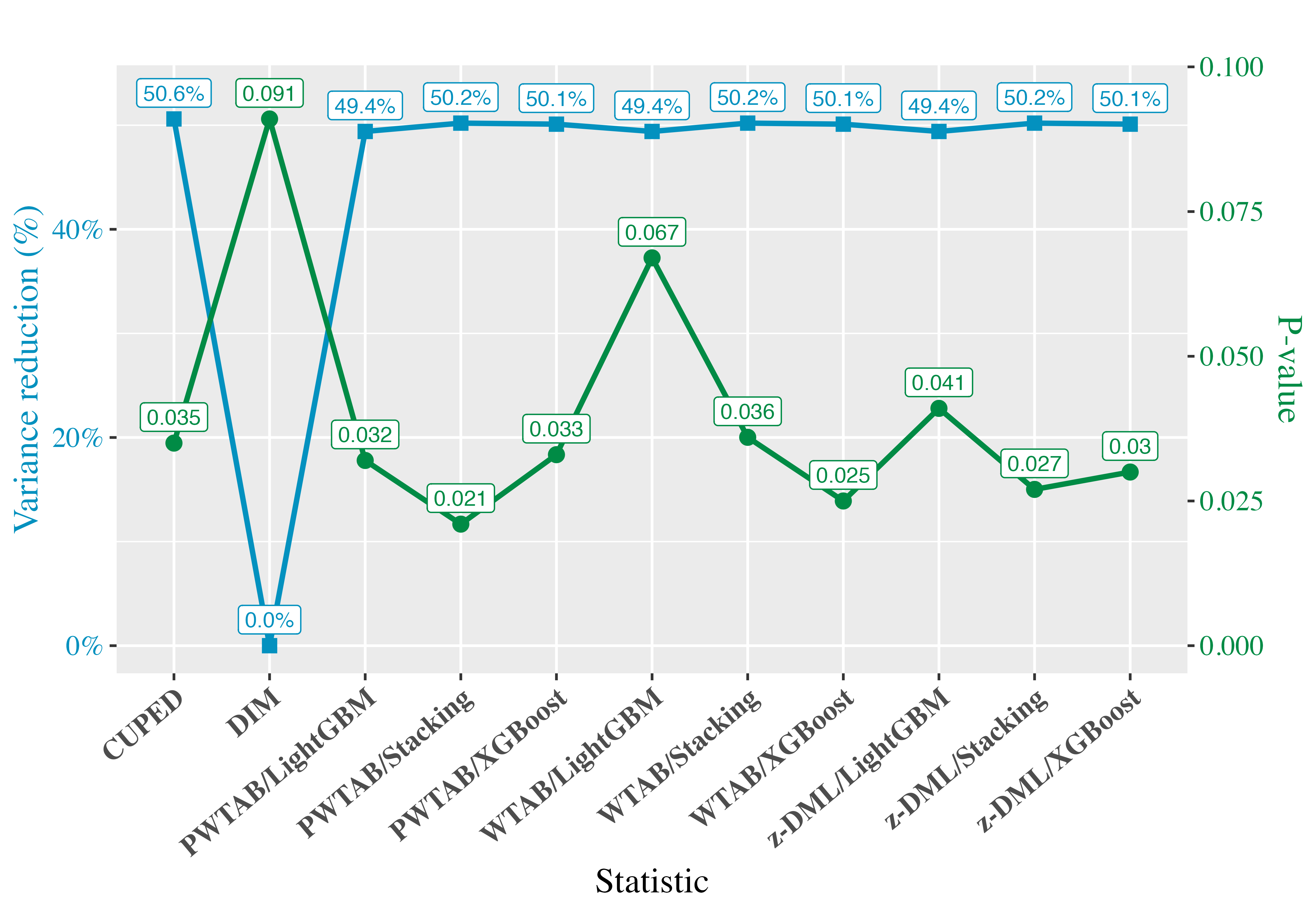}
  }
  \caption{Variance reduction compared to DIM (left $y$-axis) and $p$-values of statistics (right $y$-axis). }
  \label{real data}
\end{figure}

The numerical results are reported in Figure \ref{real data} and Table \ref{real data B}. It can be seen in Figure \ref{real data} that, compared to DIM, both the proposed method and CUPED exhibit a comparable reduction in variance. However, PWTAB yields smaller $p$-values regardless of the machine learning algorithm used, indicating that PWTAB offers superior statistical power. Additionally, the use of stacking allows for the integration of the strengths of different machine learning algorithms, leading to more accurate and robust results. This conclusion is further supported by the results presented in Table \ref{real data B} (a), where the $p$-value of CUPED fails to reach the threshold necessary to reject the null hypothesis, whereas PWTAB successfully meets this criterion, demonstrating its superior efficacy.

\begin{table}[!tbp]
\centering
\caption{Hypothesis testing $p$-values obtained based on different machine learning algorithms and statistics for real-world datasets. }
\begin{minipage}{0.48\textwidth}
    \caption*{(a) $p$-values on dataset B}
    \centering
    \resizebox{\textwidth}{!}{
  \begin{tabular}{cccc||cc}
    \midrule
    Method & PWTAB & WTAB & $z$-DML & CUPED & DIM \\
    \midrule
    LightGBM & 0.044 & 0.043 & 0.055 & 0.053 & 0.228 \\
    XGBoost & 0.044 & 0.069 & 0.052 & 0.053 & 0.228 \\
    Stacking & 0.032 & 0.055 & 0.053 & 0.053 & 0.228 \\
    \midrule
    \end{tabular}}
    \label{real data B}
\end{minipage}
\begin{minipage}{0.48\textwidth}
    \caption*{(b) $p$-values on dataset C}
    \centering
    \resizebox{\textwidth}{!}{
  \begin{tabular}{cccc||cc}
    \midrule
    Method & PWTAB & WTAB & $z$-DML & CUPED & DIM \\
    \midrule
    LightGBM & 0.837 & 0.839 & 0.713 & 0.745 & 0.850 \\
    XGBoost & 0.909 & 0.491 & 0.726 & 0.745 & 0.850 \\
    Stacking & 0.807 & 0.827 & 0.731 & 0.745 & 0.850 \\
    \midrule
    \end{tabular}}
\end{minipage}
\label{tab:tables}
\end{table}

Given that real-world data distributions are often difficult to replicate using purely synthetic data, we additionally construct a semi-synthetic dataset based on real-world data. Following the approach proposed in \cite{kohavi2020trustworthy}, we generate synthetic data based on real-world observations. The corresponding numerical results are presented in Table \ref{tab:method_comparison}. We can see that the proposed PWTAB method effectively controls type I error while achieving the highest statistical power among all compared methods, even in scenarios involving high-variance data or challenging distributional characteristics. This demonstrates that the proposed statistic substantially enhances the sensitivity of the A/B testing and exhibits strong robustness. 

\begin{table}[htbp]
    \centering
    \caption{Type I error rates and statistical power based on synthetic data derived from real-world dataset.}
    \resizebox{0.8\textwidth}{!}{
    \begin{tabular}{llccccc}
    \toprule
    Method & Metric & PWTAB & WTAB & $z$-DML & CUPED & DIM \\
    \midrule
    LightGBM  & Type I Error & 0.052 & 0.052 & 0.044 & 0.050 & 0.048 \\
             & Power         & 0.758 & 0.738 & 0.744 & 0.740 & 0.498 \\
    \midrule
    XGBoost   & Type I Error & 0.052 & 0.034 & 0.046 & 0.050 & 0.048 \\
             & Power         & 0.758 & 0.738 & 0.746 & 0.740 & 0.498 \\
    \midrule
    Stacking  & Type I Error & 0.052 & 0.052 & 0.046 & 0.050 & 0.048 \\
             & Power         & 0.764 & 0.732 & 0.746 & 0.740 & 0.498 \\
    \bottomrule
    \end{tabular}}
    \label{tab:method_comparison}
\end{table}

\section{Conclusion and Future Works}
\label{Conclusion}

In this paper, we propose a new maximum probability-driven TAB process by weighting the mean volatility statistic for a more powerful A/B testing. The proposed method constructs counterfactual results through doubly robust estimation, controls the type I error rate, and improves statistical power by incorporating weights and permutation. Our numerical results in both simulation and real-world data analysis demonstrate the effectiveness of the proposed approach. By increasing the sensitivity of randomized controlled trials, our method allows for more precise value assessments and the ability to conduct experiments on smaller populations. Although machine learning provides a general approach to capture the distribution characteristics of data, it is not refined enough compared to explicit models which should be explored in the ride-sharing company. We leave the task of modeling the data distribution to future work.


\section*{Acknowledgements}

We thank the anonymous referees and the meta reviewer for their constructive comments, which have led to a significant improvement of the earlier version of this article. Xiaodong Yan was supported by the National Key R$\&$D Program of China (No. 2023YFA1008701)  and the National Natural Science Foundation of China (No. 12371292) and CCF-DiDi GAIA Collaborative Research Funds for Young Scholars. 

\section*{Impact Statement}

This paper presents work whose goal is to improve the sensitivity, shorten the cycle time and reduce the cost of A/B testing. There are many potential societal consequences of our work, none which we feel must be specifically highlighted here.


\bibliography{main}
\bibliographystyle{plain}

\newpage
\appendix
\onecolumn
\section{Additional experiments}
\label{more simulations}

Table \ref{additional simulation table} presents the empirical type I error rates for all methods in the synthetic data-based experiment. It further validates the effectiveness of the proposed methods, with findings aligning closely with those in Section \ref{Numerical Studies}. Specifically, under both sharp null hypothesis and null hypothesis with heterogeneous treatment effects, as well as different variance of noise, the proposed method demonstrates robust control of type I error rates, effectively preventing overestimation of the average treatment effect.

\begin{table*}[htbp]
  \centering
  \caption{More simulation results of type I error rates of five different statistics when null hypothesis holds. }
  \resizebox{\textwidth}{!}{
  \begin{tabular}{cccccccc|ccccc}
    \toprule
    & \multicolumn{2}{c}{$\mathcal{H}_0$} & \multicolumn{5}{c}{$G_{\text{I}}(X_1,X_2)$} & \multicolumn{5}{c}{$G_{\text{II}}(X_1,X_2)$} \\
    \midrule
    Methods & $\sigma_{\varepsilon}$ & $F(X_1,X_2)$ & PWTAB & WTAB & $z$-DML & CUPED & DIM & PWTAB & WTAB & $z$-DML & CUPED & DIM \\
    \multirow{4}{*}{LightGBM} & \multirow{4}{*}{0.5} & \textrm{I} & 0.046 & 0.052 & 0.042 & 0.034 & 0.052 & 0.030 & 0.030 & 0.022 & 0.028 & 0.032 \\
    & & \textrm{II} & 0.046 & 0.050 & 0.042 & 0.040 & 0.040 & 0.056 & 0.054 & 0.052 & 0.053 & 0.056\\
    & & \textrm{III} & 0.054 & 0.056 & 0.042 & 0.042 & 0.036 & 0.046 & 0.052 & 0.042 & 0.032 & 0.044\\
    & & \textrm{IV} & 0.044 & 0.046 & 0.040 & 0.054 & 0.038 & 0.046 & 0.042 & 0.040 & 0.056 & 0.042\\
    \midrule
    \multirow{4}{*}{LightGBM} & \multirow{4}{*}{0.6} & \textrm{I} & 0.044 & 0.042 & 0.042 & 0.044 & 0.060 & 0.060 & 0.056 & 0.060 & 0.050 & 0.058\\
    & & \textrm{II} & 0.040 & 0.048 & 0.036 & 0.028 & 0.042 & 0.040 & 0.050 & 0.032 & 0.054 & 0.038\\
    & & \textrm{III} & 0.044 & 0.052 & 0.044 & 0.070 & 0.060 & 0.048 & 0.050 & 0.046 & 0.030 & 0.054\\
    & & \textrm{IV} & 0.052 & 0.052 & 0.044 & 0.056 & 0.048 & 0.050 & 0.050 & 0.042 & 0.024 & 0.032\\
    \midrule
    \multirow{4}{*}{XGBoost} & \multirow{4}{*}{0.5} & \textrm{I} & 0.052 & 0.036 & 0.046 & 0.034 & 0.052 & 0.032 & 0.042 & 0.026 & 0.028 & 0.032\\
    & & \textrm{II} & 0.050 & 0.050 & 0.044 & 0.040 & 0.040 & 0.044 & 0.050 & 0.042 & 0.052 & 0.056\\
    & & \textrm{III} & 0.046 & 0.052 & 0.044 & 0.042 & 0.036 & 0.052 & 0.046 & 0.048 & 0.032 & 0.044\\
    & & \textrm{IV} & 0.038 & 0.044 & 0.032 & 0.054 & 0.038 & 0.058 & 0.060 & 0.056 & 0.056 & 0.042\\
    \midrule
    \multirow{4}{*}{XGBoost} & \multirow{4}{*}{0.6} & \textrm{I} & 0.048 & 0.046 & 0.040 & 0.044 & 0.060 & 0.054 & 0.062 & 0.048 & 0.050 & 0.058\\
    & & \textrm{II} & 0.042 & 0.044 & 0.040 & 0.028 & 0.042 & 0.044 & 0.026 & 0.036 & 0.054 & 0.038\\
    & & \textrm{III} & 0.068 & 0.058 & 0.058 & 0.070 & 0.060 & 0.046 & 0.054 & 0.042 & 0.030 & 0.054\\
    & & \textrm{IV} & 0.064 & 0.062 & 0.046 & 0.056 & 0.048 & 0.044 & 0.044 & 0.040 & 0.024 & 0.032 \\
    \midrule
    \multirow{4}{*}{Stacking} & \multirow{4}{*}{0.5} & \textrm{I} & 0.044 & 0.040 & 0.042 & 0.034 & 0.052 & 0.032 & 0.032 & 0.028 & 0.028 & 0.032\\
    & & \textrm{II} & 0.056 & 0.048 & 0.048 & 0.040 & 0.040 & 0.054 & 0.048 & 0.044 & 0.052 & 0.056\\
    & & \textrm{III} & 0.054 & 0.046 & 0.046 & 0.042 & 0.036 & 0.056 & 0.050 & 0.040 & 0.032 & 0.044\\
    & & \textrm{IV} & 0.044 & 0.040 & 0.038 & 0.054 & 0.038 & 0.058 & 0.060 & 0.050 & 0.056 & 0.042\\
    \midrule
    \multirow{4}{*}{Stacking} & \multirow{4}{*}{0.6} & \textrm{I} & 0.044 & 0.048 & 0.040 & 0.044 & 0.060 & 0.060 & 0.056 & 0.060 & 0.050 & 0.058\\
    & & \textrm{II} & 0.040 & 0.044 & 0.038 & 0.028 & 0.042 & 0.026 & 0.038 & 0.028 & 0.054 & 0.038\\
    & & \textrm{III} & 0.054 & 0.052 & 0.048 & 0.070 & 0.060 & 0.054 & 0.062 & 0.044 & 0.030 & 0.054\\
    & & \textrm{IV} & 0.060 & 0.066 & 0.048 & 0.056 & 0.048 & 0.046 & 0.046 & 0.040 & 0.024 & 0.032\\
    \bottomrule
    \end{tabular}}
    \label{additional simulation table}
\end{table*}

Figure \ref{more figure} presents the power comparison for all methods in the synthetic data-based. The comparative analysis reveals two key findings regarding the performance characteristics. Horizontally, under the same machine learning algorithms, PWTAB consistently demonstrates superior statistical power, indicating its optimal performance. Vertically, when employing fixed statistics, the ensemble learning-based stacking approach effectively integrates the strengths of different machine learning algorithms, achieving comparable or even enhanced performance relative to the current best methods.

\section{Asymptotic distribution and Proofs}

Let $\left\{B_s\right\}_{s \geq 0}$ be the standard Brownian motion on $(\Omega, \mathcal{F}, P)$ and $\left(\mathcal{F}_s^*\right)_{s \geq 0}$ be the natural filtration generated by $\left\{B_s\right\}_{s \geq 0}$.

For any integer $m \geq 1$, let $C_b^m(\mathbb{R})$ denote the set of functions on $\mathbb{R}$ that have bounded derivatives up to order $m$. Let $\varphi \in C_b^3(\mathbb{R})$ be an even function, for any $\delta \in \mathbb{R}$, $\beta>0$ and $t \in[0,1)$, we define $F_1(x)=\varphi(x)$, and
\begin{equation}\label{e22}
F_t(x)=\int_{\mathbb{R}} \varphi(z) q_{\delta, \beta}(t, x, z) \mathrm{d} z,
\end{equation}
where
$$
q_{\delta, \beta}(t, x, z)=\frac{1}{\beta \sqrt{2 \pi(1-t)}} \mathrm{e}^{-\frac{(x-z)^2-2 \delta (1-t)(|z|-|x|)+\delta^2 (1-t)^2}{2(1-t)\beta^2}}-\frac{\delta}{\beta^2} \mathrm{e}^{\frac{2 \delta|z|}{\beta^2}} \Phi\left(-\frac{|z|+|x|+\delta(1-t)}{\beta \sqrt{1-t}}\right) .
$$
Here, the dependence of $F_t$ on $\varphi$, $\delta$, and $\beta$ is not explicitly noted for simplicity.
It is clear from the definition that
$$
F_0(0)=E[\varphi (\eta)],
$$
where $\eta \sim \mathcal{B}(\delta, \beta)$ is a spike distribution.

\begin{figure*}[htbp] 
  \centering 
  \subfigure
  {
   \includegraphics[width=0.48\textwidth, height=7cm]{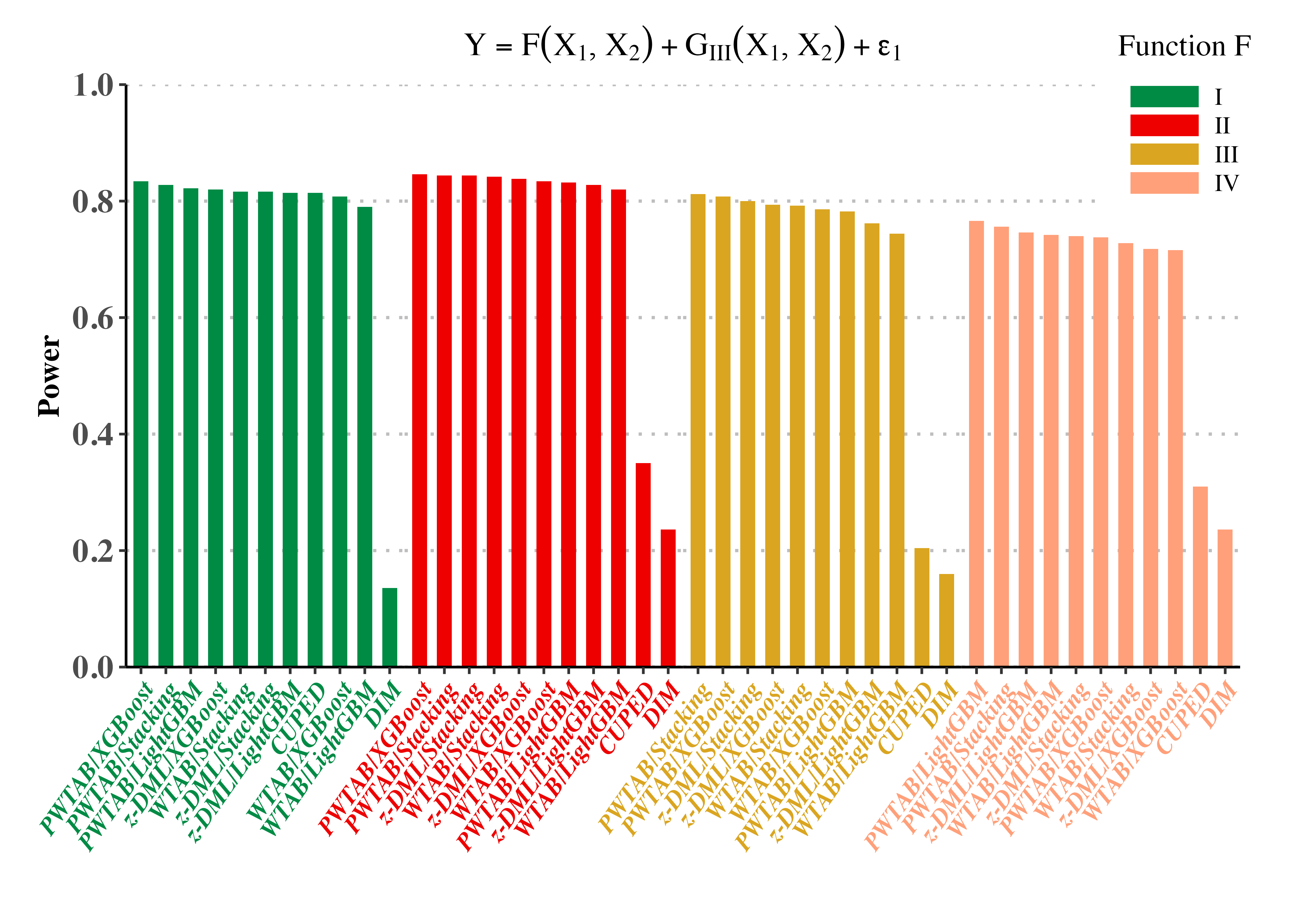}
  }
  \subfigure
  {
   \includegraphics[width=0.48\textwidth, height=7cm]{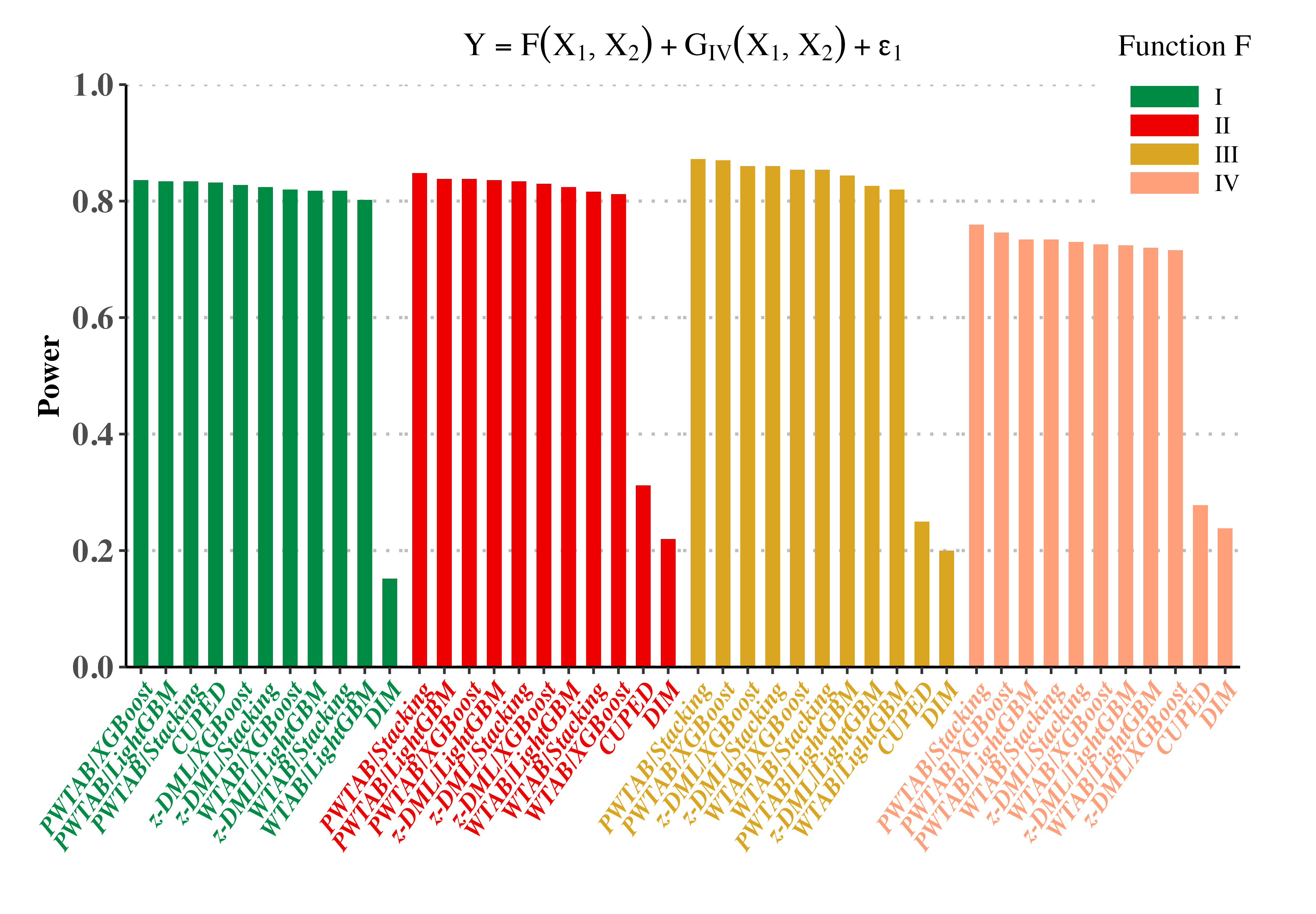}
  }
  {
   \includegraphics[width=0.48\textwidth, height=7cm]{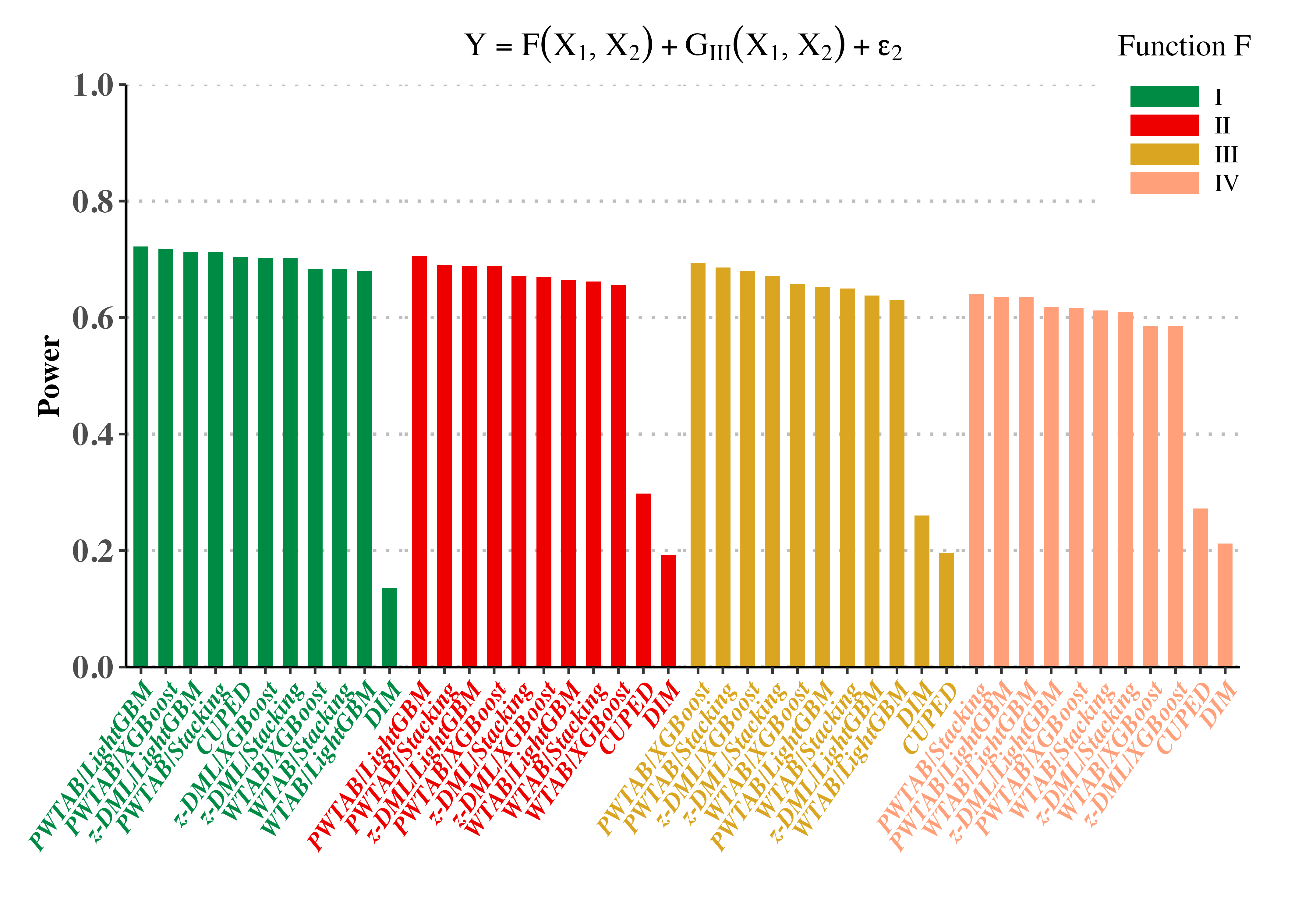}
  }
  {
   \includegraphics[width=0.48\textwidth, height=7cm]{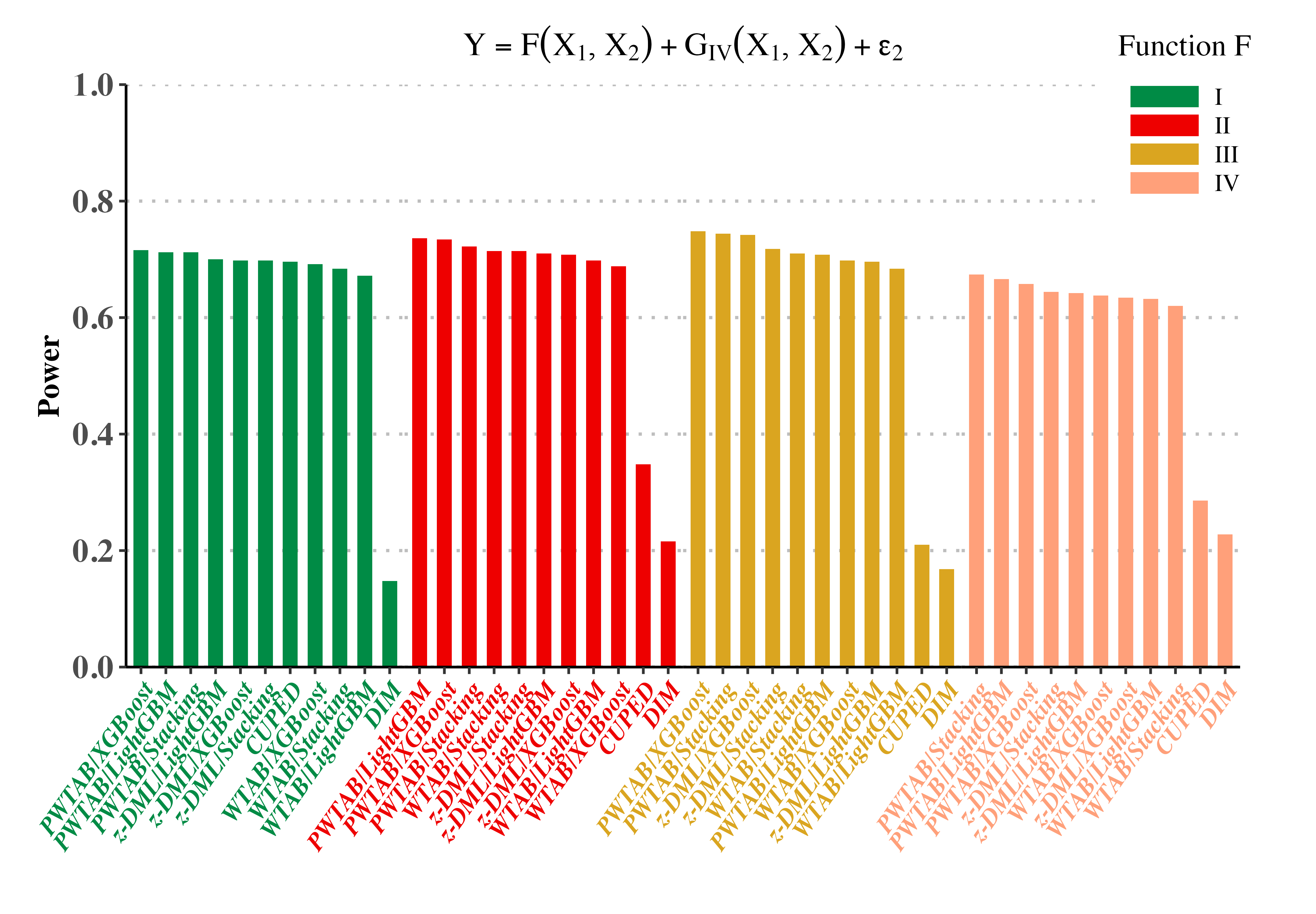}
  }
    \caption{More power comparisons of various methods across different settings, presented in descending order. }
  \label{more figure}
\end{figure*}

The following lemma lists some analytic properties of the family $\left\{F_t(x)\right\}_{t \in[0,1]}$.
\begin{lemma}\label{le6}
Let the number of dots on top of a function denote the same order derivatives with respect to $x$.

\noindent(1) For each fixed $t \in[0,1]$, $F_t(x) \in C_b^2(\mathbb{R})$. In addition, the first and second order derivatives of $F_t(x)$ are uniformly bounded for all $0 \leq t \leq 1$ and $x$.

\noindent(2) The family $\{\ddot{F}_t(x)\}_{t \in[0,1]}$ is uniformly Lipschitz, i.e., there exists a constant $L$, independent with $t$, such that
$$
\left|\ddot{F}_t\left(x_1\right)-\ddot{F}_t\left(x_2\right)\right| \leq L\left|x_1-x_2\right|, \quad x_1, x_2 \in \mathbb{R} .
$$
(3) For any $t \in[0,1]$, $F_t(x)$ is an even function. Furthermore, if for any $x \in \mathbb{R}$,
$$
\operatorname{sgn}(\dot{\varphi}(x))= \pm \operatorname{sgn}(x),
$$
then
$$
\operatorname{sgn}(\dot{F}_t(x))= \pm \operatorname{sgn}(x), x \in \mathbb{R} .
$$
(4) If $\operatorname{sgn}(\dot{\varphi}(x))= \pm \operatorname{sgn}(x)$ for all $x \in \mathbb{R}$, then
$$
\sum_{m=1}^n \sup _{x \in \mathbb{R}}\left|F_{\frac{m-1}{n}}(x)-F_{\frac{m}{n}}(x) \mp \frac{\delta}{n}\left| \dot{F}_{\frac{m}{n}}(x)\right|-\frac{\beta^2}{2 n} \ddot{F}_{\frac{m}{n}}(x)\right|= O\left(\frac{\beta|\delta|}{n}+\frac{\beta}{\sqrt{n}}\right) .
$$
\end{lemma}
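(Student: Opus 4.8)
The plan is to recognize $q_{\delta,\beta}(t,x,z)$ as the transition density (to terminal time $1$) of the limiting controlled diffusion $\mathrm{d}\xi_s=\delta\,\operatorname{sgn}(\xi_s)\,\mathrm{d}s+\beta\,\mathrm{d}B_s$, so that $F_t(x)=E[\varphi(\xi_1)\mid \xi_t=x]$ and, equivalently, $F_t$ solves the terminal-value problem $\partial_t F_t(x)+\delta\,\operatorname{sgn}(x)\dot F_t(x)+\tfrac{\beta^2}{2}\ddot F_t(x)=0$ with $F_1=\varphi$. I would verify that $q_{\delta,\beta}$ is the fundamental solution by differentiating the explicit expression directly, using $\partial_x|x|=\operatorname{sgn}(x)$ away from the origin. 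This PDE viewpoint organizes all four claims; the single source of difficulty throughout is that the drift coefficient $\operatorname{sgn}(x)$ is discontinuous at $0$, and that the kernel degenerates to a point mass as $t\to1$.

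\textbf{Parts (1) and (2): regularity and uniform bounds.} For $t<1$ the kernel is smooth in $x$ off $\{x=0\}$, so I would differentiate under the integral sign to get $F_t\in C^2$. To obtain bounds that are \emph{uniform} in $t$ (including the limit $t\to1$ where the kernel tends to $\delta_x$), I would integrate by parts to transfer $\partial_x$ onto $\varphi$: the translation structure of the Gaussian factor gives $\|\dot F_t\|_\infty\le\|\dot\varphi\|_\infty$ and $\|\ddot F_t\|_\infty\lesssim\|\ddot\varphi\|_\infty$, after checking that the kink of the kernel at $z=0$ produces no boundary term (the contributions from the two half-lines cancel by the structure of $q$ together with evenness). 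For part (2) I would spend the third derivative of $\varphi\in C_b^3$, writing $\ddot F_t$ against $\ddot\varphi$ after integration by parts and estimating $|\ddot F_t(x_1)-\ddot F_t(x_2)|$ through the Lipschitz dependence of $x\mapsto q(t,x,\cdot)$, uniformly in $t$. I expect Lipschitz regularity to be exactly the best attainable — not $C^1$ — since $\ddot F_t$ inherits a kink at $x=0$ from the drift discontinuity, consistent with $\dot F_t(0)=0$ forced by evenness.

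\textbf{Part (3): evenness and sign preservation.} Evenness follows by symmetry of the kernel: the exponent depends on $(x,z)$ only through $(x-z)^2$, $|x|$, $|z|$, which are all invariant under the simultaneous flip $(x,z)\mapsto(-x,-z)$, so $q(t,-x,-z)=q(t,x,z)$; combining this with $\varphi$ even and the change of variables $z\mapsto-z$ gives $F_t(-x)=F_t(x)$. For sign preservation I would reduce, using evenness, to monotonicity of $F_t$ on $(0,\infty)$. Since $\operatorname{sgn}(\dot\varphi)=\pm\operatorname{sgn}(x)$ means $\varphi$ is, up to an overall sign, monotone in $|z|$, it suffices to show that the law of $|\xi_1^{t,x}|$ is stochastically monotone in $|x|$. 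I would establish this by a synchronous coupling of two copies started at $x_1,x_2$ with $|x_1|\le|x_2|$ and driven by the same Brownian motion: because the drift pushes symmetrically relative to the origin, the ordering of $|\cdot|$ is preserved, whence $F_t(x)=E[\tilde\varphi(|\xi_1^{t,x}|)]$ is monotone in $|x|$ and $\operatorname{sgn}(\dot F_t)=\pm\operatorname{sgn}(x)$. A maximum-principle argument on the equation satisfied by $\dot F_t$ away from $0$ is an alternative.

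\textbf{Part (4) and the main obstacle.} Here I would Taylor-expand in $t$ with step $1/n$ and substitute the PDE: $F_{(m-1)/n}(x)-F_{m/n}(x)=-\tfrac1n\partial_t F_{m/n}(x)+\tfrac1{2n^2}\partial_t^2 F_{s_m}(x)=\pm\tfrac{\delta}{n}|\dot F_{m/n}(x)|+\tfrac{\beta^2}{2n}\ddot F_{m/n}(x)+\tfrac1{2n^2}\partial_t^2 F_{s_m}(x)$, where part (3) converts $\operatorname{sgn}(x)\dot F_{m/n}$ into $|\dot F_{m/n}|$. The quantity to be summed is therefore the remainder $\tfrac1{2n^2}\partial_t^2 F_{s_m}$. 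The hard part is that $\partial_t^2 F_t$ is \emph{not} bounded uniformly up to $t=1$, since the kernel degenerates there, so a crude $n\cdot O(n^{-2})$ bound fails; the $\beta/\sqrt n$ term is precisely the near-terminal contribution. I would split the sum at a level such as $t=1-c/n$, controlling the bulk with the uniform $C^2$ estimates from (1)–(2) and the smoothing of the kernel, and bounding the remaining $O(1)$ steps directly from $\|\ddot F_t\|_\infty$ and the explicit short-time behavior of $q$, where the $\beta$-scale Gaussian spread yields $\beta/\sqrt n$ and the drift cross-terms yield $\beta|\delta|/n$. The delicate uniform control across the drift discontinuity near $t=1$ is the crux, and the evenness and sign structure of part (3) are what keep those estimates clean.
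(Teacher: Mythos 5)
Your parts (1)--(3) are essentially sound and close in substance to the paper's argument: the paper also obtains the uniform $C_b^2$ bounds and the Lipschitz property of $\ddot F_t$ by pushing derivatives onto $\varphi$ (it differentiates the explicit kernel directly and computes $\dddot F_t$ on each half-line, rather than integrating by parts, but this is the same mechanism), and it proves evenness via exactly your symmetry $q_{\delta,\beta}(t,-x,-z)=q_{\delta,\beta}(t,x,z)$. For the sign-preservation claim in (3), the paper simply reads the sign off the explicit formula for $\dot F_t(x)$, which is $\operatorname{sgn}(x)$ times an integral of $\dot\varphi(z)$ against the nonnegative weight $\mathrm{e}^{-\frac{(z-\delta(1-t)-|x|)^2}{2(1-t)\beta^2}}\bigl[1-\mathrm{e}^{-\frac{2|x|z}{(1-t)\beta^2}}\bigr]$ on $(0,\infty)$; your coupling proposal is a workable alternative, but note the naive synchronous coupling of the signed processes does not obviously preserve the ordering of $|\cdot|$ --- you would need to fold via Tanaka's formula to reflected Brownian motion with drift $\delta$ and couple the folded processes.

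The genuine gap is in part (4). Your plan rests on a second-order Taylor expansion in $t$ with remainder $\tfrac{1}{2n^2}\partial_t^2F_{s_m}$, with the bulk controlled by bounds on $\partial_t^2F_t$ and only a near-terminal layer producing $\beta/\sqrt n$. This fails for two reasons. First, $\partial_t F_t(x)=\mp\delta\,\lvert\dot F_t(x)\rvert-\tfrac{\beta^2}{2}\ddot F_t(x)$ is only Lipschitz in $x$ at the origin for \emph{every} $t<1$ (both $\lvert\dot F_t\rvert$ and $\ddot F_t$ have kinks at $x=0$, exactly as you observe in (2)), so $\partial_t^2F_t$ does not exist classically at $x=0$ and cannot be bounded uniformly over the $\sup_x$ appearing in the statement --- the degeneracy is not confined to $t\to1$, and splitting at $t=1-c/n$ does not repair it. Second, your error accounting is off: with $\varphi\in C_b^3$ and $\ddot F_t$ merely Lipschitz, the honest per-step error is of order $n^{-3/2}$ at every step, so the $\beta/\sqrt n$ term is a \emph{bulk} effect accumulated over all $n$ steps (it is the parabolic, H\"older-$1/2$ modulus in time), not a boundary-layer contribution. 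The paper avoids time-differentiation altogether: it uses the SDE representation $F_t(x)=\mathbb{E}\bigl[\varphi\bigl(\beta Y_1^{t,x/\beta}\bigr)\bigr]$ (citing existence of a strong solution despite the discontinuous drift) and the exact one-step Markov identity $F_{\frac{m-1}{n}}(x)=\mathbb{E}\bigl[F_{\frac{m}{n}}\bigl(\beta Y_{\frac{m}{n}}^{\frac{m-1}{n},x/\beta}\bigr)\bigr]$, then applies It\^o's formula in the space variable only, converts $\operatorname{sgn}(Y)\dot F$ into $\lvert\dot F\rvert$ via part (3), and bounds the discrepancy using the uniform Lipschitz bounds on $\dot F_t,\ddot F_t$ together with $\mathbb{E}\bigl[\sup_s\lvert\beta Y_s-x\rvert\bigr]\le\beta\bigl(\lvert\delta\rvert/n+C/\sqrt n\bigr)$, which yields $C\beta(\lvert\delta\rvert/n+1/\sqrt n)$ directly. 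To salvage your route you would have to abandon the second-order remainder and instead do a first-order expansion combined with a uniform H\"older-$1/2$-in-time estimate for $\dot F_t$ and $\ddot F_t$ --- at which point you have effectively reconstructed the paper's probabilistic step.
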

\begin{proof}
We prove the lemma in numerical order.

(1) For $t=1$, $F_1(x) \equiv \varphi(x)$ and the result is trivial.

Next, we assume that $0 \leq t<1$. Since $\varphi$ is an even function, with the definition of $F_t(x)$, it follows by direct calculation that
$$
\begin{aligned}
\dot{F}_t(x)= & \int_0^{\infty} \frac{s g n(x)}{\beta \sqrt{2 \pi(1-t)}} \dot{\varphi}(z) \mathrm{e}^{-\frac{(z-\delta (1-t)-|x|)^2}{2(1-t) \beta^2}}\left[1-\mathrm{e}^{-\frac{2|x| z}{(1-t) \beta^2}}\right] \mathrm{d} z,
\end{aligned}
$$
\begin{equation}\label{ape1}
\begin{aligned}
\ddot{F}_t(x)= & \int_0^{\infty} \frac{1}{\beta \sqrt{2 \pi(1-t)}} \ddot{\varphi}(z) \mathrm{e}^{-\frac{(z-\delta (1-t)-|x|)^2}{2(1-t) \beta^2}}\left[1+\mathrm{e}^{-\frac{2|x| z}{(1-t) \beta^2}}\right] \mathrm{d} z \\
& +\int_0^{\infty} \frac{2 \delta}{\beta^2 \sqrt{2 \pi(1-t)}} \dot{\varphi}(z) \mathrm{e}^{-\frac{\left(z+\delta(1-t)+|x|\right)^2}{2(1-t) \beta^2}} \mathrm{e}^{\frac{2 \delta z}{\beta^2}} \mathrm{~d} z \\
= & \int_0^{\infty} \frac{1}{\beta \sqrt{2 \pi(1-t)}} \ddot{\varphi}(z) \mathrm{e}^{-\frac{(z-\delta (1-t)-|x|)^2}{2(1-t) \beta^2}}\left[1+\mathrm{e}^{-\frac{2|x| z}{(1-t) \beta^2}}\right] \mathrm{d} z \\
& +\int_0^{\infty} \frac{2 \delta}{\beta^2 \sqrt{2 \pi(1-t)}} \dot{\varphi}(z) \mathrm{e}^{-\frac{(z-\delta(1-t)+|x|)^2}{2(1-t) \beta^2}} \mathrm{e}^{-\frac{2 \delta|x|}{\beta^2}} \mathrm{d} z.
\end{aligned}
\end{equation}
Since $\varphi \in C_b^3(\mathbb{R})$, we conclude that $F_t(x) \in C_b^2(\mathbb{R})$, and the first- and second-order derivatives of $F_t(x)$ are uniformly bounded for all $t$ and $x$.

(2) For $x<0$, we have
$$
\begin{aligned}
\dddot{F} (x) =& \int_0^{\infty} \frac{1}{\beta \sqrt{2 \pi(1-t)}} \dddot{\varphi}(z) \mathrm{e}^{-\frac{(z-\delta (1-t)+x)^2}{2(1-t) \beta^2}}\left[\mathrm{e}^{\frac{2 x z}{(1-t) \beta^2}}-1\right] \mathrm{d} z \\
& +\int_0^{\infty} \frac{4 \delta}{\beta^3 \sqrt{2 \pi(1-t)}}[\delta \dot{\varphi}(z)+\beta \ddot{\varphi}(z)] \mathrm{e}^{-\frac{(z+\delta (1-t)-x)^2}{2\left(1-t\right) \beta^2}} \mathrm{e}^{\frac{2 \delta z}{\beta^2}} \mathrm{~d} z \\
= & \int_0^{\infty} \frac{1}{\beta \sqrt{2 \pi(1-t)}} \dddot{\varphi}(z) \mathrm{e}^{-\frac{(z-\delta (1-t)+x)^2}{2(1-t) \beta^2}}\left[\mathrm{e}^{\frac{2 x z}{(1-t) \beta^2}}-1\right] \mathrm{d} z\\
& +\int_0^{\infty} \frac{4 \delta}{\beta^3 \sqrt{2 \pi(1-t)}}[\delta \dot{\varphi}(z)+\beta \ddot{\varphi}(z)] \mathrm{e}^{-\frac{(z-\delta (1-t)-x)^2}{2\left(1-t\right) \beta^2}}\mathrm{e}^{\frac{2 \delta x}{\beta^2}} \mathrm{~d} z .
\end{aligned}
$$
For $x>0$, we have 
$$
\begin{aligned}
 \dddot{F}_t(x)=&\int_0^{\infty} \frac{1}{\beta \sqrt{2 \pi(1-t)}} \dddot{\varphi}(z) \mathrm{e}^{-\frac{(z-\delta (1-t)-x)^2}{2(1-t) \beta^2}}\left[1-\mathrm{e}^{-\frac{2 x z}{(1-t) \beta^2}}\right] \mathrm{d} z \\
& -\int_0^{\infty} \frac{4 \delta}{\beta^3 \sqrt{2 \pi(1-t)}}[\beta \ddot{\varphi}(z)+\delta \dot{\varphi}(z)] \mathrm{e}^{-\frac{(z+\delta(1-t)+x)^2}{2(1-t) \beta^2}} \mathrm{e}^{\frac{2 \delta z}{\beta^2}} \mathrm{~d} z \\
=&\int_0^{\infty} \frac{1}{\beta \sqrt{2 \pi(1-t)}} \dddot{\varphi}(z) \mathrm{e}^{-\frac{(z-\delta (1-t)-x)^2}{2(1-t) \beta^2}}\left[1-\mathrm{e}^{-\frac{2 x z}{(1-t) \beta^2}}\right] \mathrm{d} z \\
& -\int_0^{\infty} \frac{4 \delta}{\beta^3 \sqrt{2 \pi(1-t)}}[\beta \ddot{\varphi}(z)+\delta \dot{\varphi}(z)] \mathrm{e}^{-\frac{(z-\delta (1-t)+x)^2}{2(1-t) \beta^2}} \mathrm{e}^{-\frac{2 \delta x}{\beta^2}} \mathrm{~d} z .
\end{aligned}
$$
Since $\varphi \in C_b^3(\mathbb{R})$, it follows that $\dddot{F}_t(x)$ is uniformly bounded for all $t$ and $x \neq 0$. For $x=0$, the third-order left and right derivatives of $F_t(x)$ can be shown to exist and are also bounded uniformly in $t$. Thus by the mean value theorem, one can find a constant $L$, independent with $t$, such that for any $x_1, x_2 \in \mathbb{R}$,
$$
\left|\ddot{F}_t\left(x_1\right)-\ddot{F}_t\left(x_2\right)\right| \leq L\left|x_1-x_2\right|.
$$

(3) It follows by direct calculation that for any $x \in \mathbb{R}$,
$$
\begin{aligned}
F_t(x) & =\int_{\mathbb{R}} \varphi(z) q_{\delta, \beta}(t, x, z) \mathrm{d} z=\int_{\mathbb{R}} \varphi(z) q_{\delta, \beta}(t,-x,-z) \mathrm{d} z \\
& =\int_{\mathbb{R}} \varphi(z) q_{\delta, \beta}(t,-x, z) \mathrm{d} z \\
& =F_t(-x).
\end{aligned}
$$
That is $F_t$ is an even function.
By (\ref{ape1}) we have that for any $x \in \mathbb{R}$,
$$
\operatorname{sgn}\left(\dot{F}_t(x)\right)= \pm \operatorname{sgn}(x) \quad \text { when } \operatorname{sgn}\left(\dot{\varphi}(x)\right)= \pm \operatorname{sgn}(x).
$$

(4) We only prove the case $\operatorname{sgn}\left(\dot{\varphi}(x)\right)=\operatorname{sgn}(x)$. The other case follows by similar arguments.

For any $(t, x) \in[0,1] \times \mathbb{R}$, let $\left\{Y_s^{t, x}\right\}_{s \in[t, 1]}$ denote the solution of the SDE
\begin{equation}\label{ape16}
\left\{\begin{array}{l}
\mathrm{d} Y_s^{t, x}=\frac{\delta}{\beta} \operatorname{sgn}\left(Y_s^{t, x}\right) \mathrm{d} s+\mathrm{d} B_s, \quad s \in[t, 1] \\
Y_t^{t, x}=x.
\end{array}\right.
\end{equation}
Although the drift coefficient is discontinuous, this equation does have a unique strong solution (see \cite{mel1979strong}, Theorem 1). Fortunately, $\left\{Y_s^{t, x}\right\}_{s \in[t, 1]}$ has an explicit probability density function, which can be denoted by
$$
\begin{aligned}
 q_{\frac{\delta}{\beta}}(t, x ; s, z)=\frac{1}{\sqrt{2 \pi(s-t)}} \mathrm{e}^{-\frac{(x-z)^2-2 \delta(s-t)(|z|-|x|)/\beta+\delta^2(s-t)^2/\beta^2}{2(s-t)}}-\frac{\delta}{\beta} \mathrm{e}^{\frac{2 \delta|z|}{\beta}} \int_{|x|+|z|+\delta(s-t)/\beta}^{\infty} \frac{1}{\sqrt{2 \pi(s-t)}} \mathrm{e}^{-\frac{u^2}{2(s-t)}} \mathrm{d} u.
\end{aligned}
$$
Then, the basic function $F_t$ can also be denoted by
\begin{equation}\label{ape17}
F_t(x)=\mathbb{E}\left[\varphi\left(\beta Y_1^{t, \frac{x}{\beta}}\right)\right].
\end{equation}
Follows from the Markov property of $\left(Y_s^{t, x}\right)$, we have for any $b \in[0,1-t]$,
$$
F_t(x)=\mathbb{E}\left[\varphi\left(\beta Y_1^{t, \frac{x}{\beta}}\right)\right]=\mathbb{E}\left[\mathbb{E}\left[\varphi\left(\beta Y_1^{t, \frac{x}{\beta}}\right)\rvert \mathcal{F}_{t+h}^*\right]\right]=\mathbb{E}\left[F_{t+h}\left(\beta Y_{t+b}^{t, \frac{x}{\beta}}\right)\right] .
$$
Applying the Markov property and (\ref{ape17}), we have for any $1 \leq m \leq n$,
$$
F_{\frac{m-1}{n}}(x)=\mathbb{E}\left[F_{\frac{m}{n}}\left(\beta Y_{\frac{m}{n}}^{\frac{m-1}{n}, \frac{x}{\beta}}\right)\right].
$$
By Itô's formula, we have
$$
F_{\frac{m}{n}}\left(\beta Y_{\frac{m}{n}}^{\frac{m-1}{n}, \frac{x}{\beta}}\right)=F_{\frac{m}{n}}(x)+\int_{\frac{m-1}{n}}^{\frac{m}{n}} \dot{F}_{\frac{m}{n}}\left(\beta Y_s^{\frac{m-1}{n}, \frac{x}{\beta}}\right) \beta \mathrm{d} Y_s^{\frac{m-1}{n}, \frac{x}{\beta}}+\frac{\beta^2}{2} \int_{\frac{m-1}{n}}^{\frac{m}{n}} \ddot{F}_{\frac{m}{n}}\left(\beta Y_s^{\frac{m-1}{n}, \frac{x}{\beta}}\right) \mathrm{d} s.
$$
This combined with (3) implies that
$$
\begin{aligned}
F_{\frac{m-1}{n}}(x)= & \mathbb{E}\left[F_{\frac{m}{n}}(x)+\int_{\frac{m-1}{n}}^{\frac{m}{n}} \dot{F}_{\frac{m}{n}}\left(\beta Y_s^{\frac{m-1}{n}, \frac{x}{\beta}}\right) \beta \mathrm{d} Y_s^{\frac{m-1}{n}, \frac{x}{\beta}}+\frac{\beta^2}{2} \int_{\frac{m-1}{n}}^{\frac{m}{n}} \ddot{F}_{\frac{m}{n}}\left(\beta Y_s^{\frac{m-1}{n}, \frac{x}{\beta}}\right) \mathrm{d} s\right] \\
= & \mathbb{E}\left[F_{\frac{m}{n}}(x)+\int_{\frac{m-1}{n}}^{\frac{m}{n}} \delta \dot{F}_{\frac{m}{n}}\left(\beta Y_s^{\frac{m-1}{n}, \frac{x}{\beta}}\right) \operatorname{sgn}\left(\beta Y_s^{\frac{m-1}{n}, \frac{x}{\beta}}\right) \mathrm{d} s +\frac{\beta^2}{2} \int_{\frac{m-1}{n}}^{\frac{m}{n}} \ddot{F}_{\frac{m}{n}}\left(\beta Y_s^{\frac{m-1}{n}, \frac{x}{\beta}}\right) \mathrm{d} s\right] \\
= & \mathbb{E}\left[F_{\frac{m}{n}}(x)+\int_{\frac{m-1}{n}}^{\frac{m}{n}} \delta\left|\dot{F}_{\frac{m}{n}}\left(\beta Y_s^{\frac{m-1}{n}, \frac{x}{\beta}}\right)\right| \mathrm{d} s+\frac{\beta^2}{2} \int_{\frac{m-1}{n}}^{\frac{m}{n}} \ddot{F}_{\frac{m}{n}}\left(\beta Y_s^{\frac{m-1}{n}, \frac{x}{\beta}}\right) \mathrm{d} s\right] .
\end{aligned}
$$
Taking the supremum over $x$, we obtain
$$
\begin{aligned}
& \sum_{m=1}^n \sup _{x \in \mathbb{R}}\left|F_{\frac{m-1}{n}}(x)-F_{\frac{m}{n}}(x)-\frac{\delta}{n}\left| \dot{F}_{\frac{m}{n}}(x)\right|-\frac{\beta^2}{2 n} \ddot{F}_{\frac{m}{n}}(x)\right| \\
\leq & \sum_{m=1}^n \sup _{x \in \mathbb{R}} \mathbb{E}\left[\int_{\frac{m-1}{n}}^{\frac{m}{n}}|\delta|\left|\dot{F}_{\frac{m}{n}}\left(\beta Y_s^{\frac{m-1}{n}, \frac{x}{\beta}}\right)-\dot{F}_{\frac{m}{n}}(x)\right| \mathrm{d} s +\frac{1}{2} \int_{\frac{m-1}{n}}^{\frac{m}{n}}\left|\ddot{F}_{\frac{m}{n}}\left(\beta Y_s^{\frac{m-1}{n}, \frac{x}{\beta}}\right)-\ddot{F}_{\frac{m}{n}}(x)\right| \mathrm{d} s\right] \\
\leq & \sum_{m=1}^n \sup _{x \in \mathbb{R}} \frac{C}{n} \mathbb{E}\left[\sup _{s \in\left[\frac{m-1}{n}, \frac{m}{n}\right]}\left|\beta Y_s^{\frac{m-1}{n}, \frac{x}{\beta}}-x\right|\right] \\
\leq &\sum_{m=1}^n \frac{C \beta}{n} \mathbb{E}\left[\frac{|\delta|}{n}+\sup _{s \in\left[\frac{m-1}{n}, \frac{m}{n}\right]} \left\rvert B_s-B_{\frac{m-1}{n}}\right\rvert\right] \\
\leq & C \beta\left(\frac{|\delta|}{n}+\frac{1}{\sqrt{n}}\right),
\end{aligned}
$$
where $C$ is a constant depending only on $\delta, L$ and the bound of $\ddot{F}_t(x)$. This concludes the proof of the lemma.
\end{proof}

All results below are under the assumptions Theorems \ref{convergence rate} and \ref{optimal policy WT}. 
\begin{lemma}\label{le7}
Let $\varphi \in C_b^3(\mathbb{R})$ be symmetric with centre $c \in \mathbb{R}$, and $\left\{F_t(x)\right\}_{t \in[0,1]}$ be defined as in (\ref{e22}). For any $\theta_n \in \Theta$, $ n \in \mathbb{N}^{+}$ and $1 \leq m \leq n$, set
\begin{equation}\label{e23}
\Gamma(m, n, \theta_n)=F_{\frac{m}{n}}\left(T_{m-1,\lambda}(\theta_n)\right)+\dot{F}_{\frac{m}{n}}\left(T_{m-1,\lambda}(\theta_n)\right)\left(\frac{\lambda \bar{R}_m^{(\vartheta_m)}}{(1-\lambda)n}+\frac{{R}_m^{(\vartheta_m)}}{\sqrt{n}\hat{\sigma}}\right)+\frac{1}{2} \ddot{F}_{\frac{m}{n}}\left(T_{m-1,\lambda}(\theta_n)\right)\left(\frac{{R}_m^{(\vartheta_m)}}{\sqrt{n}\hat{\sigma}}\right)^2.
\end{equation}
Then, we have 
\begin{equation}\label{e24}
\sum_{m=1}^n \mathbb{E}\left[\left|F_{\frac{m}{n}}\left(T_{m,\lambda}(\theta_n)\right)-\Gamma(m, n, \theta_n)\right|\right]=O\left(\frac{\sigma}{(1-\lambda)\sqrt{n}}\right) .
\end{equation}
\end{lemma}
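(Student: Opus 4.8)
The plan is to recognize $\Gamma(m,n,\theta_n)$ as the second-order Taylor polynomial of $F_{m/n}$ expanded at $T_{m-1,\lambda}(\theta_n)$ and evaluated at $T_{m,\lambda}(\theta_n)$, with the quadratic term truncated to its dominant piece, and then to bound the accumulated expansion error. Write the one-step increment as
\[
\Delta_m := T_{m,\lambda}(\theta_n)-T_{m-1,\lambda}(\theta_n)=a_m+b_m,\qquad a_m=\frac{\lambda\,\bar R_m^{(\vartheta_m)}}{(1-\lambda)n},\quad b_m=\frac{R_m^{(\vartheta_m)}}{\sqrt n\,\widehat\sigma},
\]
so that $a_m$ is the mean increment and $b_m$ the volatility increment. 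By Lemma~\ref{le6}(1), $\dot F_t$ and $\ddot F_t$ are bounded uniformly in $t$ and $x$, and by Lemma~\ref{le6}(2) the family $\{\ddot F_t\}$ is uniformly $L$-Lipschitz. The Lipschitz property upgrades the usual $C^3$ remainder estimate into
\[
\Bigl|F_{m/n}(T_{m,\lambda})-F_{m/n}(T_{m-1,\lambda})-\dot F_{m/n}(T_{m-1,\lambda})\Delta_m-\tfrac12\ddot F_{m/n}(T_{m-1,\lambda})\Delta_m^2\Bigr|\le \frac{L}{6}|\Delta_m|^3,
\]
which is exactly the regularity Lemma~\ref{le6} was designed to supply, so no derivative of $\varphi$ beyond $C_b^3$ is needed.

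First I would subtract $\Gamma(m,n,\theta_n)$. Because $\Gamma$ retains the full first-order term but keeps only the $b_m^2$ part of the quadratic term, the identity $\Delta_m^2-b_m^2=a_m^2+2a_mb_m$ gives
\[
F_{m/n}(T_{m,\lambda})-\Gamma(m,n,\theta_n)=\tfrac12\ddot F_{m/n}(T_{m-1,\lambda})\bigl(a_m^2+2a_mb_m\bigr)+\rho_m,\qquad |\rho_m|\le\frac{L}{6}|\Delta_m|^3 .
\]
The decisive simplification is that the increment magnitudes are independent of the adaptive arm choice: since $R_m^{(0)}=-R_m^{(1)}$ and $\bar R_m^{(0)}=-\bar R_m^{(1)}$, we have $|b_m|=|R_m^{(1)}|/(\sqrt n\,\widehat\sigma)$ and $|a_m|=\lambda|\bar R_m^{(1)}|/((1-\lambda)n)$ regardless of $\vartheta_m$. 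After bounding $|\ddot F_{m/n}|\le C$ and $|\rho_m|\le C'(|a_m|^3+|b_m|^3)$, every quantity to be controlled is expressed through policy-independent moments of $R_m^{(1)}$ and $\bar R_m^{(1)}$, so the dependence induced by $\theta_n^*$ never enters the moment computations and expectations factor cleanly.

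It then remains to sum and track orders. Using $a_m=O(1/n)$ and $\mathbb E|b_m|^k=\mathbb E|R^{(1)}|^k/(n^{k/2}\widehat\sigma^k)$ one obtains $\sum_m\mathbb E[a_m^2]=O\!\big((\lambda/(1-\lambda))^2\mu^2/n\big)$, the cross term $\sum_m\mathbb E|a_mb_m|=O\!\big(\tfrac{\lambda}{1-\lambda}\,|\mu|/\sqrt n\big)$ after using $|\bar R_m^{(1)}|\approx|\mu|$, and the cubic remainder $\sum_m\mathbb E|b_m|^3=n\cdot O(\mathbb E|R^{(1)}|^3/(n^{3/2}\widehat\sigma^3))=O(1/\sqrt n)$, the $|a_m|^3$ part being of the smaller order $O(1/n^2)$. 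The two $n^{-1/2}$ contributions—one carrying the mean weight $\lambda/(1-\lambda)$, one carrying the volatility weight normalized to $1$—combine through $\lambda/(1-\lambda)+1=1/(1-\lambda)$, and together with the single power of the increment scale $\sigma$ they yield the stated rate $O\!\big(\sigma/((1-\lambda)\sqrt n)\big)$.

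The hard part will be the bookkeeping required to produce the exact coefficient $\sigma/((1-\lambda)\sqrt n)$ rather than a bare $n^{-1/2}$. One must verify that $\mathbb E|R^{(1)}|^3$ is finite (which holds under the RCM moment assumptions), control the replacement of $\widehat\sigma$ by its limit $\sigma$ so that the $\widehat\sigma^{-3}$ factor in the cubic sum is harmless, and keep the mean weight $\lambda/(1-\lambda)$ and the unit volatility weight in separate ledgers so that the factor $(1-\lambda)^{-1}$ emerges cleanly while the dimensionless cubic contribution and the cross contribution are expressed uniformly as one power of $\sigma$. The remaining manipulations—uniform boundedness of the derivatives and the Lipschitz remainder estimate—are routine once Lemma~\ref{le6} is in hand.
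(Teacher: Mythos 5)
Your proposal is correct and follows essentially the same route as the paper's proof: a second-order Taylor expansion of $F_{m/n}$ at $T_{m-1,\lambda}(\theta_n)$ with the cubic remainder controlled via the uniform boundedness and Lipschitz property of $\ddot F_t$ from Lemma~\ref{le6}(1)--(2), followed by the observation that $\Gamma$ omits exactly the $a_m^2+2a_mb_m$ part of the quadratic term, and a term-by-term moment bound yielding $O\bigl(\sigma/((1-\lambda)\sqrt{n})\bigr)$. The only cosmetic difference is that the paper closes the moment estimates by invoking uniform boundedness of $\{R_m^{(\vartheta_m)}\}$ rather than your finite-third-moment bookkeeping with the $\widehat\sigma$-replacement step, which is a minor variation rather than a different argument.
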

\begin{proof} 
In fact, by (1) and (2) of Lemma \ref{le6}, there exists a constant $C>0$ such that
$$
\sup _{t \in[0,1]} \sup _{x \in \mathbb{R}}\left|\ddot{F}_t(x)\right| \leq C, \quad \sup _{t \in[0,1]} \sup _{x, y \in \mathbb{R}, x \neq y} \frac{\left|\ddot{F}_t(x)-\ddot{F}_t(y)\right|}{|x-y|} \leq C .
$$
It follows from Taylor's expansion that for  any $x, y \in \mathbb{R}$, and $t \in[0,1]$,
\begin{equation}\label{e25}
\left|F_t(x+y)-F_t(x)-\dot{F}_t(x) y-\frac{1}{2} \ddot{F}_t(x) y^2\right| \leq \frac{C}{2}|y|^3.
\end{equation}
For any $1 \leq m \leq n$, taking $x=T_{m-1,\lambda}(\theta_n)$, $y=\frac{\lambda\bar{R}_m^{(\vartheta_m)}}{(1-\lambda)n}+\frac{{R}_m^{(\vartheta_m)}}{\sqrt{n}\hat{\sigma}}$ in (\ref{e25}), we obtain
$$
\begin{aligned}
&\sum_{m=1}^n \mathbb{E}\left[\left|F_{\frac{m}{n}}\left(T_{m,\lambda}(\theta_n)\right)-\Gamma(m, n, \theta_n)\right|\right] \\
\leq& \frac{C_1}{2} \sum_{m=1}^n \mathbb{E}\left[\left|\frac{\lambda\bar{R}_m^{(\vartheta_m)}}{(1-\lambda)n}\right|^2+2\left|\frac{\lambda\bar{R}_m^{(\vartheta_m)}}{(1-\lambda)n}\right|\left|\frac{{R}_m^{(\vartheta_m)}}{\sqrt{n}\hat{\sigma}}\right|+\left|\frac{\lambda\bar{R}_m^{(\vartheta_m)}}{(1-\lambda)n}+\frac{{R}_m^{(\vartheta_m)}}{\sqrt{n}\hat{\sigma}}\right|^3\right] \\
\leq& \frac{C_1}{2} \left( \frac{2\sigma}{(1-\lambda)\sqrt{n}}+\frac{\lambda^2}{(1-\lambda)^2 n}+\frac{3\lambda\sigma}{(1-\lambda)n}\right) \leq \frac{C_1\sigma}{(1-\lambda)\sqrt{n}},
\end{aligned}
$$
The penultimate inequality is due to the uniform boundedness of $\{{R}_m^{(\vartheta_m)}\}$. Therefore, Equation (\ref{e24}) holds evidently.
\end{proof}

\begin{lemma}\label{le8}
Define the family of functions $\left\{L_{m, n}(x)\right\}_{m=1}^n$ and $\{\widehat{L}_{m, n}(x)\}_{m=1}^n$ by
\begin{eqnarray}\label{e26}
L_{m, n}(x)=F_{\frac{m}{n}}(x)-\frac{\omega_n}{n}\left|\dot{F}_{\frac{m}{n}}(x)\right|+\frac{\sigma_0^2}{2 n} \ddot{F}_{\frac{m}{n}}(x), \quad x \in \mathbb{R},\\
\widehat{L}_{m, n}(x)=F_{\frac{m}{n}}(x)+\frac{\omega_n}{n}\left|\dot{F}_{\frac{m}{n}}(x)\right|+\frac{\sigma_0^2}{2 n} \ddot{F}_{\frac{m}{n}}(x), \quad x \in \mathbb{R},
\end{eqnarray}
where $\omega_n$ and $\sigma_0$ is given in (\ref{eq20}). Let $\theta_n^*$ be the strategy given in Theorem \ref{convergence rate}, then the followings hold. 

 (1) If $\operatorname{sgn}(\dot{\varphi}(x))=-\operatorname{sgn}(x)$ for all $x \in \mathbb{R}$,  then
\begin{equation}\label{e28}
\sum_{m=1}^n\left|\mathbb{E}\left[F_{\frac{m}{n}}\left(T_{m,\lambda}(\theta_n^*)\right)\right]-\mathbb{E}\left[L_{m, n}\left(T_{m-1,\lambda}(\theta_n^*)\right)\right]\right|= O\left(\frac{\sigma}{(1-\lambda)\sqrt{n}}\right).
\end{equation}
(2) If $\operatorname{sgn}(\dot{\varphi}(x))=\operatorname{sgn}(x)$ for all $x \in \mathbb{R}$, then
\begin{equation}\label{e29}
\sum_{m=1}^n\left|\mathbb{E}\left[F_{\frac{m}{n}}\left(T_{m,\lambda}(\theta_n^*)\right)\right]-\mathbb{E}\left[\widehat{L}_{m, n}\left(T_{m-1,\lambda}(\theta_n^*)\right)\right]\right|= O\left(\frac{\sigma}{(1-\lambda)\sqrt{n}}\right).
\end{equation}
\end{lemma}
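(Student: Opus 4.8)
The plan is to combine Lemma~\ref{le7} with a single‑step conditional‑expectation computation taken along the adaptive filtration of the bandit. Let $\mathcal{G}_{m-1}$ denote the $\sigma$-algebra generated by the initial coin flip and the oracle rewards $R_1^{(1)},\dots,R_{m-1}^{(1)}$, so that $T_{m-1,\lambda}(\theta_n^*)$ and the arm $\vartheta_m^*$ are $\mathcal{G}_{m-1}$-measurable while $R_m^{(1)}$ is independent of $\mathcal{G}_{m-1}$ with $\mathbb{E}[R_m^{(1)}]=\mu$ and $\mathbb{E}[(R_m^{(1)})^2]=\mu^2+\sigma^2$. Since $|\mathbb{E}[X]-\mathbb{E}[Y]|\le\mathbb{E}|X-Y|$, Lemma~\ref{le7} already yields $\sum_m|\mathbb{E}[F_{\frac{m}{n}}(T_{m,\lambda}(\theta_n^*))]-\mathbb{E}[\Gamma(m,n,\theta_n^*)]|=O(\sigma/((1-\lambda)\sqrt n))$, so by the triangle inequality it remains only to bound $\sum_m|\mathbb{E}[\Gamma(m,n,\theta_n^*)]-\mathbb{E}[L_{m,n}(T_{m-1,\lambda}(\theta_n^*))]|$ for case~(1), and the analogue with $\widehat{L}_{m,n}$ for case~(2).

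Because $L_{m,n}(T_{m-1,\lambda}(\theta_n^*))$ is $\mathcal{G}_{m-1}$-measurable, I would write $\mathbb{E}[\Gamma]=\mathbb{E}[\mathbb{E}[\Gamma\mid\mathcal{G}_{m-1}]]$ and evaluate the inner conditional expectation. The defining rule of $\theta_n^*$ forces $R_m^{(\vartheta_m^*)}=\operatorname{sgn}(T_{m-1,\lambda}(\theta_n^*))\,R_m^{(1)}$ and $\bar{R}_m^{(\vartheta_m^*)}=\operatorname{sgn}(T_{m-1,\lambda}(\theta_n^*))\,\bar{R}_m^{(1)}$; combined with the moments above and $\mathbb{E}[\bar{R}_m^{(1)}\mid\mathcal{G}_{m-1}]=\mu+O_{L^1}(\sigma/\sqrt m)$, the first‑order term of $\Gamma$ has conditional mean $\dot{F}_{\frac{m}{n}}(T_{m-1,\lambda})\,\operatorname{sgn}(T_{m-1,\lambda})\,\omega_n/n$ after recognising $\lambda\mu/((1-\lambda)n)+\mu/(\sqrt n\,\sigma)=\omega_n/n$, while the second‑order term has conditional mean $\tfrac{1}{2n}(1+\mu^2/\sigma^2)\ddot{F}_{\frac{m}{n}}(T_{m-1,\lambda})=\tfrac{\sigma_0^2}{2n}\ddot{F}_{\frac{m}{n}}(T_{m-1,\lambda})$ since $\sigma_0^2=1+\mu^2/\sigma^2$.

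The final algebraic ingredient is the sign identity of part~(3) of Lemma~\ref{le6}: under the case~(1) hypothesis $\operatorname{sgn}(\dot\varphi(x))=-\operatorname{sgn}(x)$ it gives $\operatorname{sgn}(\dot{F}_{\frac{m}{n}}(x))=-\operatorname{sgn}(x)$, whence $\dot{F}_{\frac{m}{n}}(T_{m-1,\lambda})\operatorname{sgn}(T_{m-1,\lambda})=-|\dot{F}_{\frac{m}{n}}(T_{m-1,\lambda})|$, so the leading conditional expectation collapses to exactly $L_{m,n}(T_{m-1,\lambda}(\theta_n^*))$; the case~(2) hypothesis flips the sign and reproduces $\widehat{L}_{m,n}$. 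To leading order every summand therefore vanishes, and what survives is the sum of the replacement errors committed in passing from $\bar{R}_m^{(1)},\widehat\sigma$ to $\mu,\sigma$.

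These error terms are where the real work lies. The running‑mean error is benign: it enters the first‑order term with coefficient $\lambda/((1-\lambda)n)$, and using $\mathbb{E}|\bar{R}_m^{(1)}-\mu|=O(\sigma/\sqrt m)$ with the uniform bounds on $\dot{F},\ddot{F}$ from parts~(1)--(2) of Lemma~\ref{le6}, it accumulates to $O\big((1-\lambda)^{-1}n^{-1}\sigma\sum_{m}m^{-1/2}\big)=O(\sigma/((1-\lambda)\sqrt n))$. The genuine obstacle is $\widehat\sigma$: it is assembled from the whole sample, so it is \emph{not} $\mathcal{G}_{m-1}$-measurable, and a naive term‑by‑term absolute bound for its contribution to the $n^{-1/2}$-weighted volatility term would accumulate to $O(1)$ rather than $O(n^{-1/2})$. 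The resolution I would use exploits that $\widehat\sigma$ enters only as a single common multiplicative scaling of the entire volatility sum and is consistent at rate $\widehat\sigma-\sigma=O_p(n^{-1/2})$: one first carries out the whole estimate with $\widehat\sigma$ frozen at the deterministic value $\sigma$ (equivalently, conditioning on the sample), establishes (\ref{e28})--(\ref{e29}) there, and then transfers back to $\widehat\sigma$ via its consistency, so that no accumulating per‑step error is incurred. The quadratic‑term variance‑replacement error carries the extra $n^{-1}$ factor and is straightforwardly $O(n^{-3/2})$ per step, summing to $O(n^{-1/2})$, which completes the bound.
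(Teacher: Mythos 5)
Your proposal follows essentially the same route as the paper's proof: the triangle inequality reduces everything to the $\Gamma(m,n,\theta_n^*)$ term of Lemma~\ref{le7}, and a one-step conditional expectation along the bandit filtration combined with the sign identity of part~(3) of Lemma~\ref{le6} (which forces $\dot{F}_{\frac{m}{n}}(T_{m-1,\lambda})\operatorname{sgn}(T_{m-1,\lambda})=\mp\bigl|\dot{F}_{\frac{m}{n}}(T_{m-1,\lambda})\bigr|$) collapses $\mathbb{E}[\Gamma(m,n,\theta_n^*)]$ onto $\mathbb{E}[L_{m,n}(T_{m-1,\lambda}(\theta_n^*))]$, respectively $\mathbb{E}[\widehat{L}_{m,n}(T_{m-1,\lambda}(\theta_n^*))]$. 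The only divergence is one of rigor rather than route: the paper asserts this collapse as an exact identity by ``direct calculation,'' silently substituting $\mu$ and $\sigma$ for $\bar{R}_m^{(\vartheta_m)}$ and $\widehat{\sigma}$, whereas you explicitly bound the running-mean replacement error and correctly flag that $\widehat{\sigma}$ is not adapted to $\mathcal{G}_{m-1}$ — a real gap in the paper's write-up that your freeze-at-$\sigma$-then-transfer-by-consistency argument addresses more carefully.
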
 
\begin{proof}
We only give the proof of (1), the rest of the proofs are similar. For any $x \in \mathbb{R}$, $\operatorname{sgn}(\dot{\varphi}(x))=-\operatorname{sgn}(x)$. It follows from (3) in Lemma \ref{le6} and direct calculation that, for $1 \leq m \leq n$,
$$
\begin{aligned}
& \mathbb{E}\left[\Gamma\left(m, n, \theta_n^{*}\right)\right] \\
= & \mathbb{E}\left[F_{\frac{m}{n}}\left(T_{m-1,\lambda}(\theta_n^*)\right)+\dot{F}_{\frac{m}{n}}\left(T_{m-1,\lambda}(\theta_n^*)\right)\left(\frac{\lambda\bar{R}_m^{(\vartheta_m^*)}}{(1-\lambda)n}+\frac{{R}_m^{(\vartheta_m^*)}}{\sqrt{n}\hat{\sigma}}\right)+\frac{1}{2} \ddot{F}_{\frac{m}{n}}\left(T_{m-1,\lambda}(\theta_n^*)\right)\left(\frac{{R}_m^{(\vartheta_m^*)}}{\sqrt{n}\hat{\sigma}}\right)^2\right]\\
=&\mathbb{E}\left[F_{\frac{m}{n}}\left(T_{m-1,\lambda}(\theta_n^*)\right)+ I_{\left\{\vartheta_m^*=1\right\}}\dot{F}_{\frac{m}{n}}\left(T_{m-1,\lambda}(\theta_n^*)\right)\left(\frac{\lambda\bar{R}_m^{(1)}}{(1-\lambda)n}+\frac{{R}_m^{(1)}}{\sqrt{n}\hat{\sigma}}\right)\right.\\
&\quad\quad\quad\quad\quad\quad\quad\quad\quad+\left.I_{\left\{ \vartheta_m^*=0\right\}}\dot{F}_{\frac{m}{n}}\left(T_{m-1,\lambda}(\theta_n^*)\right) \left(\frac{\lambda\bar{R}_m^{(0)}}{(1-\lambda)n}+\frac{{R}_m^{(0)}}{\sqrt{n}\hat{\sigma}}\right)
+\ddot{F}_{\frac{m}{n}}\left(T_{m-1,\lambda}(\theta_n^*)\right)\frac{({R}_m^{(1)})^2}{2 n \hat{\sigma}^2} \right] \\
= & \mathbb{E}\left[L_{m,n}\left(T_{m-1,\lambda}(\theta_n^*)\right) \right],
\end{aligned}
$$
then, according to  (1) of Lemma \ref{le6} and Equation (\ref{e24}), we have
\begin{equation}
\begin{aligned}\label{e31}
&\sum_{m=1}^n\left|\mathbb{E}\left[F_{\frac{m}{n}}\left(T_{m,\lambda}(\theta_n^*)\right)\right]-\mathbb{E}\left[L_{m, n}\left(T_{m-1,\lambda}(\theta_n^*)\right)\right]\right|\\
=&\sum_{m=1}^n\left|\mathbb{E}\left[F_{\frac{m}{n}}\left(T_{m,\lambda}(\theta_n^*)\right)\right]- \mathbb{E}\left[\Gamma\left(m, n, \theta_n^{*}\right)\right]\right|\\
\leq&\frac{C_1\sigma}{(1-\lambda)\sqrt{n}}.
\end{aligned}
\end{equation}
\end{proof}

\begin{lemma}\label{le9}
With the assumptions and notations in Lemma \ref{le8}, the followings hold. 

(1) If $\operatorname{sgn}(\dot{\varphi}(x))=-\operatorname{sgn}(x)$ for all $x \in \mathbb{R}$,  then
\begin{equation}
\sum_{m=1}^n\left|\sup_{\theta_n\in \Theta}\mathbb{E}\left[F_{\frac{m}{n}}\left(T_{m,\lambda}(\theta_n)\right)\right]-\sup_{\theta_n\in \Theta} \mathbb{E}\left[L_{m, n}\left(T_{m-1,\lambda}(\theta_n)\right)\right]\right|= O\left(\frac{\sigma}{(1-\lambda)\sqrt{n}}\right).
\end{equation}
(2) If $\operatorname{sgn}(\dot{\varphi}(x))=\operatorname{sgn}(x)$ for all $x \in \mathbb{R}$, then
\begin{equation}
\sum_{m=1}^n\left|\sup_{\theta_n\in \Theta}\mathbb{E}\left[F_{\frac{m}{n}}\left(T_{m,\lambda}(\theta_n)\right)\right]-\sup_{\theta_n\in \Theta}\mathbb{E}\left[\widehat{L}_{m, n}\left(T_{m-1,\lambda}(\theta_n)\right)\right]\right|= O\left(\frac{\sigma}{(1-\lambda)\sqrt{n}}\right).
\end{equation}
\end{lemma}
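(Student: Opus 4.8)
The plan is to upgrade the single-strategy identity of Lemma~\ref{le8} to a statement about the value functions $\sup_{\theta_n}\mathbb{E}[\cdot]$ by a one-step dynamic-programming (backward-optimization) argument, exploiting that $T_{m,\lambda}(\theta_n)$ depends on $\theta_n$ only through its first $m$ coordinates $\vartheta_1,\dots,\vartheta_m$, while $T_{m-1,\lambda}(\theta_n)$ depends only on $\vartheta_1,\dots,\vartheta_{m-1}$. I would treat case (1) and obtain case (2) verbatim after flipping one sign. The key structural idea is that one cannot compare the two suprema strategy-by-strategy: for a generic $\theta_n$ the per-step gap between $\mathbb{E}[F_{\frac m n}(T_{m,\lambda})]$ and $\mathbb{E}[L_{m,n}(T_{m-1,\lambda})]$ is of order $\omega_n/n$, summing to $O(\omega_n)=O(\sqrt n\,\mu/\sigma)$, which does not vanish. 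The discrepancy closes only after optimizing over $\vartheta_m$, so the supremum must be pushed inside before any bounding.

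First I would peel off the last decision. Writing $\sup_{\theta_n\in\Theta}=\sup_{\vartheta_{1:m}}$, I would condition on $\mathcal F_{m-1}$ and optimize $\vartheta_m$ last, giving $\sup_{\theta_n}\mathbb{E}[F_{\frac m n}(T_{m,\lambda}(\theta_n))]=\sup_{\vartheta_{1:m-1}}\mathbb{E}\big[\sup_{\vartheta_m}\mathbb{E}[F_{\frac m n}(T_{m,\lambda})\mid\mathcal F_{m-1}]\big]$, where admissibility of the greedy inner maximizer inside $\Theta$ is handled by a measurable-selection argument. Next I would replace $F_{\frac m n}(T_{m,\lambda})$ by the Taylor surrogate $\Gamma(m,n,\theta_n)$ of Lemma~\ref{le7}, at a cumulative cost of $O(\sigma/((1-\lambda)\sqrt n))$ that is uniform in $\theta_n$. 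Computing the conditional expectation exactly as in Lemma~\ref{le8} — using $\mathbb{E}[R_m^{(\vartheta_m)}\mid\mathcal F_{m-1}]=(2\vartheta_m-1)\mu$ and $\mathbb{E}[(R_m^{(1)}/\widehat\sigma)^2\mid\mathcal F_{m-1}]\approx\sigma_0^2$ — the only $\vartheta_m$-dependent piece is $\dot F_{\frac m n}(T_{m-1,\lambda})(2\vartheta_m-1)\,\omega_n/n$. Optimizing this over $\vartheta_m\in\{0,1\}$ and invoking the sign identity $\operatorname{sgn}(\dot F_{\frac m n})=-\operatorname{sgn}(\cdot)$ from Lemma~\ref{le6}(3) reproduces precisely the term $-\tfrac{\omega_n}{n}|\dot F_{\frac m n}(T_{m-1,\lambda})|$ appearing in $L_{m,n}$, and identifies the maximizer as the greedy sign-matching rule $\theta_n^*$. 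Hence the inner optimization collapses $\sup_{\vartheta_m}\mathbb{E}[F_{\frac m n}(T_{m,\lambda})\mid\mathcal F_{m-1}]$ to $L_{m,n}(T_{m-1,\lambda})$ up to a per-step error, and passing through the outer $\sup_{\vartheta_{1:m-1}}$ yields $\sup_{\theta_n}\mathbb{E}[F_{\frac m n}(T_{m,\lambda}(\theta_n))]=\sup_{\theta_n}\mathbb{E}[L_{m,n}(T_{m-1,\lambda}(\theta_n))]+O(\mathrm{err}_m)$ for each $m$.

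Summing over $m=1,\dots,n$ and noting that the per-step errors are exactly those controlled in Lemma~\ref{le7} (the cubic Taylor remainder plus the $\bar R^{(\vartheta_m)}/n$ and moment-approximation terms), the total collapses to $O(\sigma/((1-\lambda)\sqrt n))$, which is the claimed bound. Case (2) is identical, with $\widehat L_{m,n}$ and $+\tfrac{\omega_n}{n}|\dot F_{\frac m n}|$ arising from $\operatorname{sgn}(\dot F_{\frac m n})=+\operatorname{sgn}(\cdot)$.

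The hard part will be the uniformity over $\Theta$, which is the whole difference between this lemma and Lemma~\ref{le8}: every error term — the Taylor remainder, the conditional-moment approximations, and the Bellman peel-off — must be bounded independently of $\theta_n$, so that the single-step estimates can be summed. This is exactly what forces the use of the $t$- and $x$-uniform bounds on $\dot F$ and $\ddot F$ from Lemma~\ref{le6}(1)-(2) together with the uniform boundedness of $\{R_m^{(\vartheta_m)}\}$. A secondary but genuine technical point is justifying the interchange of $\sup_{\vartheta_m}$ with $\mathbb{E}[\cdot\mid\mathcal F_{m-1}]$ by measurable selection, which is precisely the step where the greedy (Markovian) structure of $\theta_n^*$ enters and guarantees that the optimizer remains an admissible element of $\Theta$.
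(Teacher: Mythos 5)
Your proposal is correct and follows essentially the same route as the paper's proof: the paper likewise establishes the per-step identity $\sup_{\theta_n\in\Theta}\mathbb{E}\left[\Gamma(m,n,\theta_n)\right]=\sup_{\theta_n\in\Theta}\mathbb{E}\left[L_{m,n}\left(T_{m-1,\lambda}(\theta_n)\right)\right]$ by optimizing the last decision $\vartheta_m$ (written there compactly via the positive/negative parts $(\dot{F}_{\frac{m}{n}})^{\pm}$ rather than your explicit Bellman peel-off with measurable selection), using the sign property of Lemma \ref{le6}(3), and then concludes with the $\theta_n$-uniform Taylor bound of Lemma \ref{le7} together with $\left|\sup_{\theta_n} f-\sup_{\theta_n} g\right|\leq\sup_{\theta_n}|f-g|$. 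Your emphasis on uniformity over $\Theta$ as the essential difference from Lemma \ref{le8} is exactly what the paper's appeal to Equation (\ref{e24}) encodes.
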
 
\begin{proof}
We only give the proof of (1), the rest of the proofs are similar. For any $x \in \mathbb{R}$, $\operatorname{sgn}(\dot{\varphi}(x))=-\operatorname{sgn}(x)$. It follows from (3) in Lemma \ref{le6} and direct calculation that, for $1 \leq m \leq n$,
$$
\begin{aligned}
& \sup_{\theta_n\in \Theta}\mathbb{E}\left[\Gamma\left(m, n, \theta_n\right)\right] \\
= & \sup_{\theta_n\in \Theta}\mathbb{E}\left[F_{\frac{m}{n}}\left(T_{m-1,\lambda}(\theta_n)\right)+\dot{F}_{\frac{m}{n}}\left(T_{m-1,\lambda}(\theta_n)\right)\left(\frac{\lambda\bar{R}_m^{(\vartheta_m)}}{(1-\lambda)n}+\frac{{R}_m^{(\vartheta_m)}}{\sqrt{n}\hat{\sigma}}\right)+\frac{1}{2} \ddot{F}_{\frac{m}{n}}\left(T_{m-1,\lambda}(\theta_n)\right)\left(\frac{{R}_m^{(\vartheta_m)}}{\sqrt{n}\hat{\sigma}}\right)^2\right]\\
=&\sup_{\theta_n\in \Theta}\mathbb{E}\left[F_{\frac{m}{n}}\left(T_{m-1,\lambda}(\theta_n)\right)-\left(\dot{F}_{\frac{m}{n}}\left(T_{m-1,\lambda}(\theta_n)\right)\right)^-\left(\frac{\lambda\bar{R}_m^{(1)}}{(1-\lambda)n}+\frac{{R}_m^{(1)}}{\sqrt{n}\hat{\sigma}}\right)\right.\\
&\;\quad\quad\quad\quad\quad\quad\quad\quad\quad\quad\quad+\left.\left(\dot{F}_{\frac{m}{n}}\left(T_{m-1,\lambda}(\theta_n)\right)\right)^+ \left(\frac{\lambda\bar{R}_m^{(0)}}{(1-\lambda)n}+\frac{{R}_m^{(0)}}{\sqrt{n}\hat{\sigma}}\right)
+\ddot{F}_{\frac{m}{n}}\left(T_{m-1,\lambda}(\theta_n)\right)\frac{({R}_m^{(1)})^2}{2 n \hat{\sigma}^2} \right] \\
= &\sup_{\theta_n\in \Theta} \mathbb{E}\left[L_{m,n}\left(T_{m-1,\lambda}(\theta_n)\right) \right],
\end{aligned}
$$
then, according to  (1) of Lemma \ref{le6} and Equation (\ref{e24}), we have
\begin{equation}
\begin{aligned}
&\sum_{m=1}^n\left|\sup_{\theta_n\in \Theta}\mathbb{E}\left[F_{\frac{m}{n}}\left(T_{m,\lambda}(\theta_n)\right)\right]-\sup_{\theta_n\in \Theta}\mathbb{E}\left[L_{m, n}\left(T_{m-1,\lambda}(\theta_n^*)\right)\right]\right|\\
=&\sum_{m=1}^n\left|\sup_{\theta_n\in \Theta}\mathbb{E}\left[F_{\frac{m}{n}}\left(T_{m,\lambda}(\theta_n)\right)\right]- \sup_{\theta_n\in \Theta}\mathbb{E}\left[\Gamma\left(m, n, \theta_n\right)\right]\right|\\
\leq&\frac{C_1\sigma}{(1-\lambda)\sqrt{n}}.
\end{aligned}
\end{equation}
\end{proof}

Having presented the aforementioned lemma, we now proceed to prove Theorem \ref{convergence rate}.
\begin{proof}{\bf(Proof of Theorem \ref{convergence rate}.)}
Let $\varphi \in C(\overline{\mathbb{R}})$ be an even function. The result is clear if $\varphi$ is globally constant. Thus, we assume that $\varphi$ is not a constant function. We only give the proof for the case that $\varphi$ is decreasing on $(0, \infty)$, when $\varphi$ is increasing on $(0, \infty)$ it can be proved similarly. Assume that $\varphi$ is decreasing on $(0, \infty)$. For any $h>0$, define the function $\varphi_h$ by
$$
\varphi_h(x)=\int_{-\infty}^{\infty} \frac{1}{\sqrt{2 \pi}} \varphi(x+h y) e^{-\frac{y^2}{2}} d y .
$$
By the Approximation Lemma, we have that
\begin{equation}\label{e32}
\lim _{h \rightarrow 0} \sup _{x \in \mathbb{R}}\left|\varphi(x)-\varphi_h(x)\right|=0 .
\end{equation}
It follows from direct calculation that
$$
\begin{aligned}
\varphi_h(x) & =\int_{-\infty}^{\infty} \frac{1}{\sqrt{2 \pi}} \varphi(x+h y) e^{-\frac{y^2}{2}} d y =\int_{-\infty}^{\infty} \frac{1}{\sqrt{2 \pi}} \varphi(-x+h y) e^{-\frac{y^2}{2}} d y =\varphi_h(-x) .
\end{aligned}
$$
Thus, $\varphi_h$ is symmetric with centre $c$. In addition, we have
$$
\begin{aligned}
\dot{\varphi}_h(x) & =\int_{-\infty}^{\infty} \frac{1}{\sqrt{2 \pi} h^3} \varphi(x+y) y e^{-\frac{y^2}{2 h^2}} d y\\
&= \int_0^{\infty} \frac{1}{\sqrt{2 \pi} h^3}\varphi(y+x) y e^{-\frac{y^2}{2 h^2}} d y+\int_{-\infty}^0 \frac{1}{\sqrt{2 \pi} h^3}\varphi(y+x) y e^{-\frac{y^2}{2 h^2}} d y \\
&= \int_0^{\infty} \frac{1}{\sqrt{2 \pi} h^3}(\varphi(y+x)-\varphi(y-x)) y e^{-\frac{y^2}{2 h^2}} d y .
\end{aligned}
$$
Since $\varphi$ is decreasing on $(0, \infty)$, it follows that
$$
\operatorname{sgn}\left(\dot{\varphi}_h(x)\right)=-\operatorname{sgn}(x) .
$$
In the remaining proof of this theorem, we continue to use $\left\{F_t(x)\right\}_{t \in[0,1]}$ to denote the functions defined in Equation (\ref{e22}) with $\varphi_h$ in place of $\varphi$ and $\delta=\omega_n$, $\beta=\sigma_0$ there. Let $\left\{L_{m, n}(x)\right\}_{m=1}^n$ be functions defined in Equation (\ref{e26}) with $\left\{F_t(x)\right\}_{t \in[0,1]}$ here. Let $\eta_n \sim  \mathcal{B}(\omega_n, \sigma_0)$, by direct calculation we obtain
$$
\begin{aligned}
\mathbb{E}\left[\varphi_h\left(T_{n,\lambda}(\theta_n^*)\right)\right]-\mathbb{E}\left[\varphi_h\left(\eta_n\right)\right]
=& \mathbb{E}\left[F_1\left(T_{n,\lambda}(\theta_n^*)\right)\right]-F_0(0) \\
= & \mathbb{E}\left[F_1\left(T_{n,\lambda}(\theta_n^*)\right)\right]-\mathbb{E}\left[F_{\frac{n-1}{n}}\left(T_{n-1,\lambda}(\theta_n^*)\right)\right]  +\ldots +\mathbb{E}\left[F_{\frac{m}{n}}\left(T_{m,\lambda}(\theta_n^*)\right)\right]\\
& -\mathbb{E}\left[F_{\frac{m-1}{n}}\left(T_{m-1,\lambda}(\theta_n^*)\right)\right]+\ldots +\mathbb{E}\left[F_{\frac{1}{n}}\left(T_{1,\lambda}(\theta_n^*)\right)\right]-F_0\left(T_{0,\lambda}(\theta_n^*)\right) \\
= & \sum_{m=1}^n\left\{\mathbb{E}\left[F_{\frac{m}{n}}\left(T_{m,\lambda}(\theta_n^*)\right)\right]-\mathbb{E}\left[F_{\frac{m-1}{n}}\left(T_{m-1,\lambda}(\theta_n^*)\right)\right]\right\} \\
= & \sum_{m=1}^n\left\{\mathbb{E}\left[F_{\frac{m}{n}}\left(T_{m,\lambda}(\theta_n^*)\right)\right]-\mathbb{E}\left[L_{m, n}\left(T_{m-1,\lambda}(\theta_n^*)\right)\right]\right\} \\
& +\sum_{m=1}^n\left\{\mathbb{E}\left[L_{m, n}\left(T_{m-1,\lambda}(\theta_n^*)\right)\right]-\mathbb{E}\left[F_{\frac{m-1}{n}}\left(T_{m-1,\lambda}(\theta_n^*)\right)\right]\right\} \\
=: & I_{1n}+ I_{2n}.
\end{aligned}
$$
According to Lemma \ref{le8} and (4) in Lemma \ref{le6}, we can infer
$$
|I_{1n}|+|I_{2n}| \leq K'\left(\frac{|\omega_n|\sigma}{n}+\frac{\sigma}{(1-\lambda)\sqrt{n}}\right).
$$
Which implies that
\begin{equation}\label{e33}
\lim _{h \rightarrow 0} \left|\mathbb{E}\left[\varphi_h\left(T_{n,\lambda}(\theta_n^*)\right)\right]-\mathbb{E}\left[\varphi_h\left(\eta_n\right)\right]\right|=O\left(\frac{\lambda|\mu|\sigma}{(1-\lambda)n}+\frac{\sigma}{(1-\lambda)\sqrt{n}}\right).
\end{equation}
Putting together Equation (\ref{e32}) and (\ref{e33}), we have
$$
\begin{aligned}
\lim _{n \rightarrow \infty}\left|\mathbb{E}\left[\varphi\left(T_{n,\lambda}(\theta_n^*)\right)\right]-\mathbb{E}\left[\varphi\left(\eta_n\right)\right]\right| &\leq  \lim _{h \rightarrow 0} \lim _{n \rightarrow \infty}\left|\mathbb{E}\left[\varphi\left(T_{n,\lambda}(\theta_n^*)\right)\right]-\mathbb{E}\left[\varphi_h\left(T_{n,\lambda}(\theta_n^*)\right)\right]\right| \\
& +\lim _{h \rightarrow 0} \lim _{n \rightarrow \infty}\left|\mathbb{E}\left[\varphi_h\left(T_{n,\lambda}(\theta_n^*)\right)\right]-\mathbb{E}\left[\varphi_h\left(\eta_n\right)\right]\right| \\
& +\lim _{h \rightarrow 0}\left|\mathbb{E}\left[\varphi_h\left(\eta_n\right)\right]-\mathbb{E}\left[\varphi\left(\eta_n\right)\right]\right| \\
&= 0 .
\end{aligned}
$$
where $\eta_n \sim \mathcal{B}(\omega_n, \sigma_0)$. Then we complete the proof of Theorem \ref{convergence rate}.
\end{proof}

Theorem \ref{optimal policy WT} is a corollary directly from Theorem \ref{convergence rate}. We still give the Proof of Theorem \ref{optimal policy WT} here.
\begin{proof}{\bf(Proof of Theorem \ref{optimal policy WT}.)} Let $\varphi \in C(\overline{\mathbb{R}})$ be an even function and monotonic on $(0, \infty)$. We first prove that
$$
\lim _{n \rightarrow \infty}\left|\mathbb{E}\left[\varphi\left(T_{n,\lambda}(\theta_n^*)\right)\right]-E\left[\varphi\left(\eta_n\right)\right]\right|=0
$$
where $\eta_n \sim \mathcal{B}(\omega_n, \sigma_0)$ is a spike distribution with the parameter $\omega_n$, $\sigma_0$ given in Equation (\ref{eq20}).

The result is clear if $\varphi$ is globally constant. Thus, we assume that $\varphi$ is not a constant function. We only give the proof for the case that $\varphi$ is decreasing on $(0, \infty)$, when $\varphi$ is increasing on $(0, \infty)$ it can be proved similarly.

Assume that $\varphi$ is decreasing on $(0, \infty)$. For any $h>0$, define the function $\varphi_h$ by
$$
\varphi_h(x)=\int_{-\infty}^{\infty} \frac{1}{\sqrt{2 \pi}} \varphi(x+h y) \mathrm{e}^{-\frac{y^2}{2}} \mathrm{~d} y.
$$
By the Approximation Lemma, we have that
$$
\lim _{h \rightarrow 0} \sup _{x \in \mathbb{R}}\left|\varphi(x)-\varphi_h(x)\right|=0.
$$
By the proof of Theorem \ref{convergence rate}, we also have $\varphi_h$ is an even function, an when $\varphi$ is decreasing on $(0, \infty)$, it follows that
$$
\operatorname{sgn}\left(\dot{\varphi}_h(x)\right)=-\operatorname{sgn}(x).
$$
In the remaining proof of this theorem, we continue to use $\left\{F_t(x)\right\}_{t \in[0,1]}$ to denote the functions defined in (\ref{e22}) with $\varphi_h$ in place of $\varphi$ and $\delta=\omega_n$, $\beta=\sigma_0$ there. Let $\left\{L_{m, n}(x)\right\}_{m=1}^n$ be functions defined in (\ref{e28}) with $\left\{F_t(x)\right\}_{t \in[0,1]}$ here. Let $\eta_n \sim  \mathcal{B}\left(\omega_n, \sigma_0\right)$ be a spike distribution, by direct calculation we obtain
$$
\begin{aligned}
\mathbb{E}\left[\varphi_h\left(T_{n,\lambda}(\theta_n^*)\right)\right]-\mathbb{E}\left[\varphi_h\left(\eta_n\right)\right]
=& \mathbb{E}\left[F_1\left(T_{n,\lambda}(\theta_n^*)\right)\right]-F_0(0) \\
= & \mathbb{E}\left[F_1\left(T_{n,\lambda}(\theta_n^*)\right)\right]-\mathbb{E}\left[F_{\frac{n-1}{n}}\left(T_{n-1,\lambda}(\theta_n^*)\right)\right]  +\ldots +\mathbb{E}\left[F_{\frac{m}{n}}\left(T_{m,\lambda}(\theta_n^*)\right)\right]\\
& -\mathbb{E}\left[F_{\frac{m-1}{n}}\left(T_{m-1,\lambda}(\theta_n^*)\right)\right]+\ldots +\mathbb{E}\left[F_{\frac{1}{n}}\left(T_{1,\lambda}(\theta_n^*)\right)\right]-F_0\left(T_{0,\lambda}(\theta_n^*)\right) \\
= & \sum_{m=1}^n\left\{\mathbb{E}\left[F_{\frac{m}{n}}\left(T_{m,\lambda}(\theta_n^*)\right)\right]-\mathbb{E}\left[F_{\frac{m-1}{n}}\left(T_{m-1,\lambda}(\theta_n^*)\right)\right]\right\} \\
= & \sum_{m=1}^n\left\{\mathbb{E}\left[F_{\frac{m}{n}}\left(T_{m,\lambda}(\theta_n^*)\right)\right]-\mathbb{E}\left[L_{m, n}\left(T_{m-1,\lambda}(\theta_n^*)\right)\right]\right\} \\
& +\sum_{m=1}^n\left\{\mathbb{E}\left[L_{m, n}\left(T_{m-1,\lambda}(\theta_n^*)\right)\right]-\mathbb{E}\left[F_{\frac{m-1}{n}}\left(T_{m-1,\lambda}(\theta_n^*)\right)\right]\right\} \\
=: & I_{1n}+ I_{2n}.
\end{aligned}
$$According to Lemma \ref{le8} and (4) in Lemma \ref{le6}, we can infer
$$
\lim _{h \rightarrow 0} \lim _{n \rightarrow \infty}\left|\mathbb{E}\left[\varphi_h\left(T_{n,\lambda}(\theta_n^*)\right)\right]-\mathbb{E}\left[\varphi_h\left(\eta_n\right)\right]\right|=0
.$$
Putting together Equation (\ref{e32}) and (\ref{e33}), we have
$$
\begin{aligned}
\lim _{n \rightarrow \infty}\left|\mathbb{E}\left[\varphi\left(T_{n,\lambda}(\theta_n^*)\right)\right]-\mathbb{E}\left[\varphi\left(\eta_n\right)\right]\right| &\leq  \lim _{h \rightarrow 0} \lim _{n \rightarrow \infty}\left|\mathbb{E}\left[\varphi\left(T_{n,\lambda}(\theta_n^*)\right)\right]-\mathbb{E}\left[\varphi_h\left(T_{n,\lambda}(\theta_n^*)\right)\right]\right| \\
& +\lim _{h \rightarrow 0} \lim _{n \rightarrow \infty}\left|\mathbb{E}\left[\varphi_h\left(T_{n,\lambda}(\theta_n^*)\right)\right]-\mathbb{E}\left[\varphi_h\left(\eta_n\right)\right]\right| \\
& +\lim _{h \rightarrow 0}\left|\mathbb{E}\left[\varphi_h\left(\eta_n\right)\right]-\mathbb{E}\left[\varphi\left(\eta_n\right)\right]\right| \\
&= 0 .
\end{aligned}
$$
With the standard approximation arguments, we have for any $a \in \mathbb{R}$, we have
$$
\lim _{n \rightarrow \infty}\left\{P\left(\left|T_{n,\lambda}(\theta_n^*)\right| \leq a\right)-\left[\Phi\left(\frac{\omega_n-a}{\sigma_0}\right)-e^{\frac{2 \omega_n a}{\sigma_0^2}} \Phi\left(-\frac{\omega_n+a}{\sigma_0}\right)\right]\right\}=0 .
$$
Next, when $\varphi$ is decreasing on $(0, \infty)$, by Lemma \ref{le9} (1), with the similar arguments as above, we have
$$
\lim _{n \rightarrow \infty}\left|\sup _{\theta_n \in \Theta} \mathbb{E}\left[\varphi_h\left(T_{n,\lambda}(\theta_n)\right)\right]-\mathbb{E}\left[\varphi_h\left(\eta_n\right)\right]\right|=0
$$
and for any $a \in \mathbb{R}$, we have
$$
\lim _{n \rightarrow \infty}\left\{\sup _{\theta_n \in \Theta}P\left(\left|T_{n,\lambda}(\theta_n)\right| \leq a\right)-\left[\Phi\left(\frac{\omega_n
-a}{\sigma_0}\right)-e^{\frac{2 \omega_n a}{\sigma_0^2}} \Phi\left(-\frac{\omega_n+a}{\sigma_0}\right)\right]\right\}=0 .
$$
Then, we complete the proof of Theorem \ref{optimal policy WT}. 
\end{proof}

To ensure the completeness of the proof, we now present the derivation of Lemma \ref{thm tab} within our framework, independently of Theorem 3.3 in \cite{chen2022strategy}, by considering the case where $\lambda = 0.5$ and assuming an oracle scenario in which the counterfactual outcomes $Y_i^{(1)}$ and $Y_i^{(0)}$ for each subject can be observed simultaneously.

\begin{proof}{\bf(Proof of Lemma \ref{thm tab}.)}
First, when $\lambda = 0.5$ and  the counterfactual outcomes $Y_i^{(1)}$ and $Y_i^{(0)}$ for each subject can be observed simultaneously, it is clear from Theorem \ref{convergence rate} that we can derive
$$
E\left[\left|\varphi\left(T_n(\theta_n^*)\right)-\varphi\left(\eta_n\right)\right|\right]=O\left(\frac{\sigma}{\sqrt{n}}\right).
$$
With the standard approximation arguments, we have for any $\alpha \in \left[0,1\right]$, we have
$$
\lim _{n \rightarrow \infty}\left\{P\left(\left|T_n(\theta_n^*)\right| > z_{1-\alpha/2}\big|\mathcal H_1\right)-\left[\Phi\left(\frac{\omega_n'-z_{1-\alpha / 2}}{\sigma_0}\right)+e^{\frac{2 \omega_n' z_{1-\alpha / 2}}{\sigma_0^2}} \Phi\left(-\frac{\omega_n'+z_{1-\alpha / 2}}{\sigma_0}\right)\right]\right\}=0,
$$
where $\omega_n'=\mu+\sqrt{n}\mu/\sigma$ and $z_{\alpha}$ denotes the $\alpha$th quantile of a standard normal distribution. Next, when $\varphi$ is decreasing on $(0, \infty)$, by Lemma \ref{le9} (1), with the similar arguments as above, we have
$$
\lim _{n \rightarrow \infty}\left|\sup _{\theta_n \in \Theta} E\left[\varphi\left(T_n(\theta_n)\right)\right]-E\left[\varphi\left(\eta_n\right)\right]\right|=0,
$$
and for any $\alpha \in \left[0,1\right]$, we have
$$
\lim _{n \rightarrow \infty}\left\{\sup _{\theta_n \in \Theta} P\left(\left|T_n(\theta_n)\right| > z_{1-\alpha/2}\big|\mathcal H_1\right)-\left[\Phi\left(\frac{\omega_n'-z_{1-\alpha / 2}}{\sigma_0}\right)+e^{\frac{2 \omega_n' z_{1-\alpha / 2}}{\sigma_0^2}} \Phi\left(-\frac{\omega_n'+z_{1-\alpha / 2}}{\sigma_0}\right)\right]\right\}=0 .
$$
Therefore, it is clear that we can obtain
$$
\lim _{n\rightarrow \infty} \mathbb{P}\left(\left|T_n(\theta_n^*)\right|>z_{1-\alpha/2}\big|\mathcal H_1\right)=\lim _{n\rightarrow \infty}\sup_{\theta_n\in \Theta}\mathbb{P}\left(\left|T_n(\theta_n)\right|>z_{1-\alpha/2}\big|\mathcal H_1\right).
$$
Similarly, when $\varphi$ is increasing on $(0, \infty)$, the above conclusion still holds by Lemma \ref{le9} and Theorem \ref{convergence rate}. Then, we complete the proof of Lemma \ref{thm tab}.
\end{proof}


\end{document}